\newtheorem{lemma}{Lemma}
\newcommand{\draft}{{DRaFT}}
\newcommand{\textbase}{{\text{base}}}
\newcommand{\textlora}{{\text{LoRA}}}
\newcommand{\td}{{\text{data}}}
\newcommand{\aat}{\sqrt{\tilde{\alpha_t}}}
\newcommand{\method}{{Annealed Importance Guidance}}
\newcommand{\methodabbv}{{AIG}}
\newcommand{\mbftheta}{\mathbf{\theta}}
\newcommand{\mbtheta}{\mbftheta}
\newcommand{\mbx}{\mathbf{x}}
\newcommand{\mbm}{\boldsymbol{\mu}}
\newcommand{\mby}{\mathbf{y}}
\newcommand{\mbc}{\mathbf{c}}
\newcommand{\mbtft}{\mbftheta^{*}_{d}}
\newcommand{\Ht}{H_{\mbtft}}
\newcommand{\dd}{\text{d}}
\newcommand{\reg}{regularization}
\newcommand{\Done}{{\textcolor{blue}{$\mathcal{D}_1$}}}
\newcommand{\Dtwo}{{\textcolor{orange}{$\mathcal{D}_2$}}}
\newcommand{\Dthree}{{\textcolor{green}{$\mathcal{D}_3$}}}
\newcommand{\done}{\mathcal{D}_1}
\newcommand{\dtwo}{\mathcal{D}_2}
\newcommand{\numusers}{36}
\newcommand{\numvotes}{1500}
\crefname{section}{Sec.}{Secs.}
\Crefname{section}{Section}{Sections}
\Crefname{table}{Table}{Tables}
\crefname{table}{Tab.}{Tabs.}
\newcolumntype{Y}{>{\centering\arraybackslash}X}
\newcommand{ %
    \begin{minipage}{\linewidth}
        \centering
        \includegraphics[width=\linewidth]{images/qualitative-allmethods/image.png}
        \textbf{Prompt}: \input{images/qualitative-allmethods/prompt.txt}
    \end{minipage}
    \vspace*{3pt}
}[1]{ %
    \begin{minipage}{\linewidth}
        \centering
        \includegraphics[width=\linewidth]{images/qualitative-allmethods/image#1.png}
        \textbf{Prompt}: \input{images/qualitative-allmethods/prompt#1.txt}
    \end{minipage}
    \vspace*{3pt}
}
\begin{document}
\title{Elucidating Optimal Reward-Diversity Tradeoffs in Text-to-Image Diffusion Models}

\author{
    Rohit Jena\(^{1,2}\)\thanks{Work done during an internship at NVIDIA \\Corresponding Author: \texttt{rjena@seas.upenn.edu}} \quad Ali Taghibakhshi\(^2\) \quad Sahil Jain\(^2\) \quad Gerald Shen\(^2\) \quad Nima Tajbakhsh\(^2\) \quad Arash Vahdat\(^2\)  \\
    \begin{minipage}{0.1\linewidth}
        \quad
    \end{minipage}
    \begin{minipage}{0.3\linewidth}
        \centering
        \(^1\)University of Pennsylvania 
    \end{minipage}%
    \begin{minipage}{0.5\linewidth}
        \centering
        \(^2\)NVIDIA
    \end{minipage}
}
\maketitle

\begin{abstract}
Text-to-image (T2I) diffusion models have become prominent tools for generating high-fidelity images from text prompts.
However, when trained on unfiltered internet data, these models can produce unsafe, incorrect, or stylistically undesirable images that are not aligned with human preferences.
To address this, recent approaches have incorporated human preference datasets to fine-tune T2I models or to optimize reward functions that capture these preferences.
Although effective, these methods are vulnerable to reward hacking, where the model overfits to the reward function, leading to a loss of diversity in the generated images.
In this paper, we prove the inevitability of reward hacking and study natural regularization techniques like KL divergence and LoRA scaling, and their limitations for diffusion models.
We also introduce {\method} ({\methodabbv}), an inference-time regularization inspired by Annealed Importance Sampling, which retains the diversity of the base model while achieving Pareto-Optimal reward-diversity tradeoffs.
Our experiments demonstrate the benefits of {\methodabbv} for Stable Diffusion models, striking the optimal balance between reward optimization and image diversity.
Furthermore, a user study confirms that {\methodabbv} improves diversity \textit{and} quality of generated images across different model architectures and reward functions.
\end{abstract}

\begin{figure}[ht!]
    \centering
    \includegraphics[width=\linewidth]{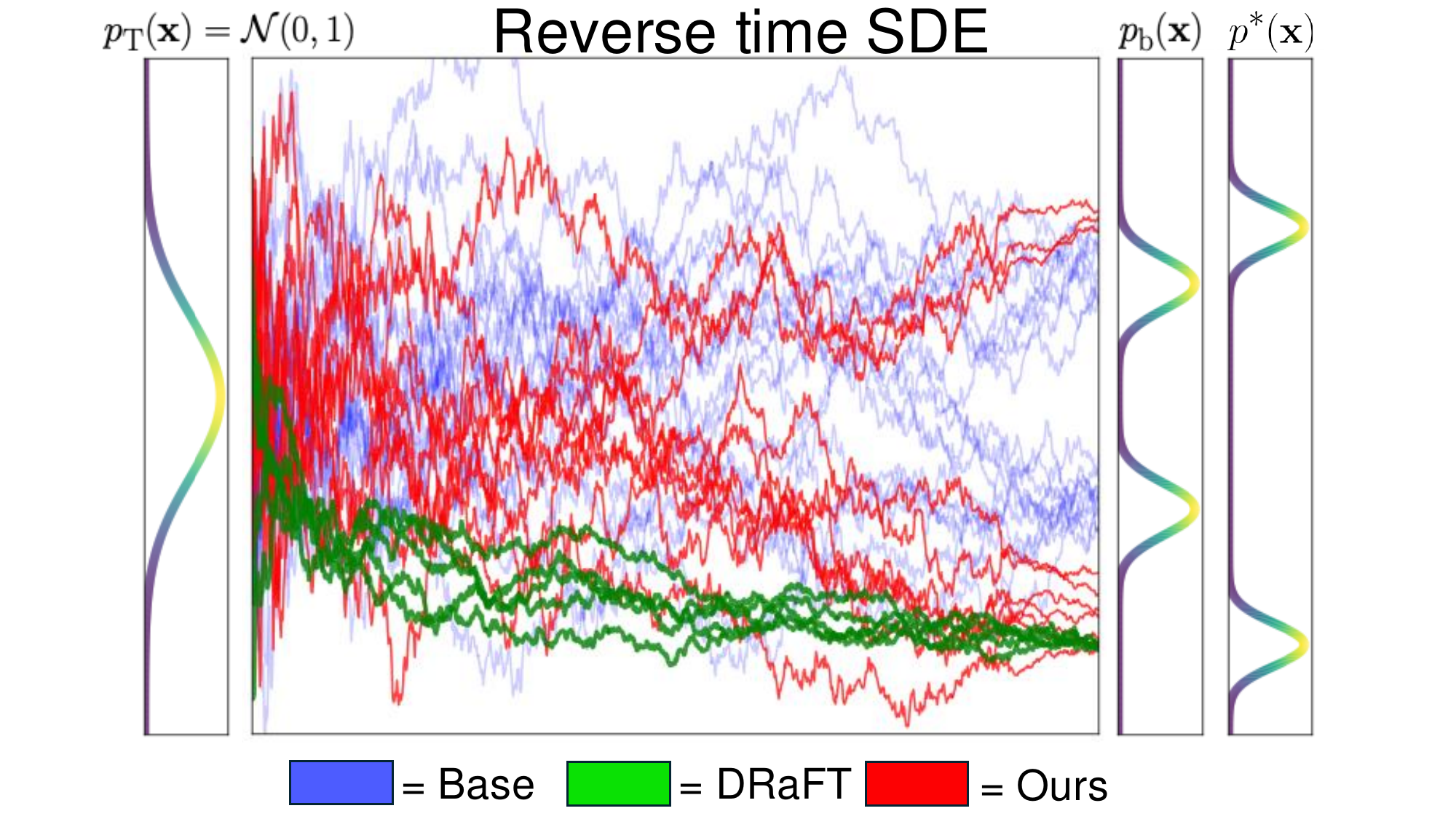}
    \caption{\textbf{Illustration of {\method}}. Blue trajectories generate samples from the base distribution $p_b(\mbx)$ but when finetuned with reward models, lead to \textit{reward hacking} and mode collapse, evidenced by the green trajectories collapsing to only one mode of desired data distribution $p^*(\mbx)$. 
    Our method anneals between the dynamics of the base and {\draft} score functions, showing base-model-like exploration followed by DRaFT-like steady convergence to modes of $p^*(\mbx)$.
    The level of annealing is controllable by the end-user at inference (see~\cref{sec:aig}).
    }
    \label{fig:teaser}
    \vspace*{-10pt}
\end{figure}

\begin{figure*}[ht!]
    \centering
    \begin{minipage}{0.32\linewidth}
        \includegraphics*[width=\linewidth]{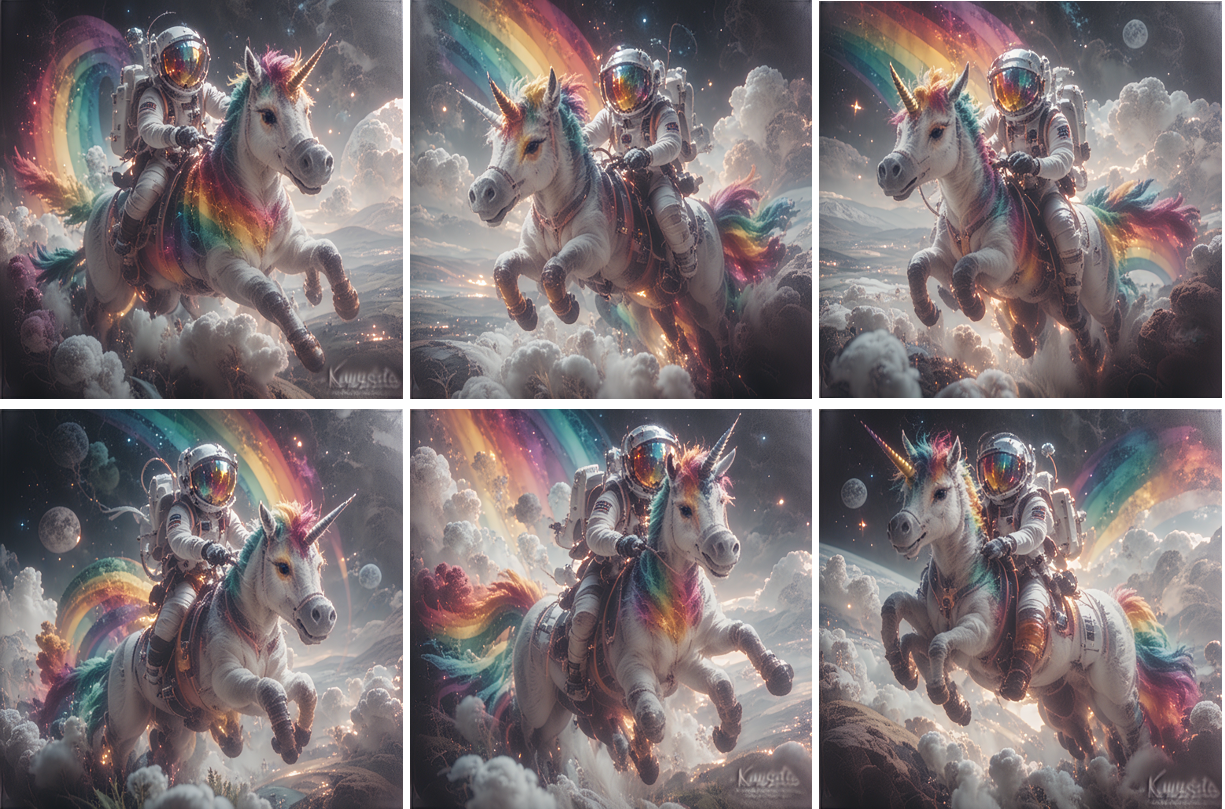}
        \subcaption{\draft}
        \label{fig:draftheader}
    \end{minipage}
    \begin{minipage}{0.32\linewidth}
        \includegraphics*[width=\linewidth]{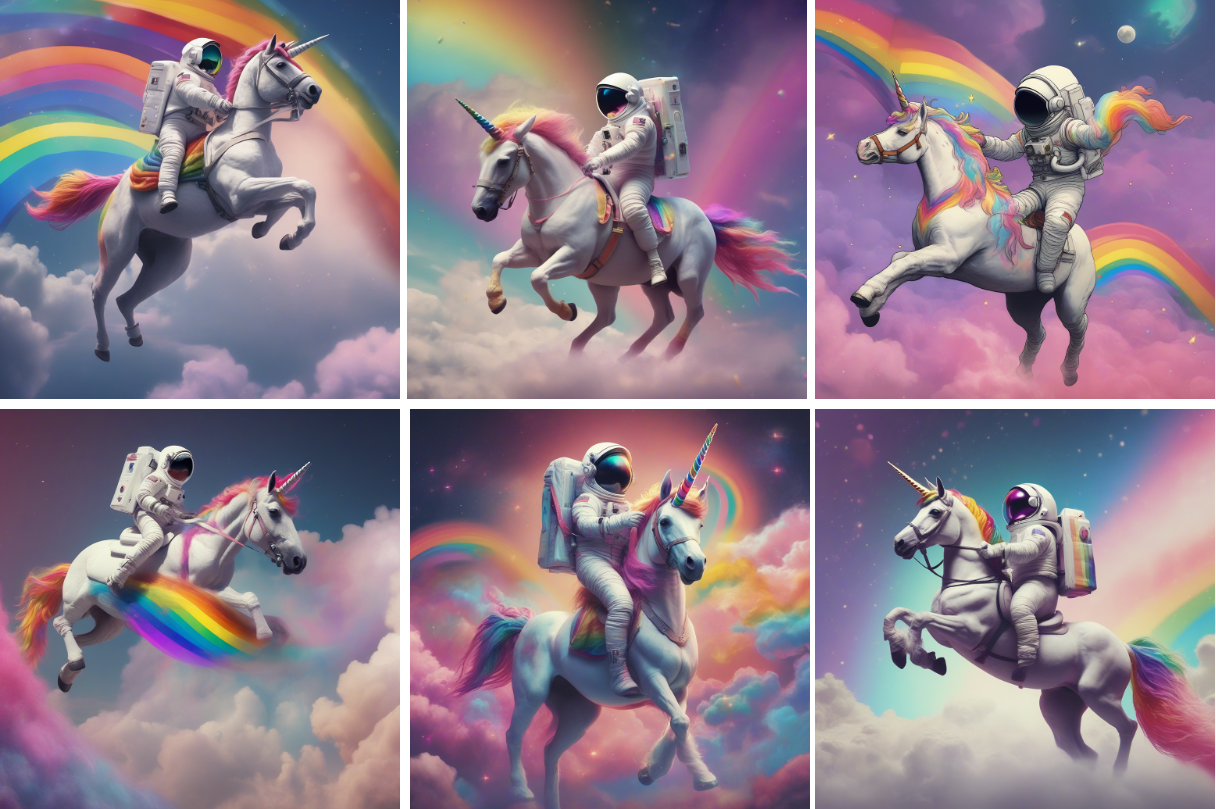}
        \subcaption{SDXL Base}
        \label{fig:basesdxlheader}
    \end{minipage}
    \begin{minipage}{0.32\linewidth}
        \includegraphics*[width=\linewidth]{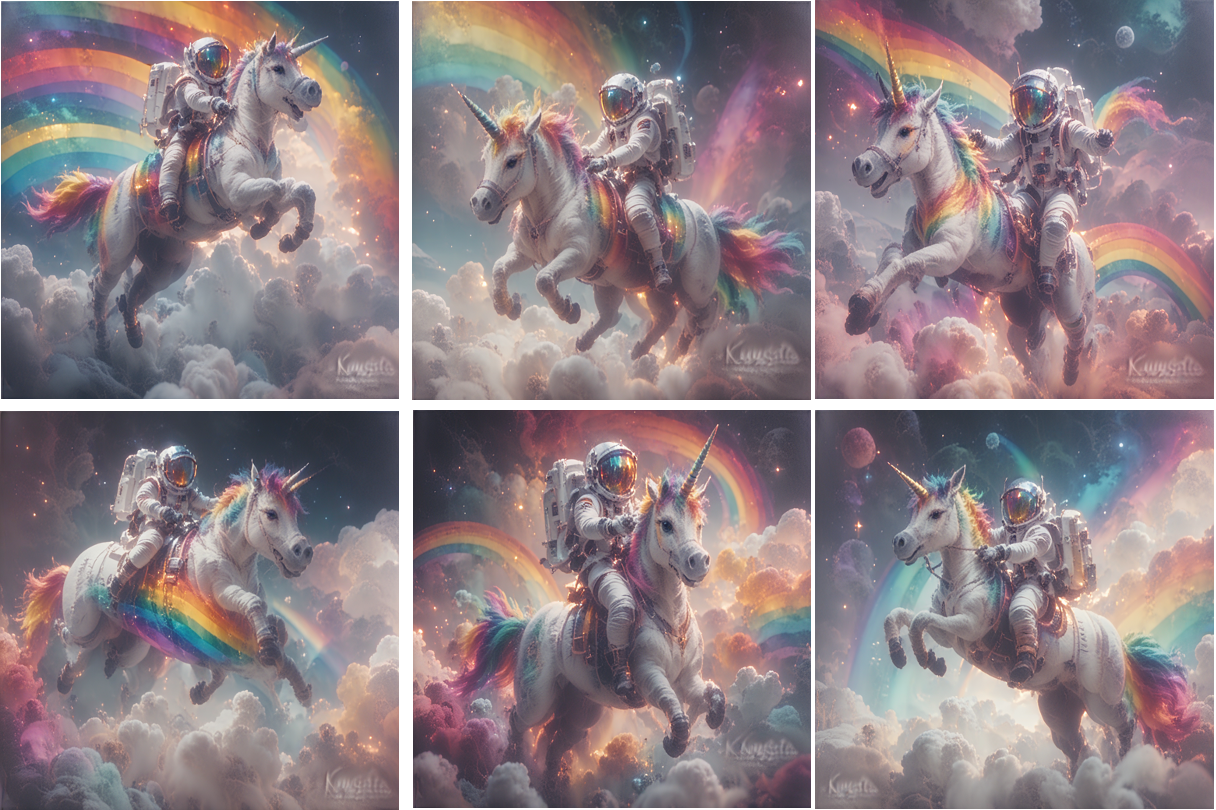}
        \subcaption{{\method} (Ours)}
        \label{fig:ourheader}
    \end{minipage}
    \caption{\textbf{Qualitative Comparison}: \cref{fig:basesdxlheader} shows six images generated from SDXL Base with the prompt ``\textit{An astronaut riding a rainbow unicorn, cinematic, dramatic}'' and \cref{fig:draftheader} shows images with the same seeds from the {\draft} model trained on the Pickscore~\cite{pickscore} reward.
    \cref{fig:draftheader} achieves high Pickscore rewards, but completely loses diversity compared to the base model in terms of pose and background variation.
    Our method (\cref{fig:ourheader}) inherits diversity from the base model while generating aesthetically pleasing images.}
    \vspace*{-10pt}
\end{figure*}

\vspace*{-8pt}
\vspace*{-10pt}
\section{Introduction}
\label{sec:intro}

Text-to-Image (T2I) Diffusion Models (DMs) ~\cite{sohldiffusion,ho2020denoising,sd,podell2023sdxl} have emerged as a powerful class of generative models for high-fidelity image generation from text prompts, trained on large-scale image-caption data.
However, training on unfiltered internet data captures the entire uncurated internet-scale data distribution that is not aligned to human preferences.  
These models may generate images that are unsafe~\cite{hong2024margin}, misaligned with respect to the prompt~\cite{huang2023t2i}, or that have undesirable stylistic effects~\cite{pickscore,hps,imagereward}. 
Following the effectiveness of learning from human preferences ~\cite{christiano2017deep} for large language models (LLMs)~\cite{dpo,yang2024using,ouyang2022training}, human preference datasets~\cite{pickscore,hps,laion,imagereward,hong2024margin} have been collected that reflect different style and content preferences by humans.
These datasets are used to either directly finetune the T2I model ~\cite{black2023training,diffusiondpo,hong2024margin} or maximize a reward function trained to capture human preferences~\cite{imagereward,draft,doodl}.
Unlike LLMs, most diffusion models can be finetuned by directly backpropagating through the (differentiable) reward function instead of relying on policy gradient algorithms, leading to high sample efficiency~\cite{draft}.
However, these methods are prone to \textit{reward hacking} where the finetuned model loses all its diversity and produce a unimodal set of images. %

In this paper, we first show the inevitability of reward hacking without regularization.
This is followed by analyzing the common regularization approaches for finetuning DMs.
However, these regularizations suffer from suboptimal reward-diversity tradeoffs due to two reasons.
The first reason is \textit{indiscriminate regularization}, i.e. the regularization is applied on \textit{all} timesteps of the sampling chain which sacrifices reward optimization for diversity.
These regularizations make sense for LLMs since the entire sequence of output tokens constitute the response to a prompt.
However, for diffusion models, different timesteps in the sampling chain have different dynamics. %
We show that earlier timesteps in the diffusion contribute to mode recovery, and later timesteps contribute to adding finegrained details -- necessitating regularization more favorably in the earlier timesteps.
The second reason is the `\textit{reference mismatch}' problem~\cite{hong2024margin}, where the enforced alignment between the finetuned and reference model fails to adequately capture the features of the preference data, especially when the reference model and preference data have distinct features.
Inspired by Annealed Importance Sampling~\cite{neal2001annealed}, we propose {\method} (\methodabbv), an inference-time regularization that alleviates both the indiscriminate {\reg} and reference mismatch problems. %
This leads to \textit{Pareto-Optimal} reward-diversity tradeoffs without training multiple models to find optimal hyperparameters.
Finally, we also notice that existing diversity~\cite{fid,precisionrecall} metrics compute differences between the generated image and the data distributions.
However, these metrics will also penalize any \textit{reference mismatch} between the base and finetuned models.
We propose a Spectral Distance metric to account for reference mismatch; this ignores any distributional shift between the base and finetuned models, and only penalizes differences in the `spread' of the distributions.

We apply {\methodabbv} to both StableDiffusion v1.4~\cite{sd} and StableDiffusion XL (Base)~\cite{podell2023sdxl} DMs, finetuned on PickScore and HPSv2 reward models, leading to four model configurations.
For each configuration, we perform an exhaustive ablation study by evaluating rewards on images generated from the PartiPrompts and HPSv2 evaluation prompts across a range of KL regularization, LoRA scaling and {\methodabbv} parameters, representing more than 149,600 generated images per configuration.
A comprehensive qualitative user study shows that our method produces highly diverse subsets of images while simultaneously increasing the quality and alignment of the generated images compared to the DRaFT model, cementing {\methodabbv} as an effective and inexpensive finetuning method for diffusion models.

\vspace*{-10pt}
\section{Related Work}
\label{sec:relatedwork}

\textbf{Text-to-image diffusion models.}
Diffusion models ~\cite{sohldiffusion,song2020score,song2019generative,ho2020denoising,kingma2021variational} are generative models that use a forward Markov process which diffuses the data distribution into a noise distribution, to convert data into noise.
This is followed by learning a corresponding reverse process which converts the noise back into data.
This line of work emerged from score matching estimation of the data distribution using samples only~\cite{hyvarinen2005estimation,song2020sliced} using neural networks, which can be used to run Langevin dynamics to sample data~\cite{bussi2007accurate}.
However, the manifold hypothesis leads to low data density regions in the ambient space, making Langevin mixing slow.
Denoising score estimation with score matching~\cite{vincent2011connection,song2019generative} was proposed to improve data density and faster mixing dynamics, setting the tone for effective score-based generative models, followed by improved training recipes~\cite{song2020improved}.
These models have been successful in a variety of domains, including high-fidelity image and video generation~\cite{ramesh2021zero,khachatryan2023text2video,ruiz2023dreambooth,saharia2022image,saharia2022palette,ho2022imagen,sd,podell2023sdxl,blattmann2023align,achiam2023gpt,videoworldsimulators2024}, audio synthesis~\cite{liu2023audioldm,luo2024diff,wang2023audit}, robotics~\cite{kapelyukh2023dall,carvalho2023motion,black2023zero}, 3D~\cite{sanghi2022clip,karnewar2023holodiffusion,poole2022dreamfusion,ntavelis2023autodecoding} and medical imaging~\cite{pinaya2022brain,wu2024medsegdiff,yoon2023sadm,ozbey2023unsupervised,ali2022spot}.
However, training on noisy image-caption pairs from the internet lead to models that show lack of alignment in terms of attribute binding~\cite{huang2023t2i,liu2022compositional,feng2022training}, counting, spatial relationships and occlusions~\cite{xu2024amodal}, stylistic choices~\cite{hps,pickscore}, and text rendering~\cite{liu2022character,TextDiffuser,chen2023textdiffuser}. 

\textbf{Text-to-image model alignment.}
Learning from human preferences is a popular strategy to align text-to-image models from curated human preference datasets, following its efficacy in alignment of large language models (LLMs).
Alignment is performed either by learning a reward model from the preference dataset followed by a finetuning the model on this reward model using RL~\cite{christiano2017deep,ouyang2022training,dubois2024alpacafarm}, or finetuning the model on the preference data directly~\cite{dpo,yang2024using}.
Popular multimodal reward models for human preferences include PickScore~\cite{pickscore}, Human Preference Score (HPS)~\cite{hps}, ImageReward~\cite{imagereward} and LAION-Aesthetic~\cite{laion} models which finetune a CLIP~\cite{clip} model on different sets of curated preference data.
Taking inspiration from policy gradient algorithms and RLHF for LLMs, several methods utilize RL-based finetuning on the DDPM reverse sampling chain ~\cite{fan2023optimizing,black2023training,fan2023optimizing,hao2024optimizing}.
Analogous to DPO~\cite{dpo}, Diffusion-DPO~\cite{diffusiondpo} express the likelihood of human preference of an offline in terms of the optimal and reference policy, allowing direct optimization of the policy without a reward model.
MaPO~\cite{hong2024margin} uses a margin-aware preference optimization without a reference model.

However, most classes of reward functions are differentiable, allowing backpropagation through the reward function directly.
This makes the sample efficiency of these algorithms much better than using RL.  
DOODL~\cite{doodl} propose an inference-time optimization of the noise latent directly to maximize a classifier effectively acting as a reward model.
Similar to DOODL, ReNO~\cite{reno} propose noise optimization for single-step diffusion models like SD(XL)-Turbo, and is shown to be much faster since it avoiding the reverse SDE/ODE altogether.
However, inference-time optimization is expensive for generation from large prompt datasets.
ReFL~\cite{imagereward} and AlignProp~\cite{prabhudesai2023aligning} compute gradients through random timesteps in the reverse sampling chain to optimize T2I models.
\draft~\cite{draft} also propose a truncated backpropagation through the reverse sampling chain, but use LoRA and activation checkpointing to avoid memory issues with large T2I models like StableDiffusion~\cite{sd}.

\textbf{Reward hacking and diversity of generative models.}
Most finetuning algorithms suffer from a phenomenon called \textit{reward hacking} or \textit{overoptimization}.
It refers to the model producing images that achieve very high rewards from the reward model, but produce low-diversity images with overly saturated textures. %
In the non-parametric case, we show a simple proof 
that for any given reward function, the optimal likelihood is a Delta distribution, making reward hacking inevitable without regularization.

\section{Method}
\label{sec:method}
\vspace*{-5pt}

\subsection{Problem Setup}
We briefly review the DM finetuning approach with differentiable reward functions.
\draft~\cite{draft} proposes a simple yet effective approach for fine-tuning diffusion models for differentiable reward functions.
Although RL-based methods that have been shown to be successful for finetuning LLMs, they are shown to be sample inefficient and harder to tune for diffusion models.
For a noise latent $\mathbf{x_T} \in \mathbb{R}^n$, (prompt) condition $\mathbf{c} \in \mathbb{R}^m$ and timestep $t \in \mathbb{R}^+$, {\draft} considers a differentiable reward function $r: \mathbb{R}^n\times\mathbb{R}^m \rightarrow \mathbb{R}$ that is used to optimize a diffusion model $f_\mbftheta: \mathbb{R}^n\times\mathbb{R}^m\times\mathbb{R}^+ \rightarrow \mathbb{R}^n$ parameterized by $\mbftheta$.
The diffusion model is finetuned to maximize the reward function on its generated samples: 
\begin{equation}
    J(\mbftheta) = \mathbb{E}_{\mathbf{c} \sim p(c), \mathbf{x_T} \sim \mathcal{N}(0, 1)} \left[r(\mathbf{x_0}(\mbftheta, \mathbf{c}, \mathbf{x_T}), \mathbf{c})\right]
    \label{eq:draftreward}
\end{equation}
where $\mathbf{x_0}$ is the generated image which is the solution of the reverse-time probability flow ODE~\cite{song2020score} (or an equivalent reverse-time SDE):
\begin{equation}
    d\mathbf{x} = \left[f(t)\mbx - \frac{g(t)^2}{2} \nabla_\mbx \log(p_t(\mbx|\mbc))\right] dt , \mbx_T \sim \mathcal{N}(0, 1)
    \label{eq:revode}
\end{equation}
where $\nabla_\mbx \log(p_t(\mbx|\mbc)) \approx f_\mbftheta(\mbx, \mbc, t)$ is the estimated score function learned using a diffusion model.
\cref{eq:draftreward} can be interpreted as finding the distribution $p_\mbftheta(\mbx_0|\mbc)$ that maximizes the expected reward.
We denote $p(\mbx_0 | \mbc)$ to refer to the probability distribution induced by ~\cref{eq:revode}.

\begin{figure*}
    \centering
    \begin{minipage}{0.98\textwidth}
        \centering
        \includegraphics[width=0.22\linewidth]{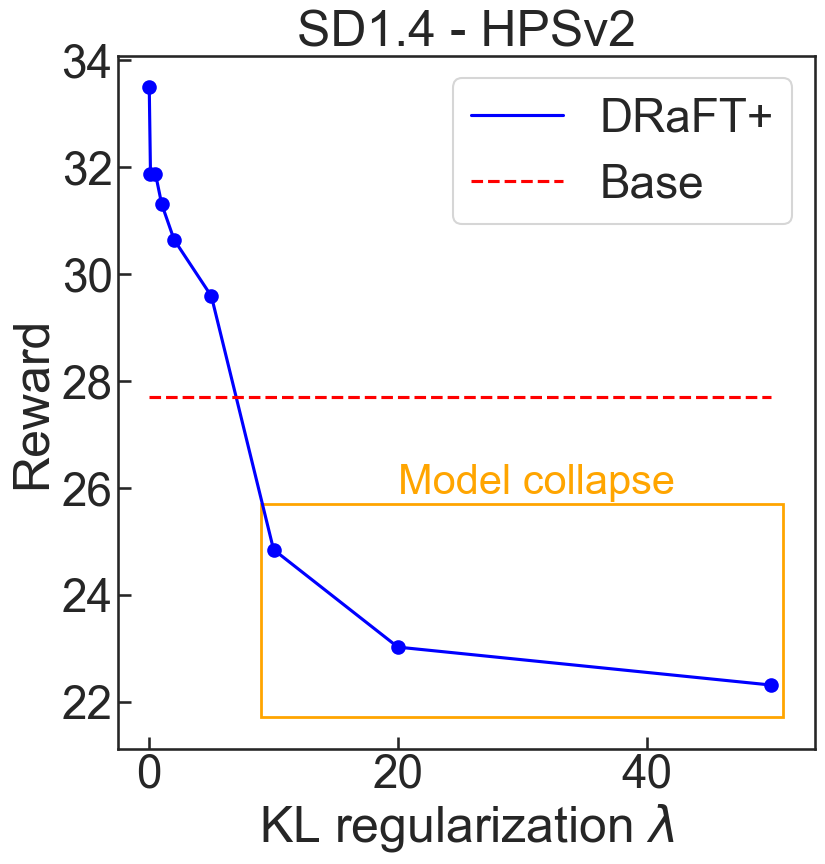}
        \includegraphics[width=0.22\linewidth]{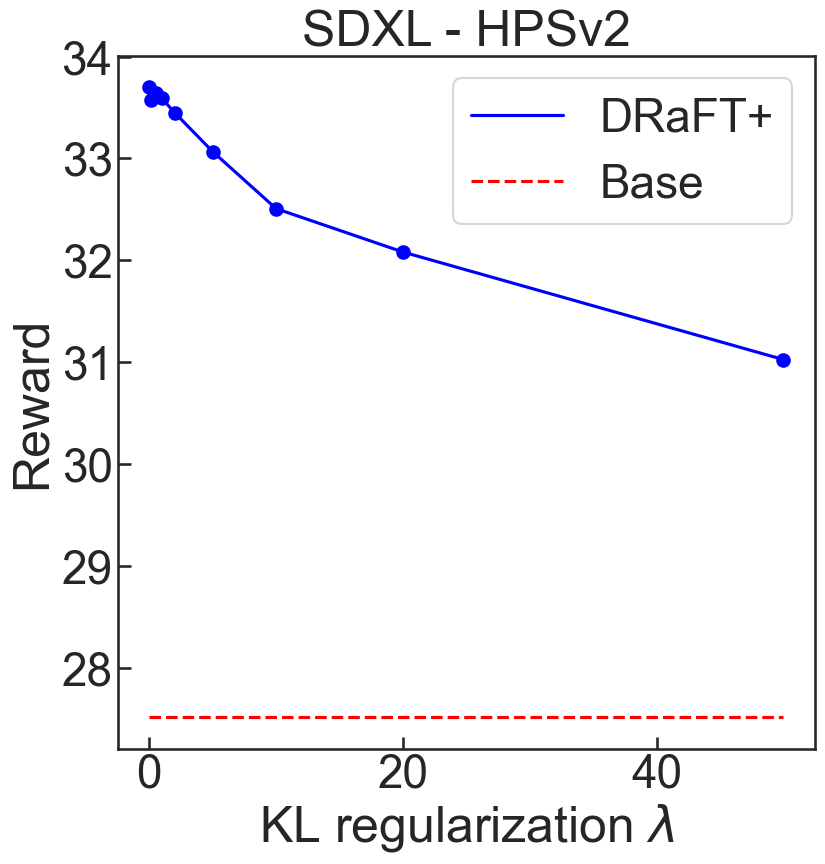}
        \includegraphics[width=0.22\linewidth]{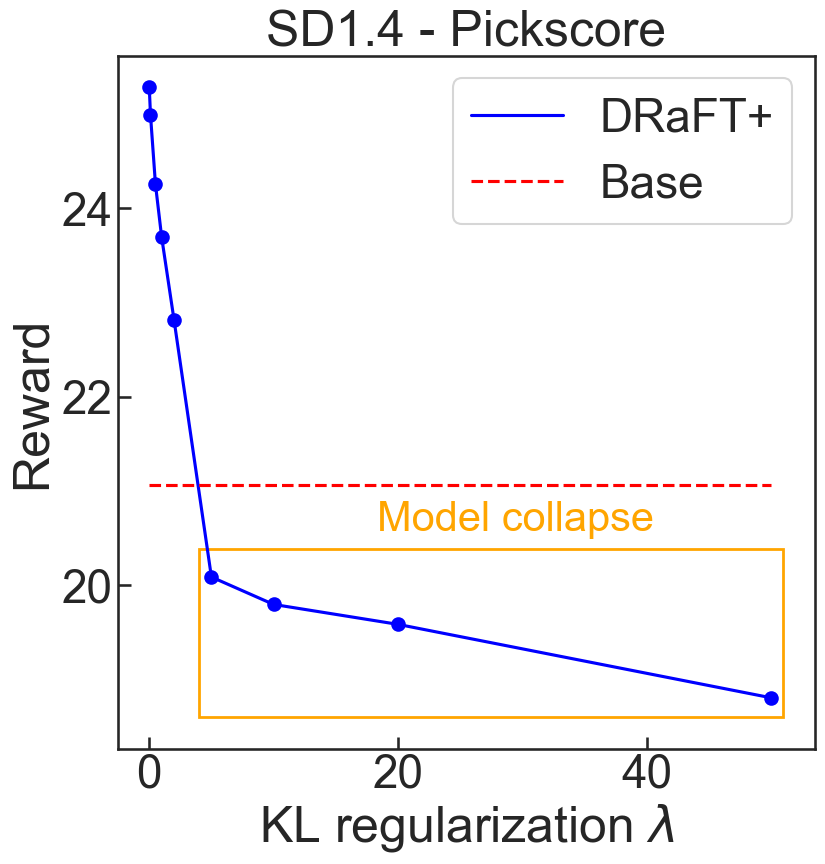}
        \includegraphics[width=0.22\linewidth]{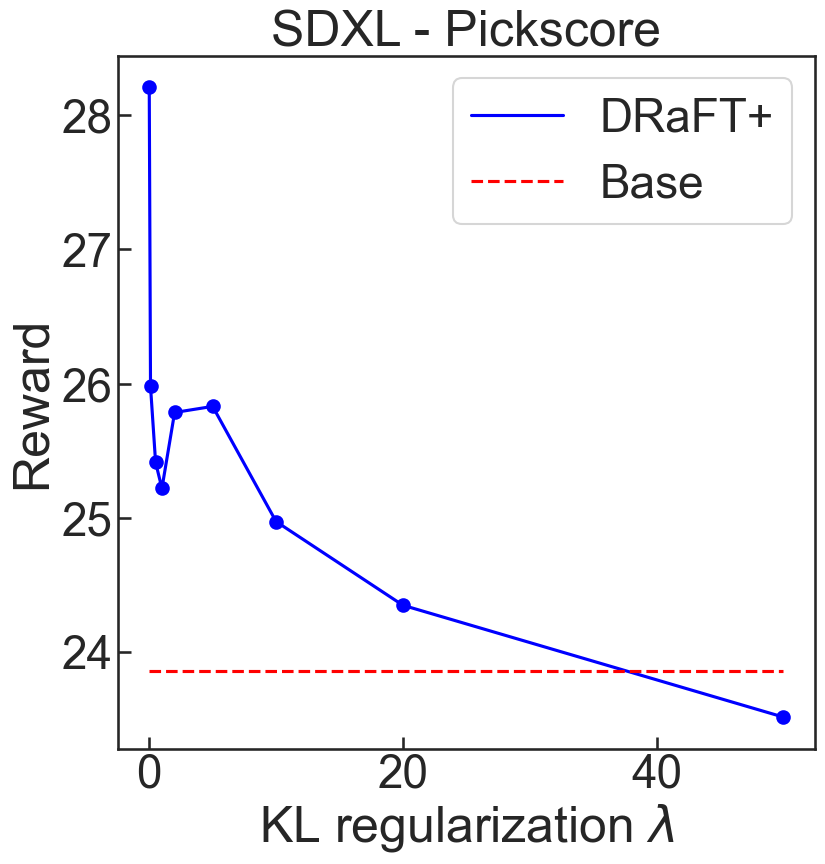}
        \subcaption{\textbf{Tradeoff between reward and KL divergence}. For SD1.4, the model breaks down at higher KL values, indicating that the same $\lambda$ hyperparameter cannot be used universally across architectures or reward models.
        The SD1.4 model suffers from breakdown even with LoRA finetuning.}
        \label{fig:klvsreward}
    \end{minipage}
    \begin{minipage}{0.5\linewidth}
        \centering
        \includegraphics[width=\textwidth]{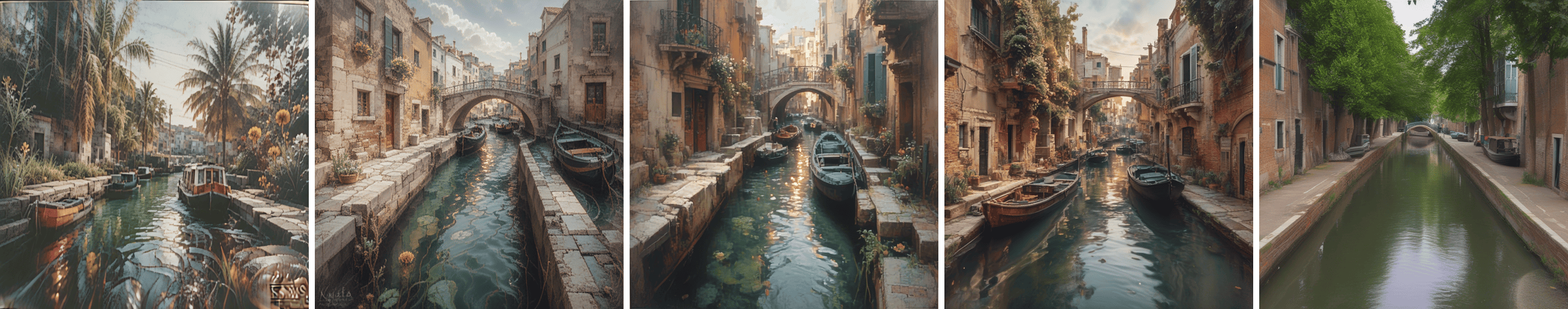}    
        \includegraphics[width=\textwidth]{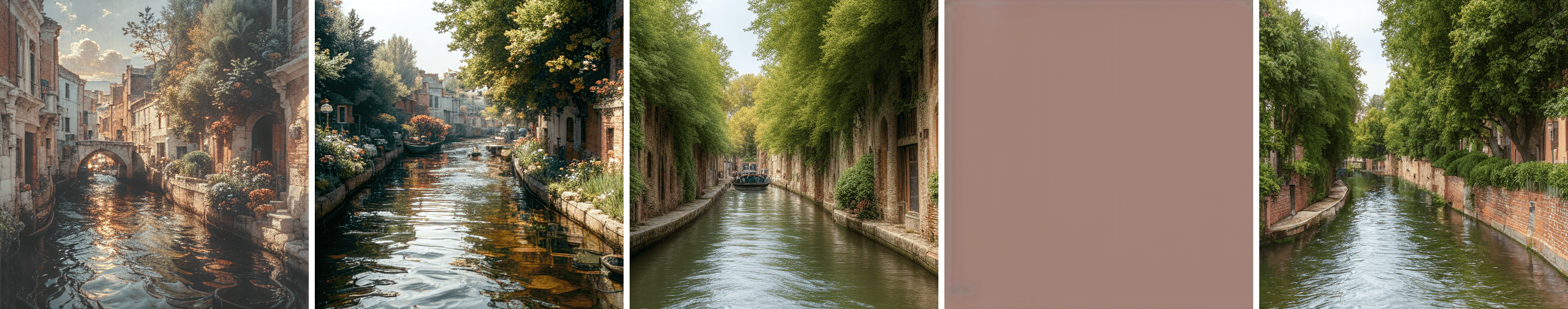}    
        \begin{tabularx}{\textwidth}{YYYYY}
           $\lambda = 0$ &  $\lambda = 0.1$ &  $\lambda = 2$ & $\lambda = 10$ & Base \\
        \end{tabularx}
        \subcaption{Qualitative comparison of SDXL (\textit{top row}) and SD1.4 (\textit{bottom row}) trained on the Pickscore model. SDXL exhibits high quality images at $\lambda=10$ while SD1.4 breaks down at those values, and performs best at $\lambda = 0.1$.}
        \label{fig:klvsreward-pickscore}
    \end{minipage}
    \begin{minipage}{0.495\linewidth}
        \centering
        \includegraphics[width=\textwidth]{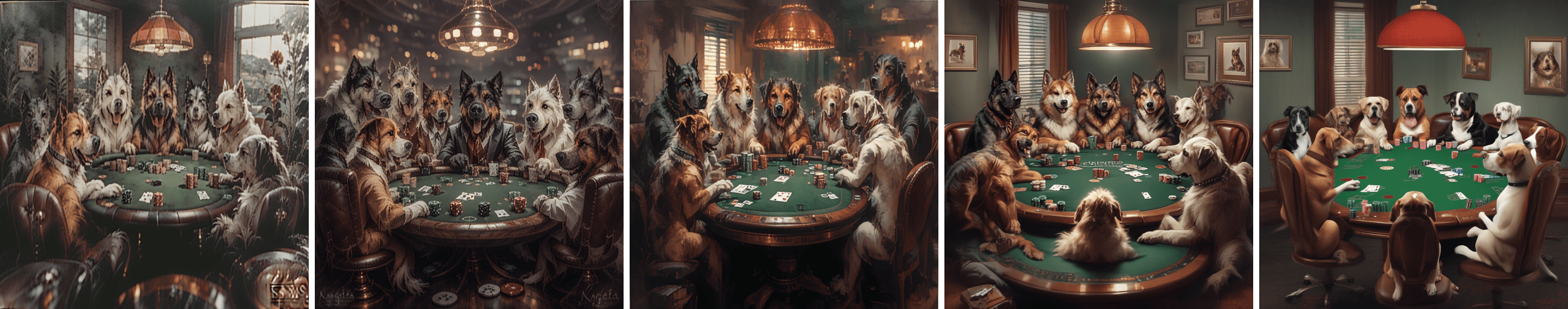}    
        \includegraphics[width=\textwidth]{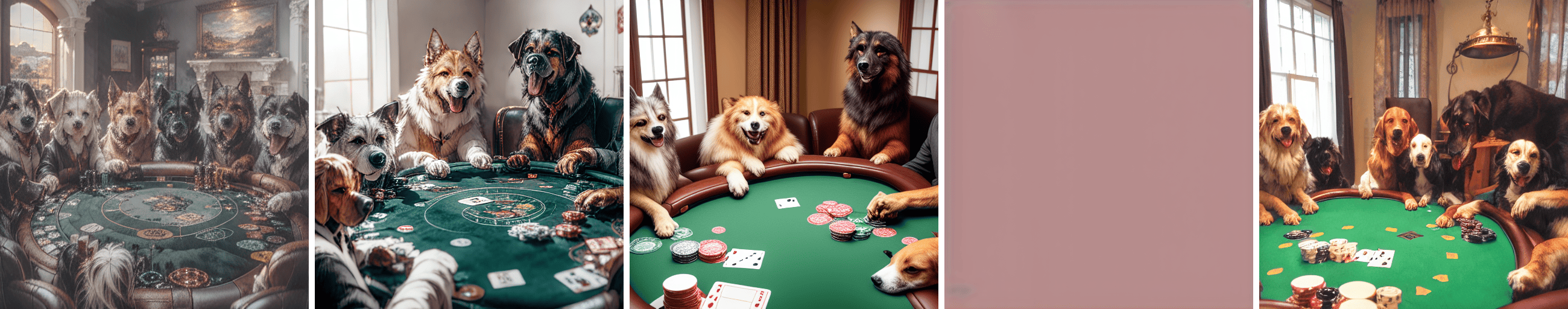}    
        \begin{tabularx}{\textwidth}{YYYYY}
           $\lambda = 0$ &  $\lambda = 0.1$ &  $\lambda = 2$ & $\lambda = 10$ & Base \\
        \end{tabularx}
        \subcaption{Qualitative comparison of SDXL (\textit{top row}) and SD1.4 (\textit{bottom row}) trained on the HPSv2 model. SDXL exhibits high quality images at $\lambda=10$ while SD1.4 breaks down at those values, and performs best at $\lambda = 0.1$.}
        \label{fig:klvsreward-hps}
    \end{minipage}
    \caption{\textbf{Effect of KL divergence on {\draft}} The KL hyperparameter $\lambda$ affects different architectures' training dynamics differently, leading to model collapse for the smaller SDv1.4 model at higher values.
    Moreover, each point on the plot requires a separate run, consuming 32 H100 GPU hours each for hyperparameter tuning.}
    \label{fig:klanalysis}
\end{figure*}

\textbf{Reward Hacking}
RL-based and direct finetuning methods including {\draft} are known to be vulnerable to a phenomenon called \textit{reward hacking}~\cite{dpo,draft,skalse2022defining,fan2024reinforcement,diffusiondpo}, which refers to the model only producing a very small subset of samples.
\cite{draft} posits ``Reward hacking points to deficiencies in existing rewards; efficiently finding gaps between the desired behavior and the behavior implicitly captured by the reward is a crucial step in fixing the reward.''
Consequently, one of the interesting experiments they tried is performing aggressive reward dropout.
We prove that reward hacking is not due to deficiencies in existing rewards, but is an unavoidable artifact of the \textit{Expected Reward Maximization} problem formulation.
We first show the inevitability of reward hacking in an unrestricted setting.
\begin{lemma}
    \textbf{Inevitability of Reward Hacking} In a non-parameteric setting under the expected reward maximization optimization, the optimal probability distribution $p(\mbx_0|\mbc)$ collapses to $p(\mbx_0|\mbc) = \delta(\mbx_0 - \mbx_0^{*rc}), \mbx_0^{*rc} = \arg\max_{\mbx_0} r(\mbx_0, \mbc)$, where $\delta$ is the Dirac-delta function.
\end{lemma}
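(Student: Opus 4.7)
The plan is to set up the optimization as a constrained variational problem over probability densities and then show, by a simple pointwise upper bound argument, that any maximizing sequence must concentrate at the reward maximizer. Concretely, I would write the objective in integral form as $J(p) = \int p(\mbx_0|\mbc)\, r(\mbx_0,\mbc)\, d\mbx_0$, subject to $p(\mbx_0|\mbc)\ge 0$ and $\int p(\mbx_0|\mbc)\, d\mbx_0 = 1$.

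Next, let $r^*(\mbc) := r(\mbx_0^{*rc},\mbc) = \sup_{\mbx_0} r(\mbx_0,\mbc)$. The key inequality is that for any feasible $p$,
\begin{equation*}
    J(p) \;=\; \int p(\mbx_0|\mbc)\, r(\mbx_0,\mbc)\, d\mbx_0 \;\le\; r^*(\mbc)\int p(\mbx_0|\mbc)\, d\mbx_0 \;=\; r^*(\mbc),
\end{equation*}
with equality if and only if $p$ places all its mass on the set $\{\mbx_0 : r(\mbx_0,\mbc) = r^*(\mbc)\}$. Then I would exhibit the Dirac delta $p(\mbx_0|\mbc) = \delta(\mbx_0 - \mbx_0^{*rc})$ as achieving this bound, so it is an optimizer in the sense of weak limits. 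One can also argue this via Lagrange multipliers: the stationarity condition $r(\mbx_0,\mbc) = \lambda$ can hold only on the level set of $r$, forcing support concentration.

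For completeness I would add two remarks. First, strictly speaking the Dirac delta is not an absolutely continuous density, so the statement should be read as: the supremum of $J$ over probability measures is attained only by measures supported on the $\arg\max$ set, and any maximizing sequence of densities converges weakly to such a degenerate measure. Second, uniqueness of the collapse point requires a unique maximizer of $r(\cdot,\mbc)$; if the $\arg\max$ set is larger, any probability measure supported on it (not only a single Dirac) is optimal, and the statement should be interpreted accordingly.

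The main obstacle is largely cosmetic rather than mathematical: the result is essentially the statement that a linear functional over the simplex is maximized at an extreme point. The only subtlety worth flagging carefully is the measure-theoretic one above, namely that a Dirac is a limit rather than an element of the space of densities, so the ``collapse'' should be phrased as weak convergence or as optimality within the space of probability measures. Everything else reduces to the one-line pointwise bound.
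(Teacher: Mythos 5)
Your proof is correct, but it takes a genuinely different route from the paper. The paper sets up the constrained maximization with a Lagrange multiplier $\beta$, writes the integrand as an Euler--Lagrange Lagrangian, and derives the stationarity condition $r(\mbx_0, \mbc) - \beta = 0$; since this cannot hold pointwise for a general reward, the paper then patches the argument with a sign analysis (if $r < \beta$ then $p = 0$, if $r > \beta$ then $p$ grows without bound), concluding $\beta = \sup r$ and hence a Dirac delta at the argmax. You instead use the direct pointwise bound $J(p) = \int p\, r \le r^*(\mbc) \int p = r^*(\mbc)$, with equality iff $p$ is supported on the level set $\{\mbx_0 : r(\mbx_0,\mbc) = r^*(\mbc)\}$ --- i.e., a linear functional over the simplex is maximized at an extreme point. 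Your approach buys rigor and brevity: it avoids the awkwardness of the Euler--Lagrange formalism applied to a functional with no $\dot p$ dependence and the ad hoc repair of the stationarity condition, and it makes explicit the measure-theoretic caveat (the Dirac is attained only in the space of probability measures, or as a weak limit of densities) that the paper glosses over. The paper's variational route is more familiar to an ML audience and generalizes mechanically when additional regularizers (e.g., a KL term) are added to the objective, which is where the paper is headed next. Both arguments handle the degenerate case of a non-unique maximizer identically, via arbitrary mixtures of Diracs on the argmax set.
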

\begin{proof}
Proof is in \cref{sec:rewardproof}. The proof also shows that reward dropout does not work because it is simply reward-maximization under a new reward function.
\end{proof}

Even with low-dimensional parameteric updates like LoRA finetuning that restricts the set of admissible distributions of $p_\mbftheta(\mbx_0|\mbc)$, the generated images exhibit severe mode collapse (see \cref{sec:app-qualitative}).
We discuss existing regularization schemes to preserve the diversity of the model.

\subsection{Existing Regularizations}
We discuss existing proposed attempts to prevent reward hacking for diffusion models, identify a unifying interpretation for these regularizations, and propose our method that tackle limitations of existing regularizations.
\subsubsection{KL divergence}
A straightforward strategy to mitigate the reward hacking behavior is to introduce a KL divergence term between the base and finetuned distribution, denoted as $p_{\textbase}$ and $p_\mbftheta$ respectively.
This new objective is written as
\begin{align}
    \resizebox{0.45\textwidth}{!}{%
    $J(\mbftheta) = \mathbb{E}_{\mbc, \mbx_0\sim p_\mbftheta} \left[-r(\mbx_0, \mbc) + \lambda \text{KL}\left(p_\theta(\mbx_0|\mbc) || p_{\textbase}(\mbx_0|\mbc)\right)\right]$
    }
    \label{eq:map}
\end{align}
For diffusion models, the second term does not have a closed form.
However, the second term is upper bounded by an ELBO term of the form $\sum_{t=1}^{T}\text{KL}(p_\theta(\mbx_{t-1}|\mbx_t, \mbc) || p_\textbase(\mbx_{t-1}|\mbx_t, \mbc))$ (see Appendix in ~\cite{fan2024reinforcement} for proof).
Substituting the term,
\begin{equation}
    \resizebox{0.45\textwidth}{!} {%
    $J(\mbftheta) = \mathbb{E}_{\mbc, \mbx_0\sim p_\mbftheta} \left[-r(\mbx_0, \mbc) + \sum_{t=1}^{T} \lambda \| \epsilon_\theta(\mbx_t, \mbc) - \epsilon_{\textbase}(\mbx_t, \mbc) \|_2^2\right]$
    }
    \label{eq:opt}
\end{equation}

Similar to \draft-K, we only calculate gradients through the last K steps of this loss function.
In this paper, we extend some of our previous qualitative analysis~\cite{blog} using KL divergence for \draft-based reward finetuning.
Although KL regularization is a straightforward addition to prevent reward hacking, this regularization suffers from a few practical considerations.
First, the hyperparameter $\lambda$ has to be chosen before finetuning, leading to substantial compute cost to find the optimal tradeoff between reward and diversity.
Second, the interpretation of $\lambda$ depends on the parameterization $\mbftheta$, i.e. the same $\lambda$ gives widely different solutions for different architectures. 
This is evident in \cref{fig:klanalysis} where SDv1.4 and SDXL admit `optimal' qualitative performance for very different ranges of $\lambda$ (SDv1.4 breaks down at higher values of $\lambda$ where SDXL works well).
Third, there is a problem of indiscriminate regularization (see \cref{sec:referencemismatch}) and direct regularization with respect to the base model does not account for `reference mismatch', leading to a suboptimal reward-diversity tradeoff.

\subsubsection{LoRA scaling}
{\draft} also proposes LoRA scaling to `interpolate' between the base and finetuned model.
Specifically, for a finetuned model with weights $\mbtheta^* = \mbtheta_\textbase + \mbtheta_\textlora$, LoRA scaling uses the weights $\mbtheta^*_{\text{new}} = \mbtheta_\textbase + \alpha'\mbtheta_\textlora, \alpha' \in (0, 1)$ to produce samples.
Although this trick was proposed only empirically, we can theoretically interpret LoRA scaling as an implicit weight regularization over the network parameters w.r.t. the base model, with mild assumptions.

\begin{lemma}
Consider a function $f(\mbftheta)$ with minimizer $\mbtft = \mbtheta_\textbase + \mbtheta_\textlora$, and $f$ is locally convex around $\mbtft$, 
Then, $\mbtheta^*_{\text{new}} = \mbtheta_\textbase + \alpha'\mbtheta_\textlora$ is the minimizer of $f$ with an additional Tikhonov (L$_2$) regularization on the parameter $\mbtheta$.
\end{lemma}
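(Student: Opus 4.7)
The plan is to exploit local convexity of $f$ around $\mbtft$ to reduce the statement to a short linear-algebra identity. Because $\mbtft$ is a local minimizer and $f$ is locally convex at that point, first-order optimality gives $\nabla f(\mbtft) = 0$ and the Hessian $H := \nabla^2 f(\mbtft)$ can be taken positive definite (I need the strict version for invertibility below). A second-order Taylor expansion then supplies the local quadratic model $\nabla f(\mbtheta) = H(\mbtheta - \mbtft) + o(\|\mbtheta - \mbtft\|)$, which drives the rest of the argument.

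Next I would write down the regularized objective $g(\mbtheta) = f(\mbtheta) + \tfrac{1}{2}(\mbtheta - \mbtheta_\textbase)^{\top} M (\mbtheta - \mbtheta_\textbase)$ with a positive semi-definite weight $M$; this is the standard generalized Tikhonov penalty $\|\Gamma(\mbtheta - \mbtheta_\textbase)\|_2^2$ for $\Gamma = M^{1/2}$, centred at the base parameters. Stationarity of $g$, combined with the quadratic model for $\nabla f$ and the identification $\mbtft - \mbtheta_\textbase = \mbtheta_\textlora$, collapses to the linear system $(H + M)(\mbtheta - \mbtheta_\textbase) = H \mbtheta_\textlora$. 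Taking $M = \tfrac{1-\alpha'}{\alpha'} H$, which is PSD for $\alpha' \in (0,1)$ and hence a legitimate Tikhonov weight, gives $H + M = \tfrac{1}{\alpha'} H$; the system then reduces to $\mbtheta - \mbtheta_\textbase = \alpha' \mbtheta_\textlora$, so the minimizer of $g$ is precisely $\mbtheta^{*}_{\text{new}} = \mbtheta_\textbase + \alpha' \mbtheta_\textlora$, as claimed. Two limiting checks are reassuring: $\alpha' \to 1$ kills $M$ and recovers $\mbtft$, while $\alpha' \to 0$ sends $M \to \infty$ and pins the solution at $\mbtheta_\textbase$.

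The main obstacle, and the reason the lemma says ``mild assumptions'', is that the claim fails for an arbitrary isotropic penalty $\tfrac{\lambda}{2}\|\mbtheta - \mbtheta_\textbase\|_2^2$: solving $(H + \lambda I)\Delta = H\mbtheta_\textlora$ yields $\Delta = \alpha'\mbtheta_\textlora$ only when $\mbtheta_\textlora$ is an eigenvector of $H$, with $\lambda$ tied to the corresponding eigenvalue. The proof therefore goes through cleanly in the generalized (Hessian-weighted) Tikhonov reading, and the ``mild assumption'' is most naturally interpreted as either an approximately isotropic local Hessian near $\mbtft$ or an eigen-aligned $\mbtheta_\textlora$; the plan above makes this dependence explicit rather than hiding it inside the constant.
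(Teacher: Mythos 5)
Your argument is essentially the paper's own proof: both expand $f$ to second order around $\mbtft$ and minimize $f$ plus a Hessian-weighted (Mahalanobis) penalty $(\mbtheta-\mbtheta_\textbase)^T \Ht (\mbtheta-\mbtheta_\textbase)$ centred at $\mbtheta_\textbase$, and your choice $M=\tfrac{1-\alpha'}{\alpha'}\Ht$ is exactly the paper's $\alpha \Ht$ under $\alpha'=\tfrac{1}{1+\alpha}$. Your closing observation---that a plain isotropic $\lambda\|\mbtheta-\mbtheta_\textbase\|_2^2$ penalty would only yield the scaled-LoRA minimizer when $\mbtheta_\textlora$ is an eigenvector of $\Ht$---is correct and makes explicit a caveat the paper leaves implicit in calling the penalty ``Tikhonov (L$_2$)'' while actually proving the generalized, Hessian-weighted version.
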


\begin{proof}
Since $f$ is assumed to be locally convex at $\mbtft$, we can write $f(\mbtheta) \approx f(\mbtft) + (\mbtheta - \mbtft)^T \Ht (\mbtheta - \mbtft)$, where $\Ht$ is the Hessian at $\mbtft$.
The first term in the Taylor series is 0 because $\nabla f(\mbtft) = 0$ since $\mbtft$ is a minimizer, and $\Ht$ is positive semi-definite because of the locally convex assumption of $f$.
Since $\Ht$ represents the local convexity around $\mbtft$, we consider an augmented minimization objective with a Mahalanobis distance regularization from $\mbtheta_\textbase$ as follows:
\begin{align}
    \arg\min_\mbtheta \left[f(\mbtheta) + \alpha (\mbtheta - \mbtheta_\textbase)^T \Ht (\mbtheta - \mbtheta_\textbase) \right]
\end{align}
\begin{equation}
    \resizebox{0.45\textwidth}{!}{%
    $\approx f(\mbtft) + (\mbtheta - \mbtft)^T \Ht (\mbtheta - \mbtft) + \alpha (\mbtheta - \mbtheta_\textbase)^T \Ht (\mbtheta - \mbtheta_\textbase)$}
\end{equation}
Taking gradients w.r.t $\mbtheta$ and setting it to 0, we get:
\begin{align}
    &\Ht(\mbtheta - \mbtft) + \alpha \Ht(\mbtheta - \mbtheta_\textbase) = 0 \\
    &\implies \mbtheta = \frac{\mbtft}{1 + \alpha} + \frac{\alpha \mbtheta_\textbase}{1 + \alpha}  
\end{align}
Since $\mbtft$ is obtained using LoRA finetuning from the base model, i.e. $\mbtft = \mbtheta_\textbase + \mbtheta_\textlora$, we substitute this and get:
\begin{equation}
    \mbtheta = \mbtheta_\textbase + \left(\frac{1}{1 + \alpha}\right)\mbtheta_\textlora = \mbtheta_\textbase + {\alpha'}\mbtheta_\textlora 
\end{equation}
Since we have $\alpha \in [0, \infty)$, we have $\alpha' = \frac{1}{1 + \alpha} \in (0, 1]$.
\end{proof}

Therefore, if the minimizer of $f(\mbtheta)$ is $\mbtheta_\textbase + \mbtheta_\textlora$, then the minimizer of $f(\mbtheta) + \alpha(\mbtheta - \mbtheta_\textbase)^T \Ht (\mbtheta - \mbtheta_\textbase)$ is $\mbtheta_\textbase + \alpha'\mbtheta_\textlora$.
LoRA scaling can be interpreted as the solution to the minimization objective with a Mahalanobis regularization in the parameter space, with some convexity assumptions about the function $f$ (in this case, $f(\mbtheta) = \mathbb{E}_{\mbc, \mbx_0\sim p_\mbftheta} \left[-r(\mbx_0, \mbc)\right]$).

Unlike KL regularization, LoRA scaling is an inference time regularization and can be used without additional compute. 
We note that LoRA scaling being a regularization on the parameter space essentially suffers from the same issues as KL divergence (which is a regularization on the output space), namely, reference mismatch and indiscriminate regularization on the weight space irrespective of the timestep in the sampling chain.  \\

\textbf{Indiscriminate regularization and reference mismatch}
\label{sec:referencemismatch}
Both KL regularization and LoRA scaling suffer from the indiscriminate regularization, i.e. the regularization affects the entire sampling chain.
These regularizations are fine for Large Language Models (LLMs) because the entire output sequence of tokens constitutes the response to a prompt (all output tokens contribute equally to the log-likelihood), so regularization must be applied on the entire sequence.
In contrast, only the end of the sampling chain constitutes a sample from the image distribution, reflecting the asymmetric nature of the outputs of the sampling chain.
Moreover, we show that early steps in the sampling chain are responsible for recovering modes, and later steps are responsible for recovering fine-grained details (see \cref{sec:mixingdynamics}).
Since the goal of regularization is to recover multiple modes while obtaining high rewards, the regularization must be applied asymmetrically -- higher during the earlier timesteps, and lower during the later timesteps.
Furthermore, both KL divergence and LoRA scaling are regularizations with respect to the base model (in output and weight spaces respectively), discounting the reference mismatch between the base and finetuned models due to regularizing the later steps of the sampling chain.
We introduce an alternate regularization that addresses all these issues, and is shown to obtain better reward-diversity tradeoffs.

\subsection{\method}
\label{sec:aig}

Our proposed algorithm - \method, builds on the Annealed Importance Sampling (AIS)~\cite{neal2001annealed} interpretation of NCSN ~\cite{song2019generative}.
Simulated Annealing (SA) samples from a distribution $p_0(\mbx)$ by using a set of proposal distributions $p_t(\mbx) \propto p_0(\mbx)^{\beta_t} , 1 = \beta_0 > \beta_1 > \ldots \beta_T$.
A sample is generated by first sampling from $p_T(\mbx)$ and repeatedly transitioning from $p_{t}$ to $p_{t-1}$ until we get a sample from $p_0(\mbx)$.
Here, $p_T$ is the uniform distribution, which is easier to sample from, and Langevin dynamics can be run to converge to a sample from $p_0(\mbx)$.
Similarly, the diffusion models define $p_t(\mbx) \propto \int_y p_0(\mby)G(\tilde{\alpha}_t \mby, \mbx, t) d\mby$ as `smoothened' versions of $p_0(\mbx)$, and anneal from $p_T(\mbx) \approx \mathcal{N}(0, 1)$ to $p_0(\mbx)$. %
Specifically, the reverse-time SDE or probability ODE (as in ~\cref{eq:revode}) formulation can be thought of as performing SA with the proposal distributions $p_t(\mbx_t)$ using the Langevin dynamics defined by its score function $\nabla_{\mbx_t} \log(p_t(\mbx_t)) \approx f_\mbtheta(\mbx_t, t)$. %

In contrast, AIS~\cite{neal2001annealed} defines the intermediate proposal distributions as $p_t(\mbx) \propto p_0(\mbx)^{\beta_t}p_n(\mbx)^{1-\beta_t}$, where $p_0$ is the distribution of interest, and $p_n$ is a distribution we can sample from. 
In our case, $p_n(\mbx)$ is the base distribution, and $p_0(\mbx)$ is the {\draft} distribution.
Therefore, we propose the following transition probability distributions:
\begin{equation}
    p_{\text{\methodabbv}}(\mbx_t, t) = p_\textbase(\mbx_t, t)^{\gamma(t)} p_\mbtheta(\mbx_t, t)^{(1 - \gamma(t))}
\end{equation}
for an monotonic function $\gamma(t)$ with $\gamma(0) = 0, \gamma(T) = 1$.
Intuitively, this family of proposal distributions ensures that the Langevin mixing dynamics are governed by $p_\textbase$ in the earlier timesteps, and $p_\mbtheta$ in the later timesteps.
Early mixing from $p_\textbase$ ensures that modes from the data distribution are recovered (see \cref{sec:mixingdynamics}), later mixing from $p_\mbtheta$ then pushes the noisy data to the nearest high-reward mode.
The modified annealed Langevin dynamics are therefore governed by the score function:
\begin{equation}
    \resizebox{0.45\textwidth}{!}{%
    $\nabla_{\mbx_t}\log(p_{\text{\methodabbv}}(\mbx_t, t)) \approx \gamma(t)f_\textbase(\mbx_t, t) + (1-\gamma(t))f_\mbtheta(\mbx_t, t)$ %
    }
    \label{eq:aigscore}
\end{equation}
\cref{eq:aigscore} is now substituted to the probability ODE~\cref{eq:revode} to sample from the modified distribution.

{\methodabbv} \textit{implicitly} regularizes the earlier timesteps of the sampling chain since the influence of the {\draft} model is negligible in the beginning and is dominant only in the later timesteps due to the scheduling of $\gamma$. 
Moreover, since the influence of the base model becomes negligible in the later sampling steps, the sample $\mbx_t$ is not bound by any regularization and is allowed to freely converge according to the dynamics of the score function of the {\draft} model.
This helps alleviate the reference mismatch problem.
{\method} is also inference-time, allowing the end-user to experiment and find the best tradeoff without a lot of compute.
The proposed method is similar to classifier-free guidance -- using a weighted average of two score functions to achieve better samples.
However unlike classifier-free guidance, the score functions are \textit{interpolated}, and the mixing weight $\gamma(t)$ is changed over the course of sampling.

\section{Experiments}
\label{sec:experiments}

In this section, we compare the effects of various forms of regularization, and showcase finetuning applications. \\

\textbf{Implementation Details}
We consider checkpoints from two models - Stable Diffusion v1.4 (SD1.4) and Stable Diffusion XL Base (SDXL) for our experiments. 
We ablate on two popular reward models - the Human Preference Score (HPSv2)~\cite{hps} and Pickscore~\cite{pickscore}.
All experiments use DRaFT-1 style training, wherein gradients are computed only through the last sampling step, since that exhibits the most stable training dynamics and high rewards~\cite{draft}.
We use the DDIM sampler for SD1.4 and EDM sampler for SDXL, with 25 sampling steps.
We use a classifier-free guidance weight of 7.5 for SD1.4 and 5.0 for SDXL, following best practices.
All finetuning is implemented and run using the NVIDIA NeMo framework ~\cite{nemo}.
We use a subset of 50k prompts from the Pick-a-Pic dataset for finetuning with the reward model.
All {\draft} and \draft+KL models are trained on a single node with 8 H100 GPUs for 4 hours, with DDP-level parallelism.
More implementation details are present in Appendix ~\cref{sec:impldetails}.
\\

\begin{figure*}[ht!]
    \centering
    \begin{minipage}{0.85\textwidth}
        \includegraphics[width=\linewidth]{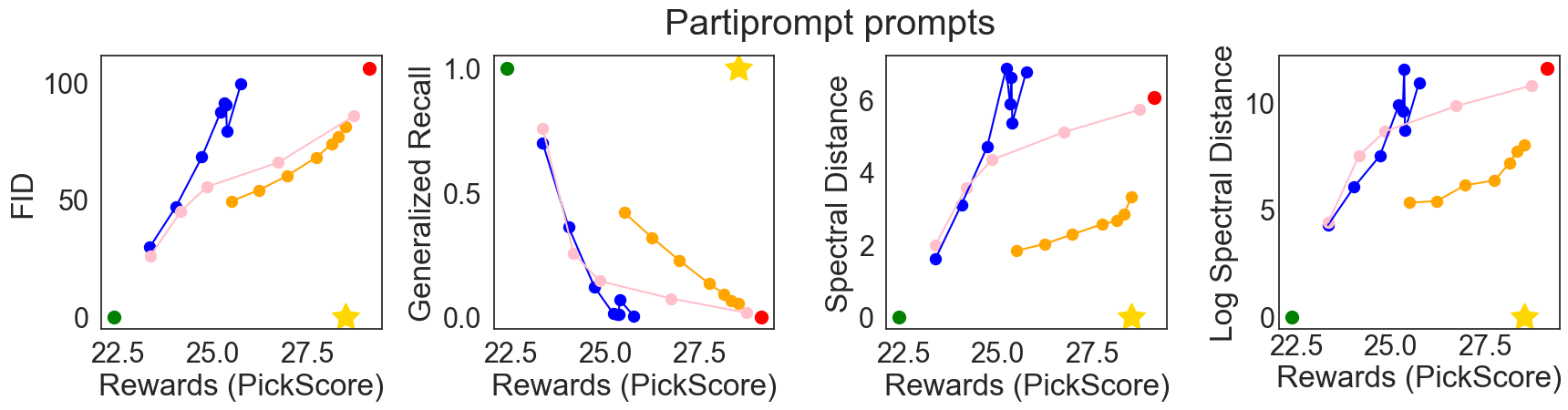}
        \includegraphics[width=\linewidth]{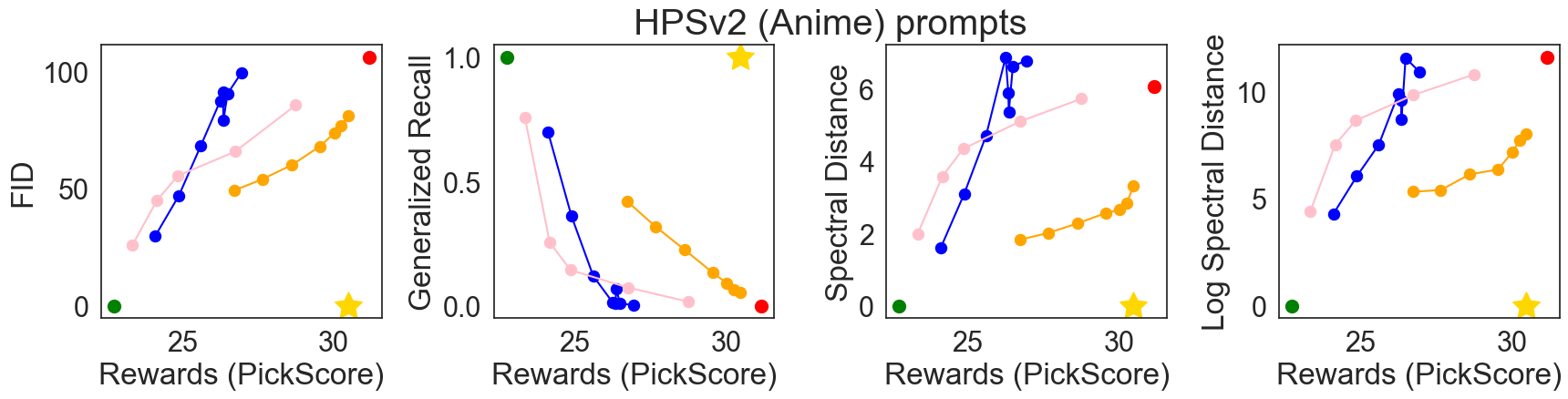}
        \subcaption{\textbf{Reward-diversity tradeoff for SDXL model trained on PickScore}}
        \label{fig:sdxl_pickscore_partiprompt}
    \end{minipage}
    \begin{minipage}{0.12\textwidth}
        \includegraphics[width=\linewidth]{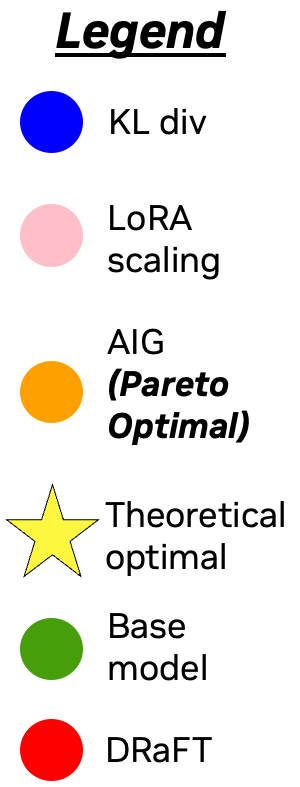}
    \end{minipage}

    \begin{minipage}{0.85\textwidth}
        \includegraphics[width=\linewidth]{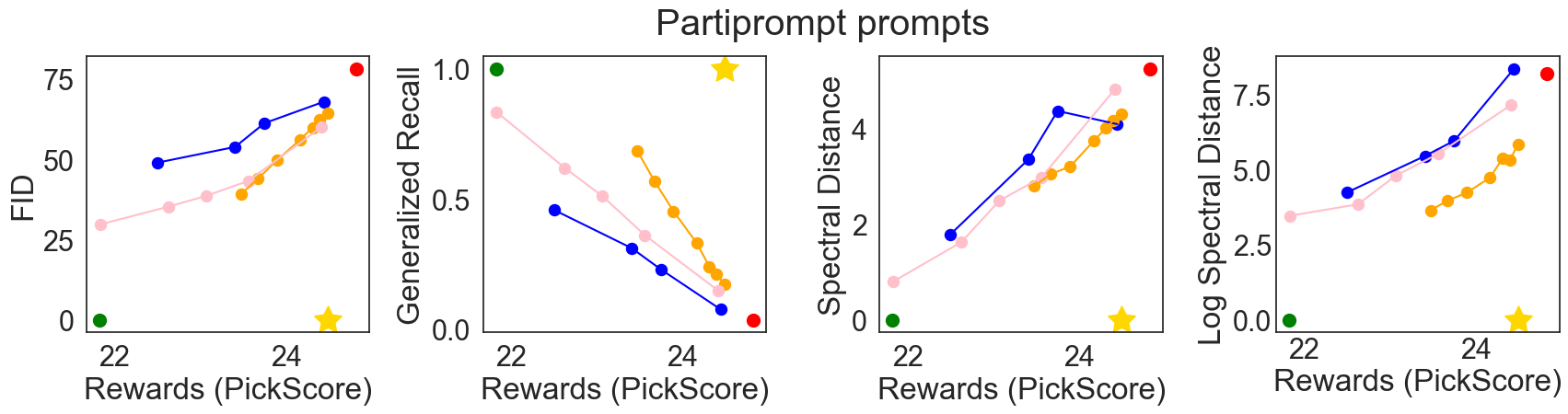}
        \includegraphics[width=\linewidth]{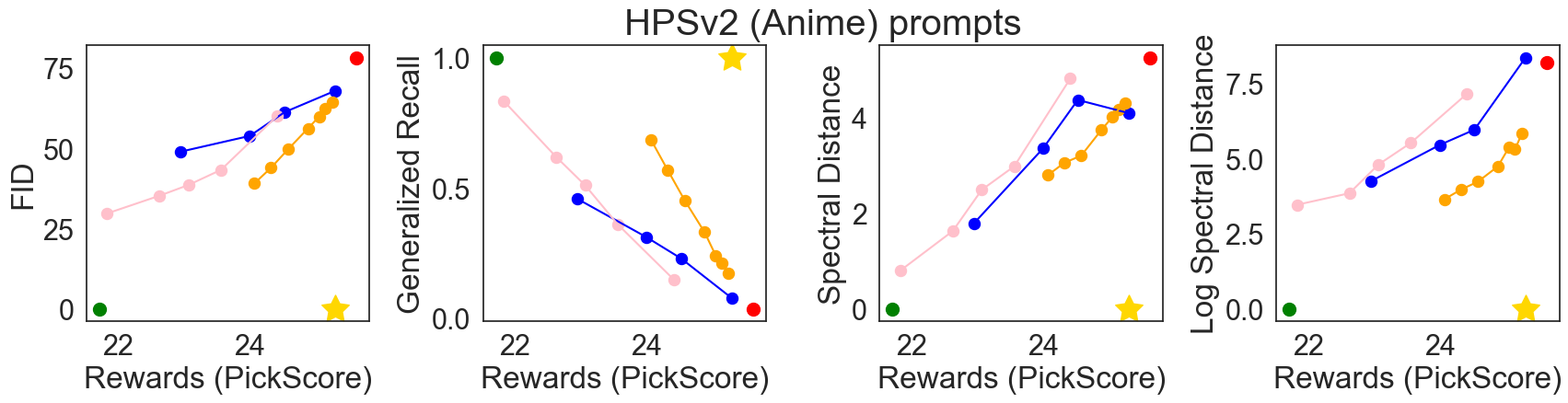}
        \subcaption{\textbf{Reward-diversity tradeoff for SDv1.4 model trained on PickScore}}
        \label{fig:sd_pickscore_partiprompt}
    \end{minipage}
    \begin{minipage}{0.12\textwidth}
        \includegraphics[width=\linewidth]{images/graph-legend.png}
    \end{minipage}

    \caption{\scriptsize \textbf{Reward-diversity tradeoff for models trained on PickScore}: \textcolor{green}{\textbf{Green}} represents the base model, \textcolor{red}{\textbf{Red}} represents {\draft} with no regularization, \textcolor{yellow}{\textbf{Gold}} star represents the ideal score. \textcolor{blue}{\textbf{Blue}} represents different models with different KL regularization coefficients $\lambda$, \textcolor{pink}{\textbf{Pink}} represents different amounts of LoRA scaling, and \textcolor{orange}{\textbf{Orange}} represents different $\gamma(t)$ for \methodabbv.
    An ideal baseline would achieve the highest reward (represented by \draft) as well as a complete match with the base distribution.
    All diversity metrics are computed with respect to the base model, therefore the base model achieves the ideal diversity score (1 for Recall, 0 for everything else).
    For all measures for both PartiPrompt and HPSv2 prompts, the {\methodabbv} regularization achieves Pareto-optimality for SD and SDXL.
    LoRA scaling is competitive in FID, but suffers from low recall and higher spectral distance, especially for SDXL.
    More coverage metrics are in \cref{sec:app-reward-diversity}.
    }
    \label{fig:quantitative-rewarddiversity-tradeoff}
\end{figure*}

\vspace*{-10pt}
\subsection{Evaluation Measures}
\label{sec:eval}
First, we generate images from the Partiprompts and HPDv2 prompt datasets. 
The Partiprompt prompt dataset encompasses prompts from a variety of concept classes, while the HPDv2 focuses on stylistic features - evident by its 4 subclasses (photos, anime, concept-art, paintings).
Then, we evaluate the Pickscore reward, HPSv2 reward, and CLIP alignment on these generated images.
Furthermore, we explicitly compute the `diversity/coverage' with respect to the base model.
To this end, we manually choose a set of 40 complex prompts from the PartiPrompt dataset (referred to as the `coverage prompt dataset'), and generate 50 images for each prompt with the same random seeds.
The distribution of these images is then compared with that of the base model.
We evaluate the reward scores in conjunction with four coverage measures, to analyse the Pareto efficiency of each finetuning method.
We choose the following coverage evaluation criteria:

\textbf{Frechet Inception Distance}~\cite{fid}
This metric captures the discrepancy in feature distribution considering a Gaussian distribution for the features of each distribution. 
Given samples from two distributions $D_1 = \{ f_1^{(1)}, f_1^{(2)} \ldots f_1^{(n)} \}$ and $D_2 = \{ f_2^{(1)}, f_2^{(2)} \ldots f_2^{(m)} \}$, the FID is defined as 
\begin{equation}
    \text{FID}(D_1, D_2) = \| \mu_1 - \mu_2 \|_2^2 + Tr(\Sigma_1 + \Sigma_2 - 2(\Sigma_1\Sigma_2)^{\frac{1}{2}})
\end{equation}
where $\mu$ and $\Sigma$ are the mean and standard deviation of the datasets, and $Tr$ is the trace of the matrix. 

\begin{figure}[ht!]
    \centering
    \includegraphics[width=0.99\linewidth]{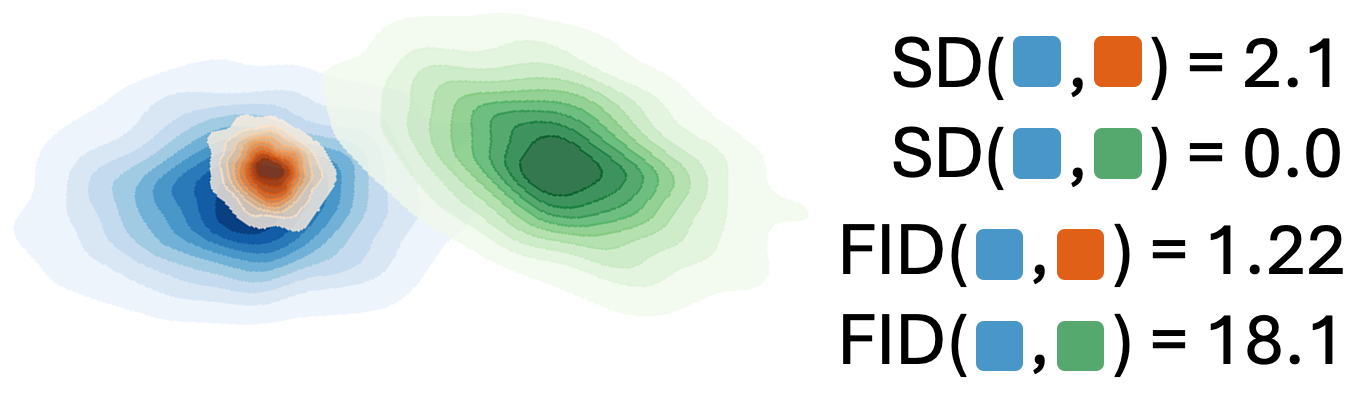}
    \caption{\footnotesize{\textbf{Spectral (Covariance) Distance}. Given a base distribution \Done and two distributions (\Dtwo, \Dthree), SCD aims to measure the similarity of distributions taking `reference mismatch'~\cite{hong2024margin} into account. In this case, {\Dthree} has the same distribution spread as \Done, but reference mismatch in the form of shifted mean and rotated covariance. {\Dtwo} has a mean close to that of {\Done} but has a tighter distribution. In this case, FID characterizes {\Dtwo} to be the preferred distribution, but Spectral Distance characterizes \Dthree.}
    }
    \label{fig:spectral}
\end{figure}

\textbf{Generalized Recall}
\cite{precisionrecall} introduce the classic concepts of precision and recall to the study of generative models, motivated by the observation that FID cannot be used for making conclusions about precision and recall separately.
While high precision implies more realism of the images compared to the base distribution, high recall implies higher coverage of the data distribution by the generator.
~\cite{improvedprc} propose to form non-parametric representations of the data manifolds using overlapping hyperspheres using kNN with hyperparameter $k$ (nearest neighbors), followed by binary assignments of each data point to the manifold to compute recall.
We compute the recall of the distribution with respect to the base model, which covers multiple modes of the dataset.
We use $k=10$ in our experiments.

\textbf{Spectral Covariance Distance}
One limitation of FID and Generalized Recall is that these metrics were created to ensure \textit{exact} overlap between the data and generator distribution.
Consequently, they penalize any `reference mismatch' w.r.t. the base distribution as well.
A good coverage metric should only penalize differences in the `spread' of the distribution, ignoring any reference mismatch due to translational and rotational differences.
To measure the `spread', consider the eigendecomposition of covariance of the dataset features as $\Sigma_1 = \mathbb{E}[(x - \mu)(x - \mu)^T] = U_1 \Lambda_1 U_1^T$ and $\Sigma_2 = U_2 \Lambda_2 U_2^T$. 
Rotating the data from $\done$ by rotation $R = U_2 U_1^T$, the new covariance matrix of the data becomes $\Sigma_1' = U_2 \Lambda_1 U_2^T$, whose principal eigenvectors now align with that of $\dtwo$.
We propose the Spectral distance as the differences in corresponding eigenvalues
\begin{equation}
    \resizebox{0.4\textwidth}{!}{%
    $\text{SCD}(\done, \dtwo) = \| \Sigma_1' - \Sigma_2 \|_2^2 = \sum_{i=1}^{N} (\lambda_1^{(i)} - \lambda_2^{(i)})^2$
    }
    \label{eq:scd}
\end{equation}
Since the eigenvalues are non-negative, and denote scaling along the principal axes, one can consider a metric using relative scales:
\begin{equation}
    \resizebox{0.5\textwidth}{!}{%
    $\text{LSCD}(D_1, D_2) =  \sum_{i=1}^{N} \left( \log\left(\frac{\lambda_1^{(i)}}{\lambda_2^{(i)}}\right) \right)^2 = \sum_{i=1}^{N} \left( \log(\lambda_1^{(i)}) - \log(\lambda_2^{(i)}) \right)^2$
    }
    \label{eq:logscd}
\end{equation}
\cref{eq:scd,eq:logscd} penalize absolute and relative deviations in coverage of the space, without penalizing any translational and rotational reference mismatch (see \cref{fig:spectral}). 
We use the default 2048 dimensional \texttt{pool3} layer from the Inception network~\cite{szegedy2015going} for computing all coverage metrics.

\subsection{Reward-KL tradeoff}
First, we train SD1.4 and SDXL on both the HPSv2 and Pickscore reward functions.
We choose a variety of hyperparameter values for KL regularization, i.e. $\lambda = 0.1, 0.5, 1, 2, 5, 10, 20, 50$.
Each hyperparameter run requires 4 hours of training on 8 NVIDIA H100 GPUs.
Note that SD1.4 and SDXL share the same underlying UNet architecture that differ only in the number of parameters and conditioning (the latter also accepts top-left location and original image sizes as additional inputs).
Fig.~\cref{fig:klvsreward} shows the quantitative reward-KL tradeoff, and ~\cref{fig:klvsreward-pickscore,fig:klvsreward-hps} shows qualitative effects of training with different $\lambda$ and datasets.
We note that for the larger $\lambda$, the SDv1.4 model breaks down during training, and for smaller $\lambda$ SDXL does not produce qualitatively better images.
Therefore, the choice of optimal $\lambda$ is dependent on the model architecture and requires expensive finetuning.

\subsection{Reward-diversity tradeoff}
\label{sec:rewarddiversity}
In this section, we show a more comprehensive comparison of models with diversity metrics relative to the base model.
\cref{fig:quantitative-rewarddiversity-tradeoff} shows the Pareto fronts for models regularized with KL divergence, LoRA scaling, and \methodabbv, for SD and SDXL trained on Pickscore, and evaluated on PartiPrompt and HPSv2 datasets.
The Pareto front is computed by choosing different hyperparameters for each regularization ($\lambda$ for KL, $\alpha'$ for LoRA, $\gamma$ for \methodabbv), and plotting a curve.
In all cases, {\methodabbv} achieves a better reward-diversity tradeoff.
Pareto fronts on other splits of the HPSv2 dataset, DMs trained on HPSv2 rewards and qualitative comparisons are in \cref{sec:app-reward-diversity}.  \\

\vspace*{-8pt}
\textbf{Qualitative Results} Due to space constraints, we show qualitative results and comparisons in \cref{sec:app-qualitative}.

\subsection{User Preference Study}
\label{sec:userstudy}

\begin{figure}[ht!]
    \centering
    \begin{minipage}{\linewidth}
        \includegraphics[width=\linewidth]{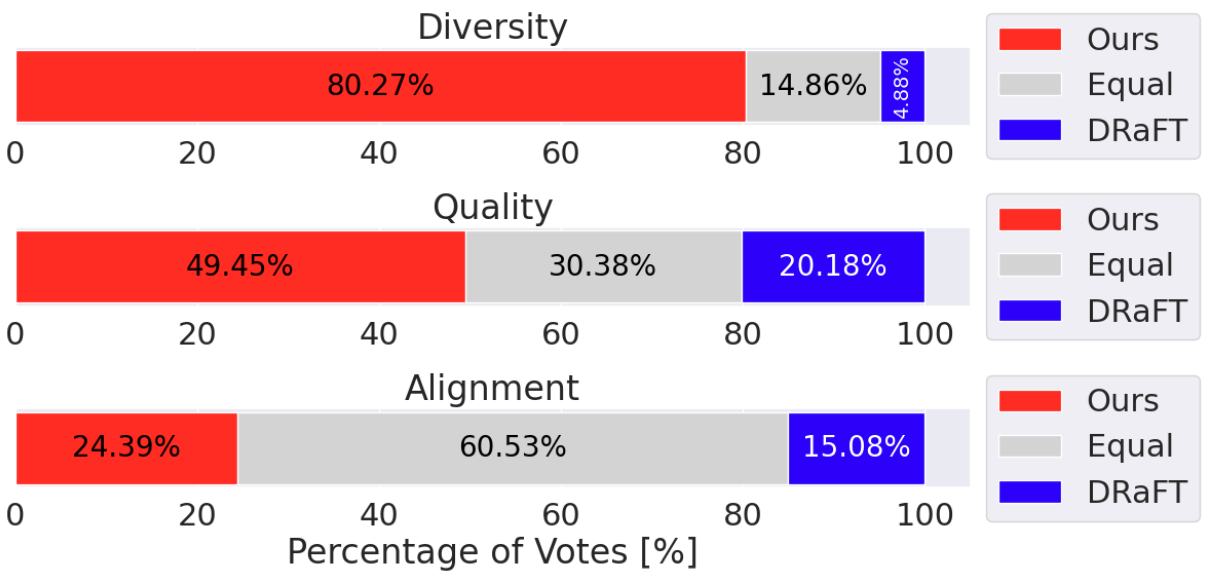}
    \end{minipage}
    \caption{\textbf{User study across all DM and reward configurations} Our method (\methodabbv) demonstrates exceptional diversity and quality while preserving alignment.}
    \label{fig:userstudy-overall}
    \vspace*{-8pt}
\end{figure}

We perform a rigorous user study to verify the efficacy of {\methodabbv} compared to {\draft} in terms of diversity, image quality, and T2I alignment.
Our user study had {\numusers} participants resulting in more than {\numvotes} votes.
More ablations and details about the user study setup are outlined in \cref{sec:app-user-study}.
\cref{fig:userstudy-overall} shows the overall win rates of our method compared to \draft.
{\methodabbv} demonstrates exceptionally high diversity \textit{and} quality, without sacrificing T2I alignment. %
We also show results aggregated by DM and reward model configurations (\cref{fig:userstudy-indconfig}) to verify that user preferences are not skewed towards any particular configuration.
User preferences are consistent across DM and reward configurations, showing that {\methodabbv} is the primary contributor to high diversity and quality according to users. 

\section{Conclusion}
\label{sec:conclusion}

In this paper, we studied the problem of \textit{reward hacking} for T2I diffusion models and presented a simple proof of model collapse, necessitating regularization.
We analyse and interpret both KL divergence and LoRA scaling as regularization with respect to the base model in the output and parameter spaces respectively.
This is followed by identifying pitfalls of existing regularizations, namely the training requirements of the KL hyperparameter $\lambda$, a suboptimal reward-diversity tradeoff and reference-mismatch due to indiscriminate regularization over the entire sampling chain.
Inspired by Annealed Importance Sampling, we propose {\methodabbv}, an inference-time regularization that gradually anneals the score function from the base model to the {\draft} model in the reverse-time sampling chain.
The mixing dynamics of the earlier steps use the base model to separate multiple modes from noise, and the later steps use the {\draft} model to finetune the latent towards generating high-reward images.
The gradual shift of score function leads to little to no regularization during the later timesteps from the base model, mitigating reference-mismatch during the finetuning steps and reducing bias. %
In addition to existing metrics, we propose two coverage metrics that take reference mismatch into account.
Extensive quantitative results show that {\methodabbv} achieves Pareto-optimal reward-diversity tradeoffs, and a user study shows {\methodabbv} improves the diversity \textit{and} quality of samples without sacrificing alignment across different DM and reward models. 
We believe this work provides motivation and groundwork for a deeper understanding of the nature of reward optimization while preserving characteristics of the generated data distribution. 
We discuss limitations in \cref{sec:limitations}.

\clearpage
{\small
\bibliographystyle{ieee_fullname}
\bibliography{egbib}
}

\clearpage
\appendix
\onecolumn
\section{Appendix}
\label{sec:appendix}

\subsection{Proof of inevitability of reward hacking}
\label{sec:rewardproof}
Consider an arbitrary reward function $r(x)$ that is sufficiently smooth. Consider a non-parametric probability distribution $p(x)$ which maximizes the expected reward:
\begin{equation}
    p^* = \arg\max \mathbb{E}_{x \sim p(x)}\left[r(x)\right]
    \label{eq:rewardmaximization}
\end{equation}
with the constraint $\int_x p(x) \dd x = 1$. This is written as a maximization problem with Langrange multiplier $\beta$
\begin{align}
    &\max_p \int_x p(x) r(x) \dd x - \beta \left(\left(\int_x p(x) \dd x\right) - 1\right) \\
    &= \int_x p(x)\left[r(x) - \beta\right] \dd x + \beta \\
    &= \int_x \mathcal{L}(x, p, \dot{p}) \dd x + \beta
\end{align}
This is an Euler-Langrage equation where $\mathcal{L}(x, p, \dot{p}) = p(x)(r(x) - \beta)$. The maximizer of this equation is given by:
\begin{equation}
    \frac{\partial L}{\partial p} - \frac{\dd}{\dd t}\left[\frac{\partial L}{\partial \dot{p}}\right] = 0
    \label{eq:el}
\end{equation}
Since the second term in~\cref{eq:el} is zero, we get
\begin{equation}
    \frac{\partial L}{\partial p} = r(x) - \beta = 0
    \label{eq:grad0}
\end{equation}
Note that $\beta$ should be a constant, for a general $r(x)$ \cref{eq:grad0} will not hold true. However, note that $p(x) \ge 0 \forall x$. Therefore, if $r(x) < \beta$, then $p(x) = 0$ and if $r(x) > \beta$, then $p(x)$ will grow indefinitely, violating the pdf constraint $\int_x p(x) \dd x = 1$.
However, $\beta$ should be chosen such that $r(x) \le \beta  \forall x$. However, if $r(x) < \beta \forall x$, then $p(x) = \forall x$.
Therefore, $\beta$ is chosen to be $\beta = \sup r(x)$, leading to the optimal distribution 
\begin{equation}
    p^*(x) = \delta(x - x^*)
\end{equation}
where $x^* = \arg\max r(x)$. 
If $r$ has multiple maxima with the same maximum value, say $\{x_1^*, x_2^* \ldots x_n^*\}, r(x_i^*) = \sup r(x)$, then there exists a family of optimal distributions:
\begin{equation}
    p^*(x) = \sum_i w_i \delta(x - x_i^*) 
\end{equation}
such that $\sum_i w_i = 1$. We assume that $r$ is not `flat' at this maximum value, therefore $p^*(x)$ lacks diversity. \\

For a conditional distribution $p(x|c)$ maximizing the reward $r(x, c)$, a similar derivation yields $p(x|c) = \delta(x - x^{*rc})$, where $x^{*rc} = \arg\max r(x, c)$.
This completes the proof in the non-parameteric case.
Note that no assumption is made about the finetuning algorithm (DPO, DRaFT, ReFL, etc.) or nature of the reward function (CLIP, JPEG compression, Aesthetics, etc.).
This proves that reward hacking is an artifact of the expected reward maximization problem formulation itself. \\

In the parameteric case, the optimal $p^*(x)$ may not be achievable due to the parameterization.
However, even with low-dimensional parameter updates like LoRA, we notice a substantial loss of image diversity when training {\draft}.
Qualitative comparisons between the base, {\draft} and our regularization are shown in ~\cref{fig:qualdiv1,fig:qualdiv2,fig:qualdiv3,fig:qualdiv4}.

\subsubsection{Adding dropout to reward functions does not work}
Moreover, this explains why even reward functions in ~\cite{draft} with aggressive dropout rates ($> 0.95$) still led to reward collapse.
Let the reward model be parameterized by $\varphi$ and let $N = |\varphi|$ be the dimension of the reward model parameters.
Under the dropout case with dropout parameter $\xi$, the expected reward maximization formulation becomes:
\begin{equation}
    p^* = \arg\max \mathbb{E}_{x \sim p(x), u \in \mathcal{U}[0, 1]^{N}}\left[r_{\varphi[u,\xi]}(x)\right]
    \label{eq:dropoutreward}
\end{equation}
where $u$ is sampled from an i.i.d. multidimensional uniform distribution over 0 to 1, i.e. $u \sim \mathcal{U}[0, 1]^N$, and $\varphi[u,\xi]$ are the parameters after applying dropout with random variable $u$ and dropout threshold $\xi$.
Since $x$ and $u$ are independent, we can simplify the expression \cref{eq:dropoutreward} by expanding the expectation over $u$ to obtain:
\begin{equation}
    p^* = \arg\max \mathbb{E}_{x \sim p(x)}\left[\mathbb{E}_u \left[r_{\varphi[u,\xi]}(x)\right]\right] = \arg\max \mathbb{E}_{x \sim p(x)} \left[ \tilde{r}_\xi(x) \right]
\end{equation}
where 
\begin{equation}
    \tilde{r}_\xi(x) = \mathbb{E}_{u \in \mathcal{U}[0, 1]^N}\left[r_{\varphi[u,\xi]}(x)\right] 
\end{equation}
is independent of random variable $u$.
This is the same optimization problem as \cref{eq:rewardmaximization} with a new reward function $\tilde{r}_\xi$, therefore having the same reward hacking problem.

\subsection{More reward-diversity tradeoff analysis}
\label{sec:app-reward-diversity}
We perform more ablations on the analysis shown in \cref{sec:rewarddiversity}.
Specifically, we train SDv1.4/SDXL on Pickscore/HPSv2 reward models and compare the reward-diversity tradeoffs for various regularizations (i.e. KL, LoRA scaling, {\methodabbv}).
Then, we generate images from the PartiPrompt prompt dataset containing over 1600 prompts, and all four subsets of the HPSv2 prompt dataset (containing 800 prompts each from 4 categories).
Additionally, images are generated from the coverage prompts as mentioned in ~\cref{sec:eval}.
Next, the training reward score is computed on the generated images on these prompt datasets, and plotted against the diversity score from the coverage dataset.
These quantitative reward-diversity tradeoffs are shown in \cref{fig:sdxl_pickscore_all,fig:sd_pickscore_all,fig:sdxl_hps_all,fig:sd_hps_all}.
{\methodabbv} consistently attains better reward-diversity tradeoff than LoRA scaling and KL divergence on both SDv1.4 and SDXL architectures trained on the Pickscore dataset.
On the HPSv2 trained models, we observe a smaller Pareto gap between the baseline and our method.
Moreover, there is a trend reversal among the baselines. In the HPSv2 trained models, KL regularization seems to outperform LoRA scaling, but in the Pickscore trained models, LoRA scaling tends to outperform KL regularization. 
This highlights the versatility and reliability of {\methodabbv} as an effective regularization, being less volatile to trend reversals. \\

\textbf{CLIP-based image-text alignment} We also compare CLIP~\cite{clip} scores on the HPSv2 prompt data splits for all four configurations.
These results are shown in \cref{fig:hps_anime_clip,fig:hps_photo_clip,fig:hps_painting_clip,fig:hps_conceptart_clip}.
Note that unlike the reward-diversity analysis, maximizing Pickscore/HPSv2 rewards does not imply higher text-to-image alignment. 
This is evident from the {\draft} model consistently underperforming the base model in terms of CLIP score.
Consequently, the theoretical optimal is sometimes very close to the base models and there is no CLIP-diversity tradeoff anymore.
Therefore, we compare models only on the CLIP score for a particular value of diversity metric (FID, Recall, Spectral Distance).
However, we notice that different regularizations do not deviate in CLIP score for different values of regularization.
We make two interesting observations comparing {\methodabbv} and LoRA scaling.
First, {\methodabbv} outperforms LoRA scaling overall (on PartiPrompts and 3/4 HPSv2 subsets) on all four configurations (SDv1.4/SDXL trained on Pickscore/HPSv2), achieving a notably higher CLIP score.
The difference is more notable for the SDv1.4 variants, owing to the already high baseline image-text alignment of SDXL compared to SDv1.4.
Second, we qualitatively observe that reward model training produces images that are stylistically more cartoony.
Consequently, this leads to lower text-image alignment for HPSv2 photo prompts where the prompts mention the words "photo" but the generated images look more cartoony and dreamish.  
{\methodabbv} preserves more characteristics of the {\draft} alignment, that leads to a slightly lower CLIP score than LoRA scaling on this particular subset. 

\subsection{Qualitative Results}
\label{sec:app-qualitative}

We present two qualitative comparisons. 

\textbf{Comparison with \draft.}
In this section, we show a few uncurated subsets of images generated by {\draft} and {\methodabbv}, similar to those shown in user studies in ~\cref{fig:qualdiv1,fig:qualdiv2,fig:qualdiv3,fig:qualdiv4}.
Note that across network architectures and reward functions, {\methodabbv} consistently demonstrates high quality images compared to the base model, while much higher diversity than the {\draft} model.

\textbf{Comparison with other baselines.}
We consider three other baselines in this work:

\textbf{DOODL}~\cite{doodl}: This method aims to use Exact Diffusion Inversion to optimize the noise latent that produces an image to maximize classifier guidance, by directly backpropagating through the pre-trained classier's score on the generated image.
The classifier can be interpreted as a reward model, and the optimization is done at inference-time. However, this method takes very long to produce results.
For example, with its default configuration (50 optimization steps for 50 DDIM steps), producing the images from the PartiPrompt prompt set will take $\sim$775 GPU hours, as opposed to $<30$ minutes for our method.

\textbf{ReNO}~\cite{reno}: This method is functionally similar to DOODL, except it works on one-step diffusion models. Given a one-step diffusion model (distilled from a DDPM/DDIM model) $G_\mbtheta(\epsilon, \mbc)$ that generates an image based on noise $\epsilon$ and (prompt) conditioning $\mbc$, and a reward function $R$, the ReNO objective is defined as
\begin{equation}
    \epsilon^* = \arg\max_\epsilon R(G_\mbtheta(\epsilon, \mbc), \mbc)
    \label{eq:reno}
\end{equation}
Since both the one-step diffusion model $G$ and reward function $R$ are differentiable, \cref{eq:reno} is solved using direct optimization using gradient ascent techniques.
Moreover, to prevent divergence from the initial data distribution, a regularization based on the proximity of the noise $\epsilon$ to the normal distribution $\mathcal{N}(0, 1)$ is measured.
In the paper, a $\chi^d$ regularization on the norm of the noise is used.

\textbf{ReFL}~\cite{imagereward}: Directly optimizing LDMs with a reward models is expensive due to its many sampling steps.
However, ~\cite{imagereward} observe that the the rewards of the images in the middle of the sampling chain are indicative of the final scores.
Therefore, the LDM chooses a model and a randomly chosen timestep in the middle of the sample chain, and computes gradient \textit{only} with respect to that step.
This prevents an expensive gradient computation step for finetuning the LDM.

\cite{draft} already show that {\draft} performs quantitatively better than these baselines; we focus on qualitative differences.
For all methods, we use their default recommended configurations.
We qualitatively evaluate all models by generating images from the PartiPrompt dataset.
Qualitative comparison is shown in \cref{fig:qualall-1,fig:qualall-2}.

\subsection{User Study}
\label{sec:app-user-study}

\begin{figure}[ht!]
    \centering
    \begin{minipage}{\linewidth}
        \centering
        \includegraphics[width=0.32\linewidth]{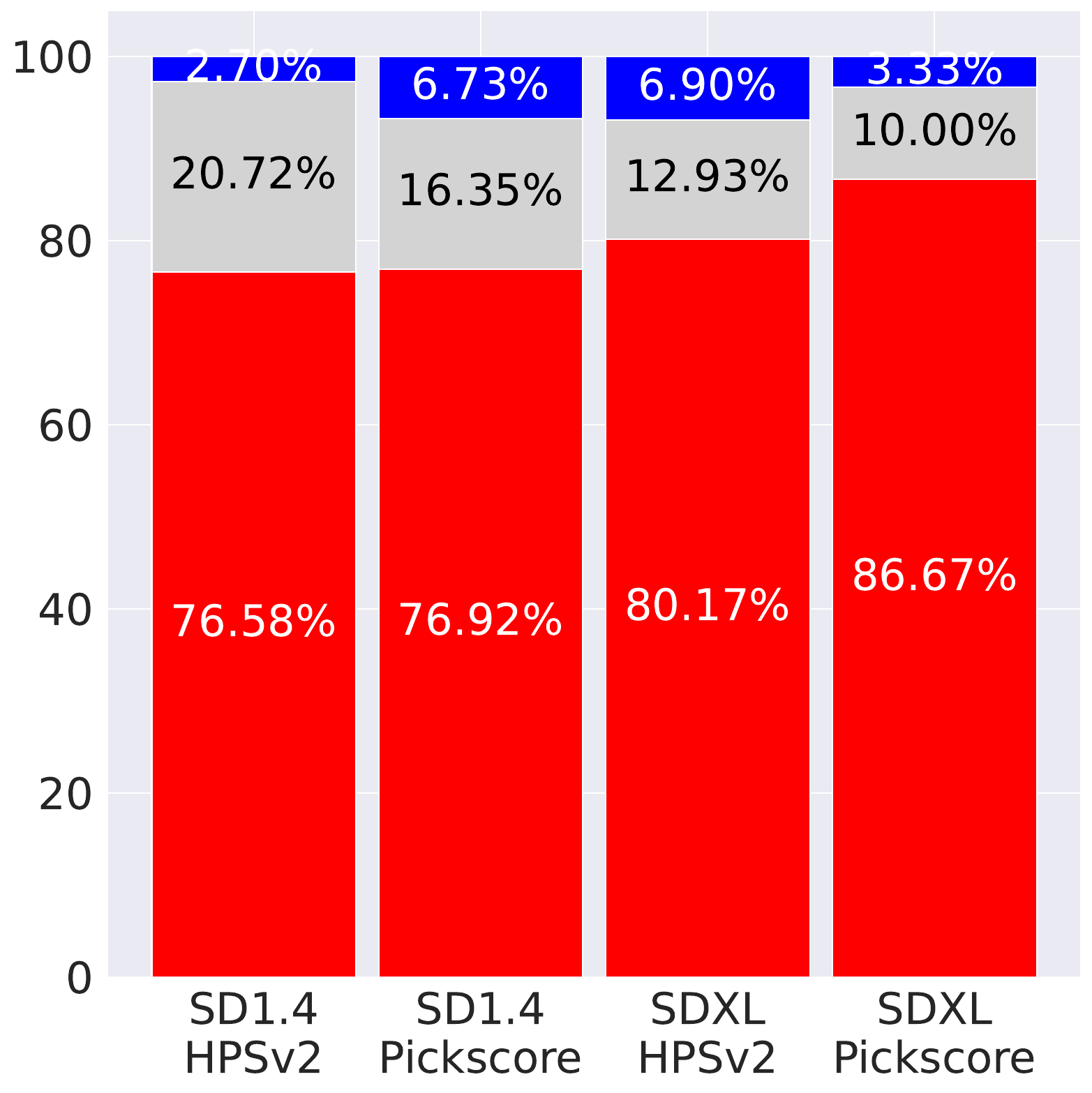}
        \includegraphics[width=0.32\linewidth]{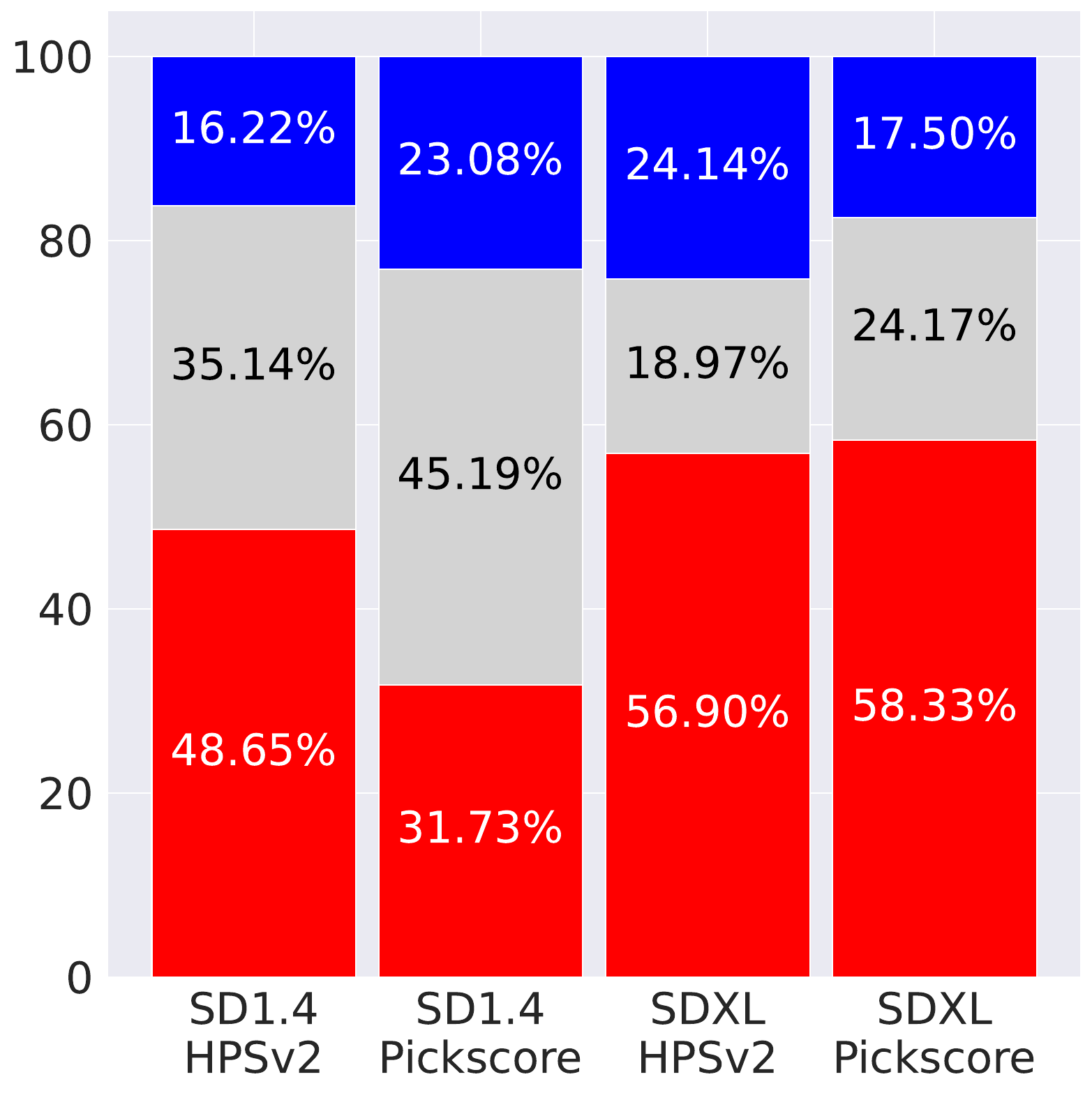}
        \includegraphics[width=0.32\linewidth]{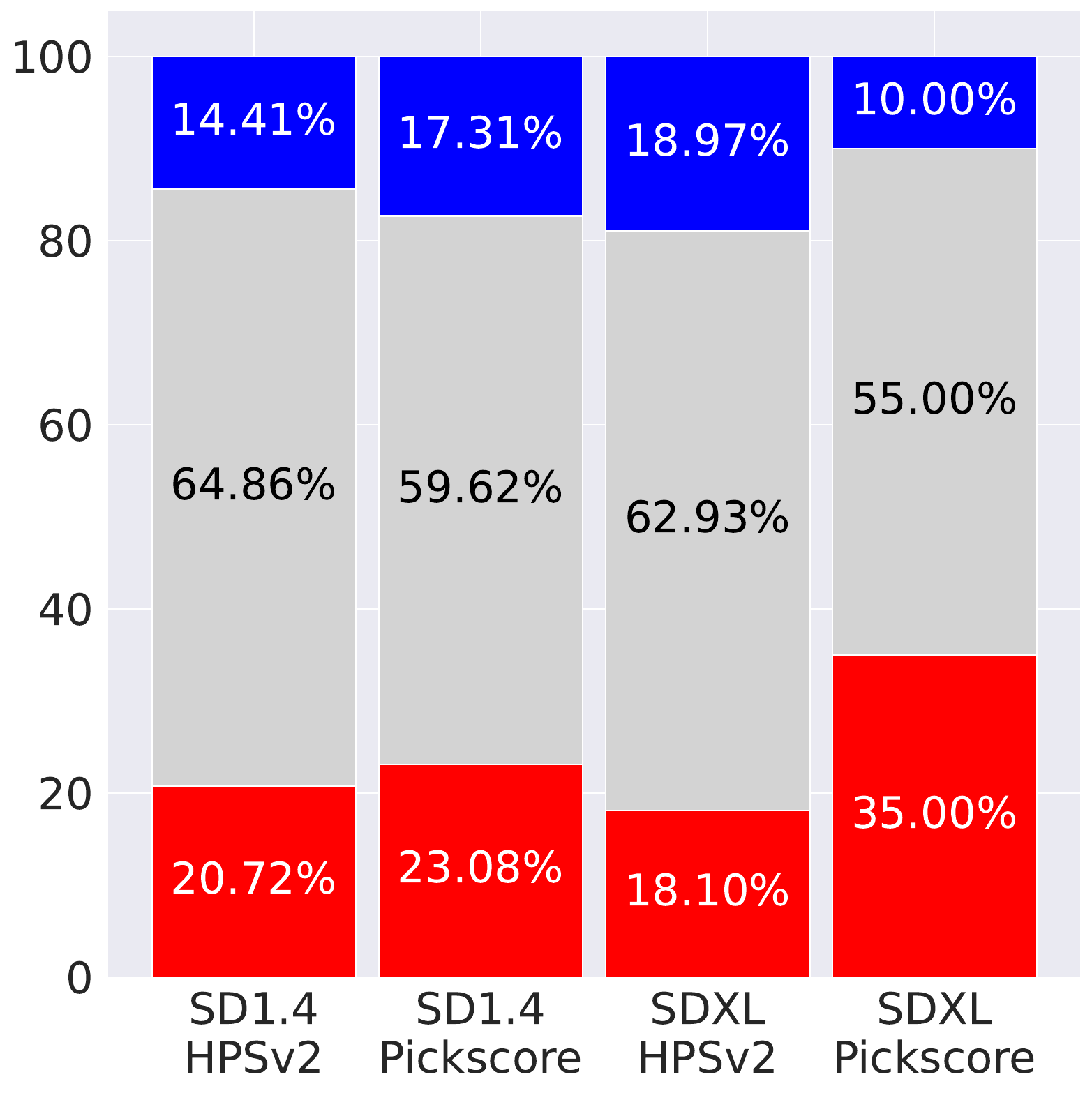}
        \begin{tabularx}{\linewidth}{YYY}
           Diversity & Quality & Alignment \\ 
        \end{tabularx}
    \end{minipage}
    \caption{\textbf{User study results aggregated by DM+reward configuration} 
    User preferences are consistent across models and reward functions,
     confirming that the user preference towards our method is due to the {\methodabbv} itself and is not due to any specific architecture or reward model.}
    \label{fig:userstudy-indconfig}
\end{figure}

In this section, we provide more details on the user preference study.
The objective of the user study is to quantify if the proposed method: {\methodabbv} leads to increased diversity at the cost of any loss of quality or alignment.
To this end, we use the `coverage prompt' dataset, which is a subset of 40 prompts from the PartiPrompt prompt dataset.
For each prompt, 50 images are generated for all methods with the same noise latents for consistency.
These images are generated for all four configurations - SDv1.4 and SDXL models that are trained on Pickscore and HPSv2 rewards.
The web UI assigns a unique user ID to a browser session, and randomly chooses a prompt, DM+reward configuration, and selects 9 random indices from the 50 generated images without replacement, randomly shuffles the order, and displays the images side by side (\cref{fig:userstudyui}).   
We recruit 36 participants and provide them basic app usage instructions prior to conducting the study, and we collect more than 1500 total votes from all users.
Users are referenced by browser cookie information, allowing us to preserve user anonymity while collecting user-specific voting statistics.
Both overall voting results, and results aggregated by configurations are summarized and discussed in ~\cref{sec:userstudy}.

\textbf{User-normalized preference}
However, our user study allows the users to cast a different number of votes as per their convenience. 
This leads to a slight non-uniformity in the distribution of votes (\cref{fig:userstudy-votingstats}), which can skew the preference scores towards users who cast more votes.
To highlight this potential discrepancy between overall vote distribution and user-normalized vote distribution, we compute re-normalized preference scores as follows.
Instead of counting votes towards a particular baseline ({\draft}, {\methodabbv} or Equal) followed by normalization, we normalize the number of votes of each individual user to sum to a 100.
Next, we aggregate these normalized votes for each user.
Essentially, we calculate the average preference of users irrespective of the total number of votes cast by each user.
These results are shown in \cref{fig:userstudynorm}.
Interestingly, the trends do not drastically shift from that in ~\cref{fig:userstudy-overall,fig:userstudy-indconfig}, showing that user agreeability on preferences is high.
If users disagreed on preferences, then the unweighted and reweighted preference statistics may have been different.

\begin{figure}[ht!]
    \centering
    \begin{minipage}{0.6\linewidth}
        \centering
        \includegraphics[width=\linewidth]{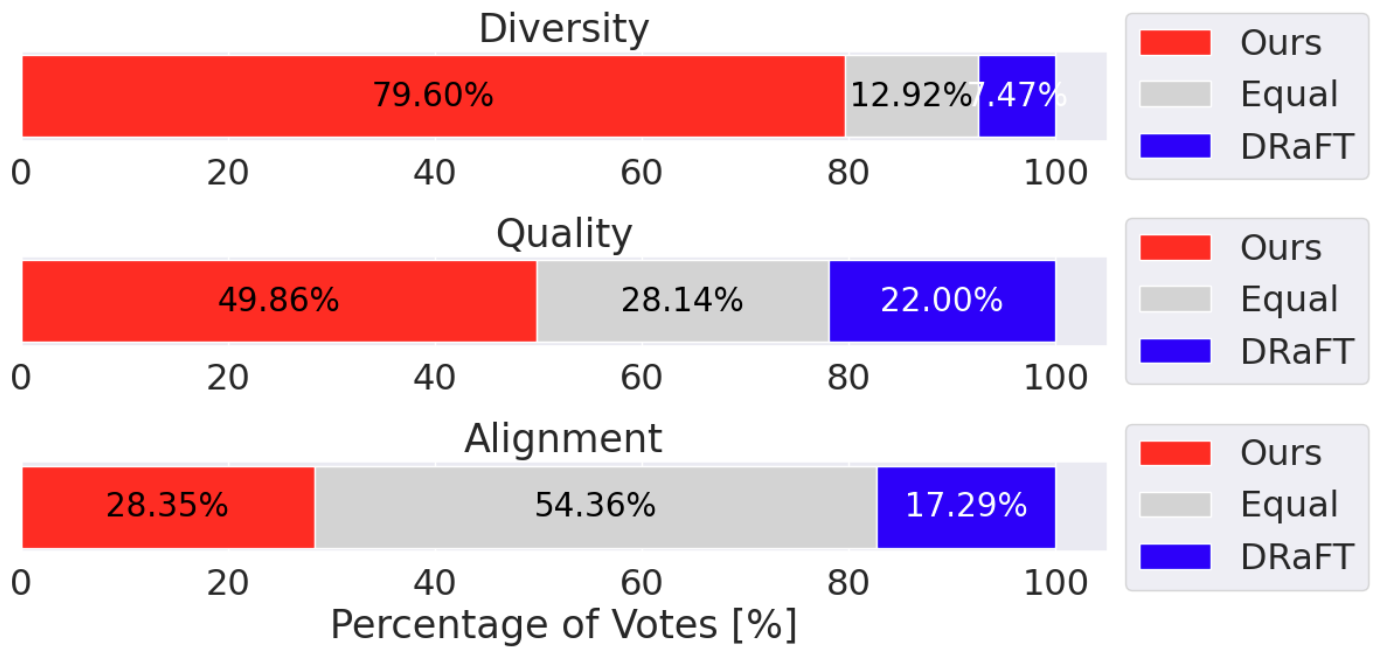}
        \subcaption{\textbf{User study with normalized vote counts for all users.} Our method (\methodabbv) demonstrates exceptional diversity and quality while preserving alignment even when all users are re-weighted to have equal contribution in votes.}
        \label{fig:userstudy-overall-norm}
    \end{minipage}
    \begin{minipage}{0.39\linewidth}
        \centering
        \includegraphics[width=\linewidth]{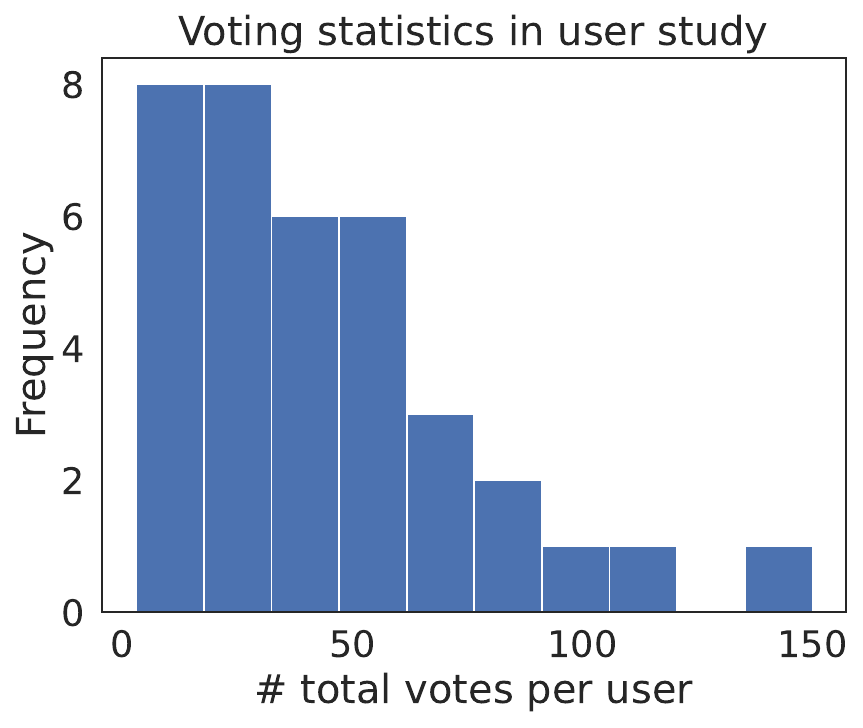}
        \subcaption{\textbf{Voting statistics.} The distribution of votes of all users who participated in the study. Few users voted disproportionately more than others, potentially skewing the user study results (in \cref{fig:userstudy-overall,fig:userstudy-indconfig}).}
        \label{fig:userstudy-votingstats}
    \end{minipage} 
    \begin{minipage}{\linewidth}
        \centering
        \includegraphics[width=0.32\linewidth]{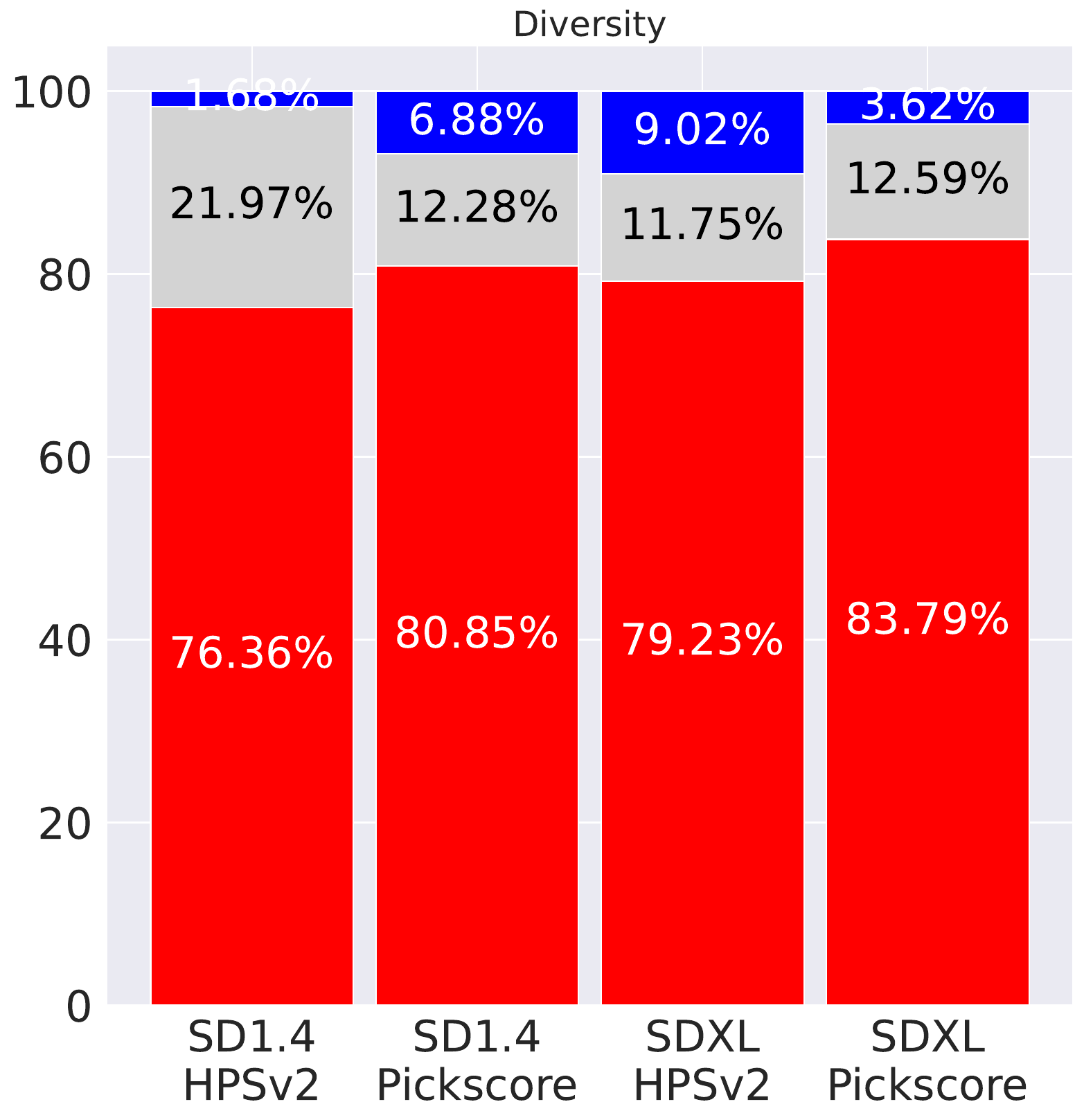}
        \includegraphics[width=0.32\linewidth]{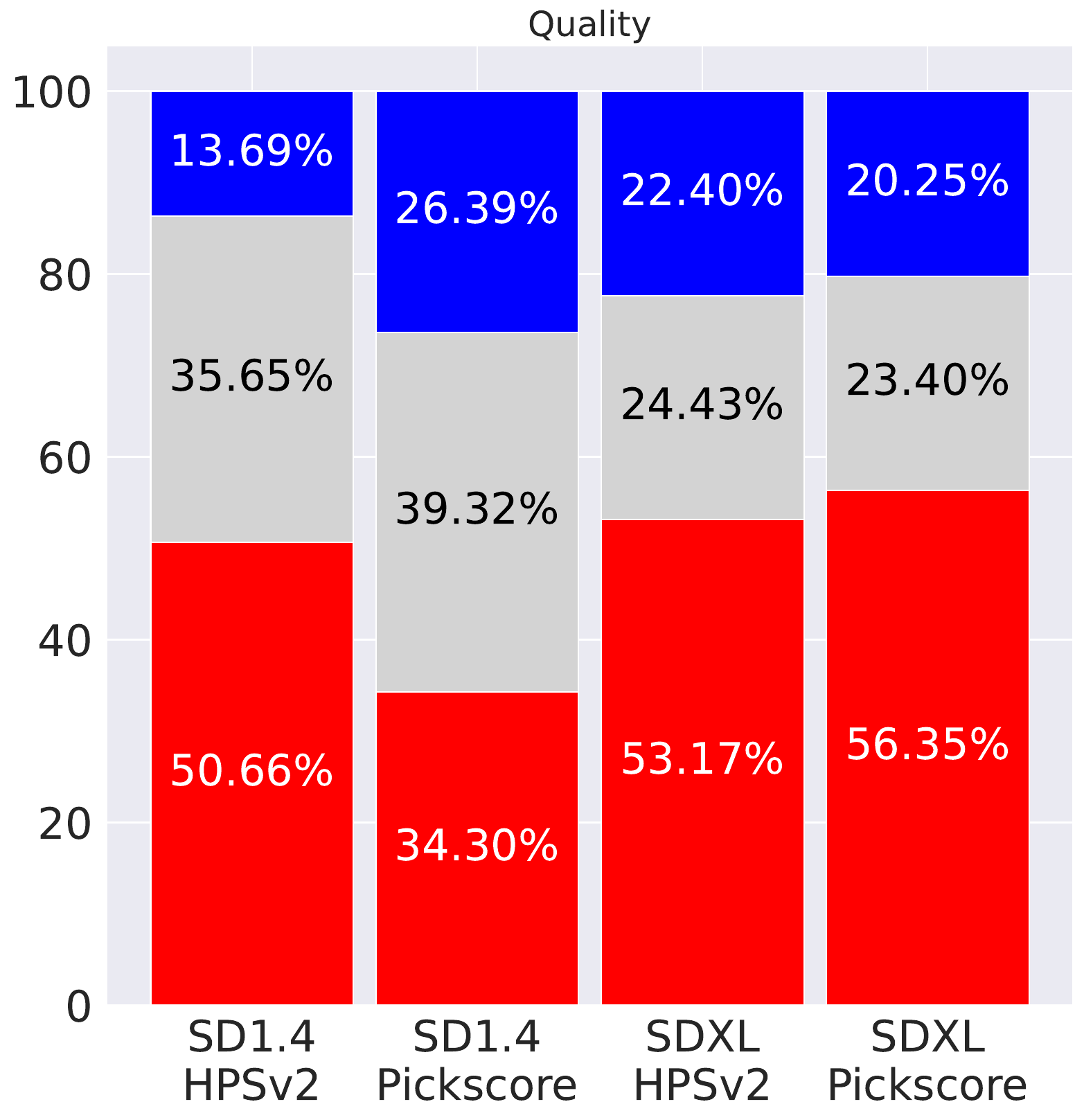}
        \includegraphics[width=0.32\linewidth]{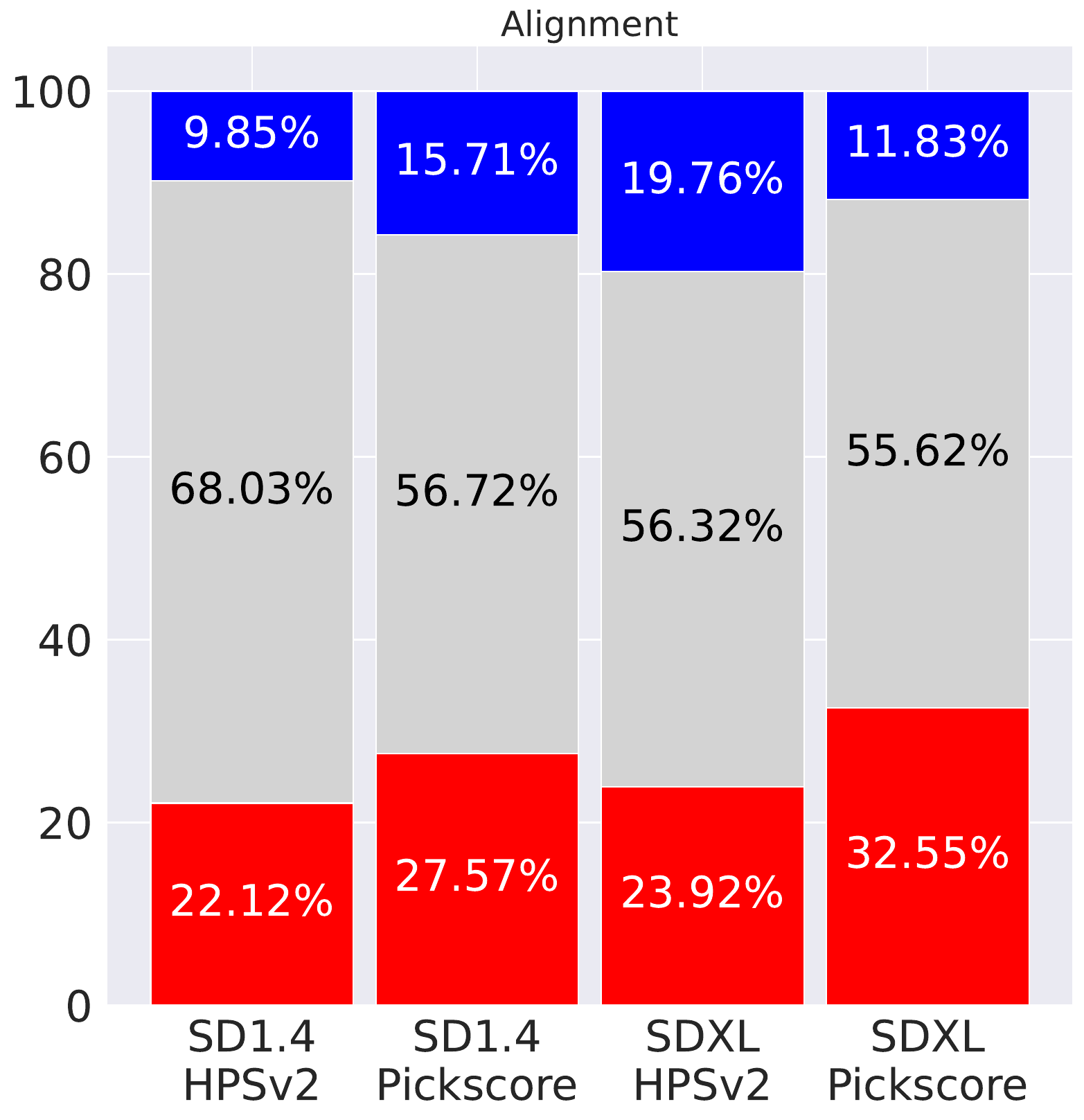}
        \begin{tabularx}{\linewidth}{YYY}
           Diversity & Quality & Alignment \\ 
        \end{tabularx}
        \subcaption{\textbf{User study results aggregated by DM+reward configuration with normalized votes} User preferences are consistent across models and reward functions, indicating the improvement is due to the {\methodabbv} itself.  Images best viewed zoomed in.}
        \label{fig:userstudy-indconfig-norm}
    \end{minipage}
    \caption{\textbf{User Study comparing {\methodabbv} with {\draft} with user-based reweighing of votes}. We recruited {\numusers} participants who compared the quality, diversity and alignment of {\methodabbv} and {\draft}, resulting in more than {\numvotes} votes.
    In contrast to \cref{fig:userstudy-overall}, this study reweighs each user contribution to have the same number of votes, to avoid skewing the user study in favor of users who voted more than others (\cref{fig:userstudy-votingstats}).
    }
    \label{fig:userstudynorm}
\end{figure}

\subsection{Limitations and Future Work}
\label{sec:limitations}
Although our work aims to study regularization techniques that are inference-time, and mitigate the `reference mismatch' problem, it does not completely eliminate it.
For example, in {\methodabbv}, the earlier sampling steps are dominated by the score function of the base model,
 with the underlying assumption that each mode of the original data distribution has a `high-reward region' close to it.
This assumption is usually true for stylistic changes (e.g. Pickscore or HPSv2 rewards that primarily alter the style of the images) but may not necessarily hold for text-to-image alignment (e.g. changing spatial relationships, counts or attributes of objects).
This assumption is also similar to the motivation used in works like SDEdit~\cite{sdedit}, which recover realistic images from a `partially noisy label'.
Consequently, we observe a huge improvement in quality, but do not improve text-to-image alignment significantly.
The lack of improvement of text-to-image alignment also raises questions about the efficacy of reward models trained on human preference data themselves.
Most reward models trained on large human preference datasets achieve maximum validation accuracies of 65-70$\%$~\cite{pickscore,hps,imagereward}, questioning the presence of any discernable, objective learnable signal present in these images w.r.t. alignment.
This forms the basis for future work by using finegrained alignment using Large Multimodal Models (LMMs) to identify and correct mistakes in the image that do not align with the prompt.
Another line of future work pertains to the choice of $\gamma$.
Since our choice of $\gamma$ allows immense flexibility of the interplay of the base and {\draft} sampling dynamics, a user interface can be built where users can choose from a predetermined set of $\gamma$ curves, followed by finetuning these curves using spline interpolation from points clicked on by the user.

\subsection{More intuition for {\method}}
\subsubsection{Asymmetric Mixing Dynamics}
\label{sec:mixingdynamics}
Consider a data distribution composed of only a finite set of images $\mbm_i \in \mathbb{R}^d$, i.e. $p_\td(\mbx) = \sum_{i=1}^{N} w_i \delta(\mbx - \mbm_i)$, $\sum_i w_i = 1$.
The average distance between any two `modes' of the distribution is $m_\td = \mathbb{E}_{i{\ne}j}\left[\| \mbm_i - \mbm_j \|_2\right]$.
Now, consider the forward diffusion $\mbx_t = \aat \mbx + \sigma_t \epsilon$, $\epsilon \in \mathcal{N}(0, 1)$.
The distribution is given by
\begin{align}
    q_t(\mbx_t) &= \int_{\mbx_0} q(\mbx_t | \mbx_0) p_\td(\mbx_0) \dd \mbx_0 \\
    &= \int_{\mbx_0} \frac{1}{\sqrt{2\pi \sigma_t^d}} \exp\left({-\frac{\| \mbx_t - \aat \mbx_0 \|_2^2}{2\sigma_t^2}}\right) \left(\sum_i w_i \delta(\mbx_0 - \mbm_i)\right) \dd \mbx_0 \\
    q_t(\mbx_t) &= \frac{1}{\sqrt{2\pi \sigma_t^d}} \sum_i w_i  \exp\left({-\frac{\| \mbx_t - \aat \mbm_i \|_2^2}{2\sigma_t^2}}\right) 
\end{align}

Therefore, $q_t(\mbx_t)$ is a Gaussian Mixture Model (GMM) with means $\mbm^{(t)}_i = \aat \mbm_i$. 
The average distance between the means is now $m_\td^{(t)} = \mathbb{E}_{i{\ne}j}\left[\| \mbm_i^{(t)} - \mbm_j^{(t)}\|_2\right] = \aat \mathbb{E}_{i{\ne}j}\left[\| \mbm_i - \mbm_j \|_2\right] = \aat m_\td$.
The average distance between any two modes decreases, as they collapse onto each other to form a unimodal Gaussian distribution. 
Two data modes are therefore easy to tell apart from each other for small $t$ instead of larger $t$.
Therefore, the score matching objective must be most discriminative in terms of mode recovery from the later stages, where samples from $q_t(\mbx_t)$ cannot be reliably distinguished from each other in terms of the mode of the data distribution that they originated from.
We demonstrate this using a simple example. \\

\begin{figure}
    \centering
    \includegraphics[width=\linewidth]{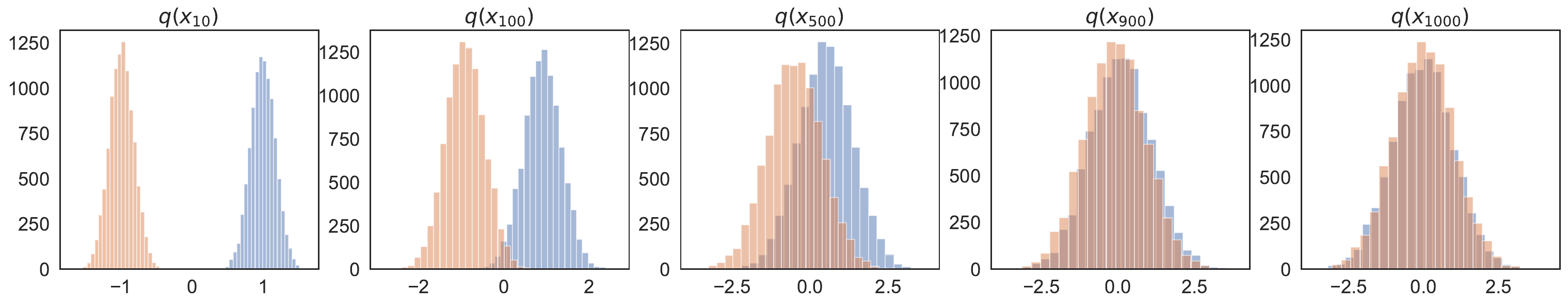}
    \includegraphics[width=\linewidth]{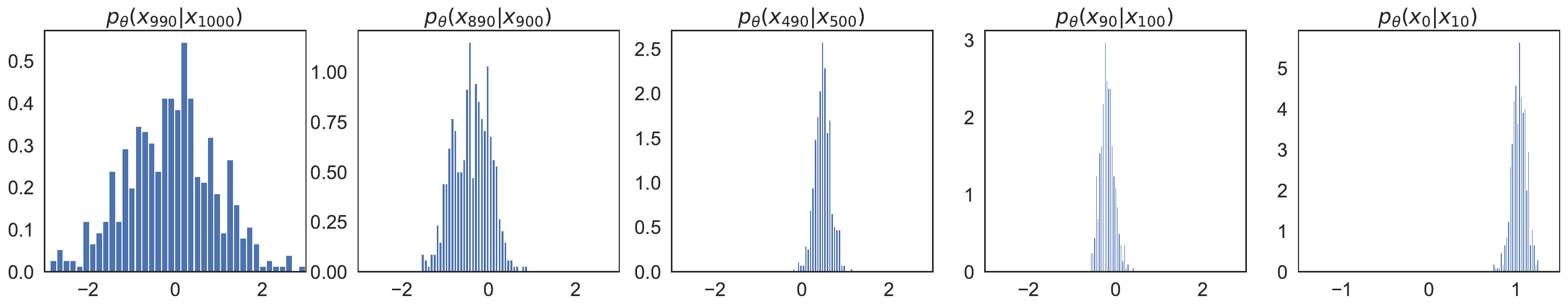}
    \includegraphics[width=\linewidth]{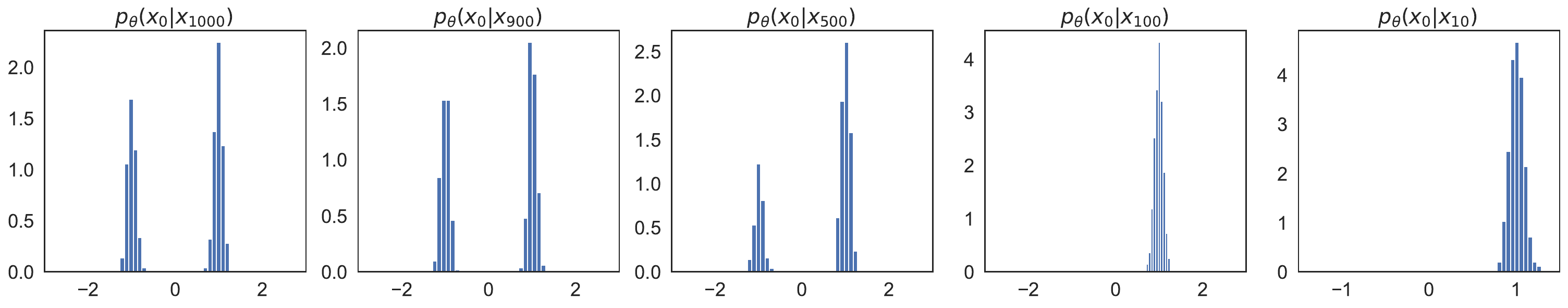}
    \caption{\textbf{Toy example showing the sample distribution during forward and reverse time sampling for a 1D problem}.
    Top row shows that data modes tend to mix more with larger $t$, indicating that mode-recovery behavior must emerge during later timesteps.
    Middle and bottom rows show that if a data is sampled from $q_t$ for higher $t$, then multiple modes of the data are covered from Langevin dynamics, motivating the use of the base model to 
    guide the initial phase of sampling from the reverse-time SDE, and using {\draft} for local finetuning.
    }
    \label{fig:toyreversesampling}
\end{figure}

\textbf{Toy problem illustrating mode-recovering nature of sampling dynamics.}
Consider a toy example of a 1D Gaussian Mixture distribution with two modes as data distribution, i.e. $p_\td(\mbx) = 0.5(\mathcal{N}(1, 0.05) + \mathcal{N}(-1, 0.05))$.
We consider a 1000 forward diffusion steps for this problem. 
Top row in ~\cref{fig:toyreversesampling} shows samples from $q_t(\mbx_t)$ with the colors representing the mode of the original $p_\td$ distribution from which the sample of $q_t$ is generated.
As $t$ increases, the samples mix with each other and become less distinguishable.
To show the mode recovering behavior, we sample one data point from $q_t(\mbx_t)$ and plot the distribution of $p(\mbx_{t-10}|\mbx_t)$ using Langevin dynamics with the ground-truth score function.
The intuition is that if $p(\mbx_{t-10}|\mbx_t)$ is high variance, then the subsequent samples from $p(\mbx_{t-20}|\mbx_{t-10})$ will discover different modes, and eventually the true data distribution.
Middle row in ~\cref{fig:toyreversesampling} shows the samples from $p(\mbx_{t-10}|\mbx_t)$ for different $t$, using the ground-truth score function, and bottom row shows the samples from $p(\mbx_{0}|\mbx_t)$.
The most high variance behavior is shown for larger values of $t$, and only a local finetuning to a particular mode of the data for smaller values of $t$.
This motivates our regularization for {\method}.
During the earlier stages of reverse stage sampling (high values of $t$) when the mode-recovering behavior is the highest, we let the sampling dynamics be governed by the base model.
This helps in early recovery of multiple modes of the data.
During the later stages (smaller $t$), the score function from {\draft} dominates the convergence of these samples to the nearest high-reward samples.

\begin{figure*}[ht!]
    \centering
    \begin{minipage}{\linewidth}
        \fbox{\includegraphics[width=\linewidth]{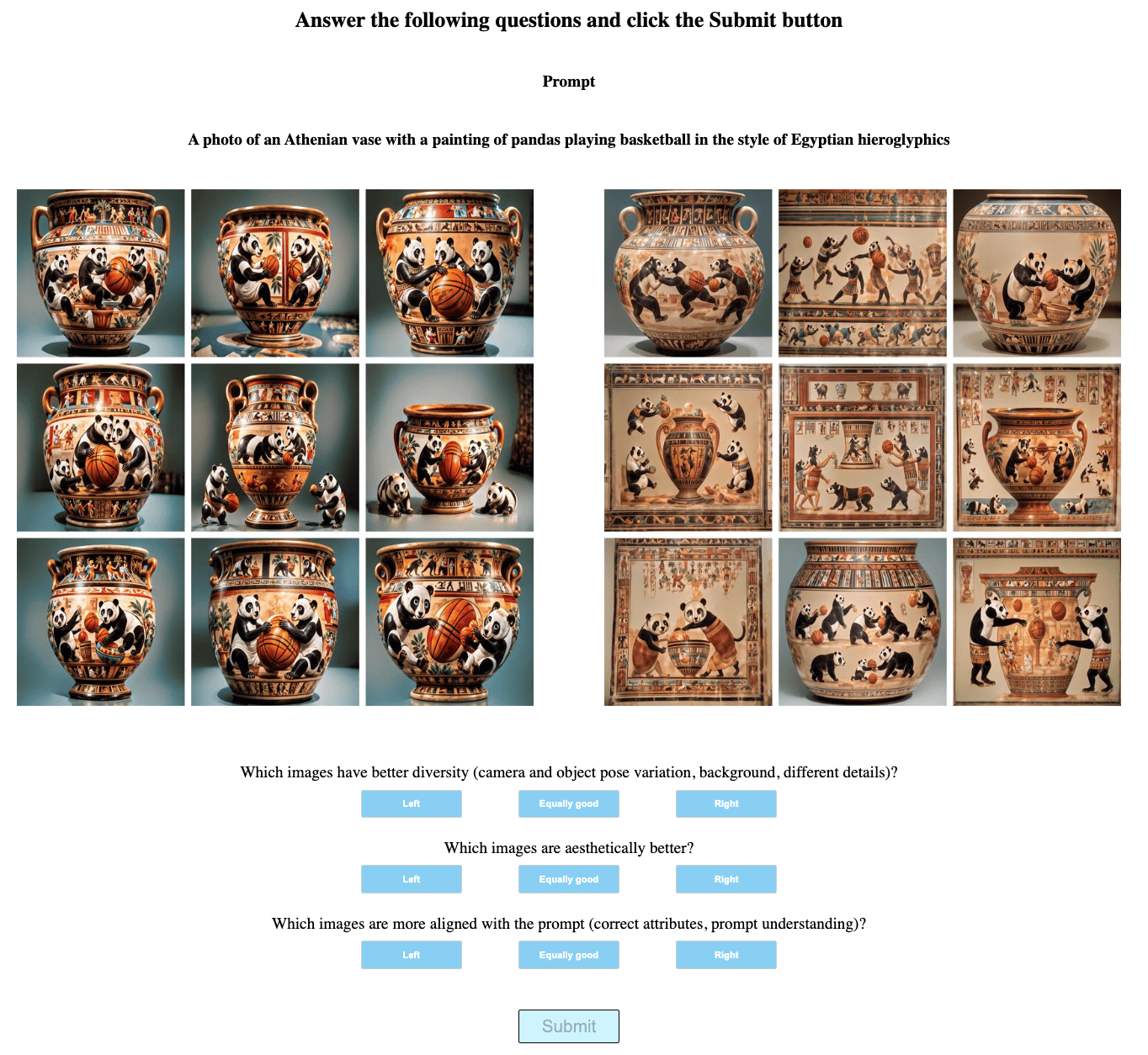}}
    \end{minipage}
    \caption{\textbf{Minimalistic web app designed for user study.} A web app built with Flask dynamically selects a prompt from the coverage dataset, selects nine random indices from the 50 generated images without replacement, randomly shuffles the order, and displays the images side by side.
    This is followed by three questions. Users click on either option and hit Submit. Upon hitting submit, the vote is recorded and a new set of images are shown.
    }
    \label{fig:userstudyui}
\end{figure*}

\subsection{Implementation Details}
\label{sec:impldetails}

\subsubsection{Training details}
We initialize the model with learnable LoRA parameters in the UNet of the latent diffusion model, and keep all other components (text encoders, VAE decoder, reward model) frozen.
All models are trained on 8 NVIDIA H100 GPUs.
We use a micro batch size of 1 with 4 gradient accumulation steps. 
For each model, we generate images of the recommended resolution, i.e. images with resolution 512$\times$512 for SDv1.4 and images with resolution 1024$\times$1024 for SDXL.
For all models, a constant learning rate of 2.5e-4 is used, without any warmup, annealing, decay or warm restarts.
We use the AdamW optimizer for all experiments with $\beta_1 = 0.9, \beta_2 = 0.999$, and a gradient clipping parameter of $0.1$.
To save memory, all models are trained with BF16 mixed precision training, with DDP level parallelism.

\begin{figure}[ht!]
    \begin{tabularx}{\linewidth}{YYYYYY}
        ReNO & DOODL & ReFL & {\draft} & Ours & SDXL Base \\
    \end{tabularx}
    \begin{minipage}{\linewidth}
        \centering
        \includegraphics[width=\linewidth]{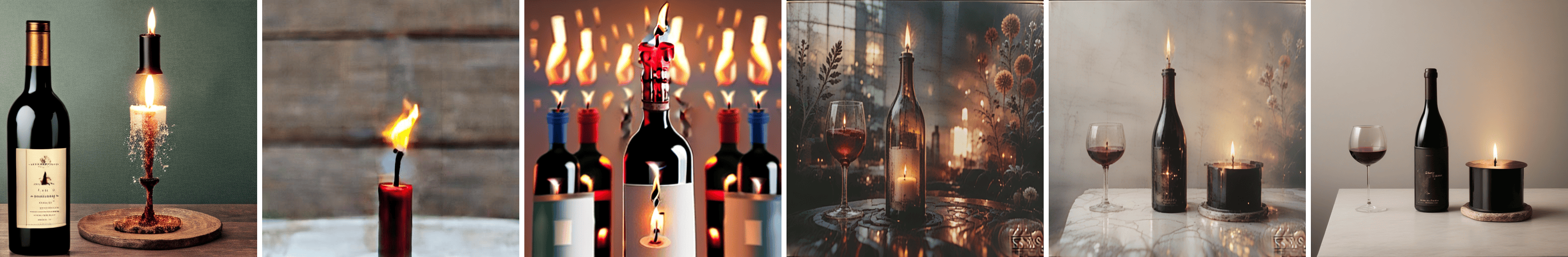}
        \textbf{Prompt}: \input{images/qualitative-allmethods/prompt00513.txt}
    \end{minipage}
    \vspace*{3pt}

    \begin{minipage}{\linewidth}
        \centering
        \includegraphics[width=\linewidth]{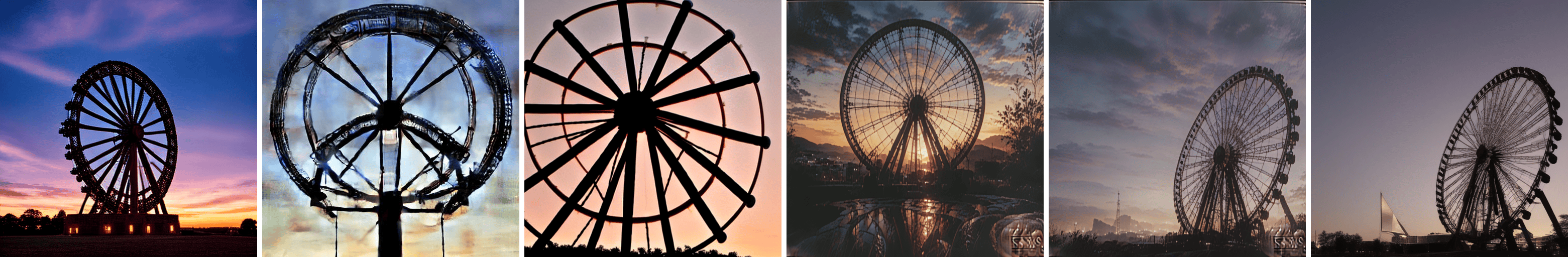}
        \textbf{Prompt}: \input{images/qualitative-allmethods/prompt01522.txt}
    \end{minipage}
    \vspace*{3pt}

    \begin{minipage}{\linewidth}
        \centering
        \includegraphics[width=\linewidth]{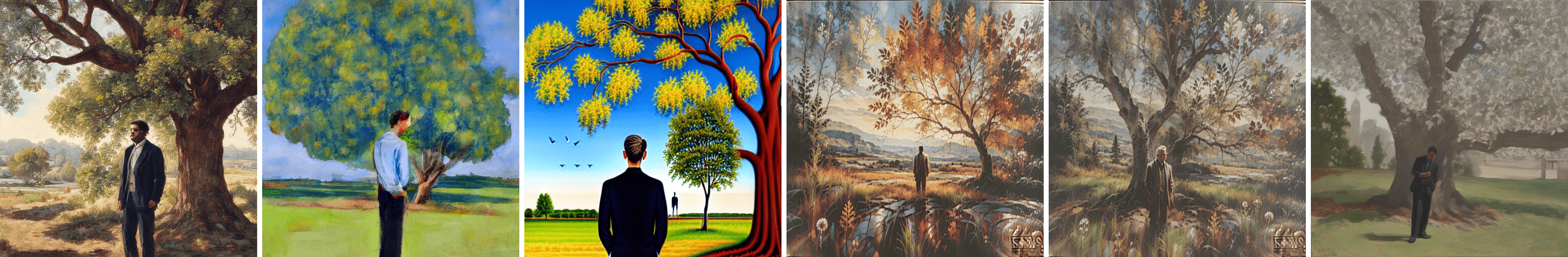}
        \textbf{Prompt}: \input{images/qualitative-allmethods/prompt01525.txt}
    \end{minipage}
    \vspace*{3pt}

    \begin{minipage}{\linewidth}
        \centering
        \includegraphics[width=\linewidth]{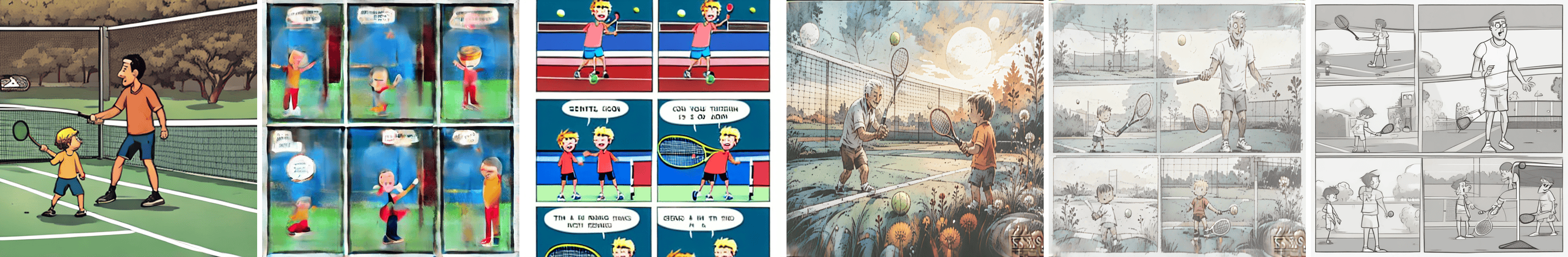}
        \textbf{Prompt}: \input{images/qualitative-allmethods/prompt01532.txt}
    \end{minipage}
    \vspace*{3pt}

    \begin{minipage}{\linewidth}
        \centering
        \includegraphics[width=\linewidth]{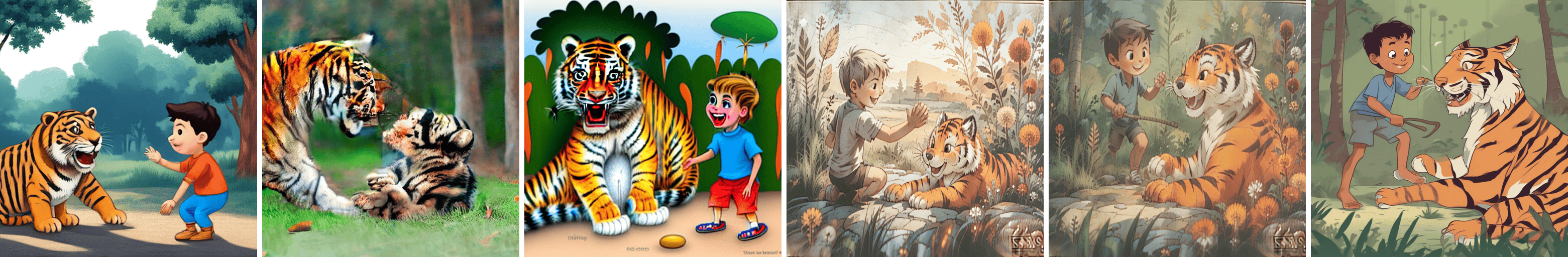}
        \textbf{Prompt}: \input{images/qualitative-allmethods/prompt01533.txt}
    \end{minipage}
    \vspace*{3pt}

    \begin{minipage}{\linewidth}
        \centering
        \includegraphics[width=\linewidth]{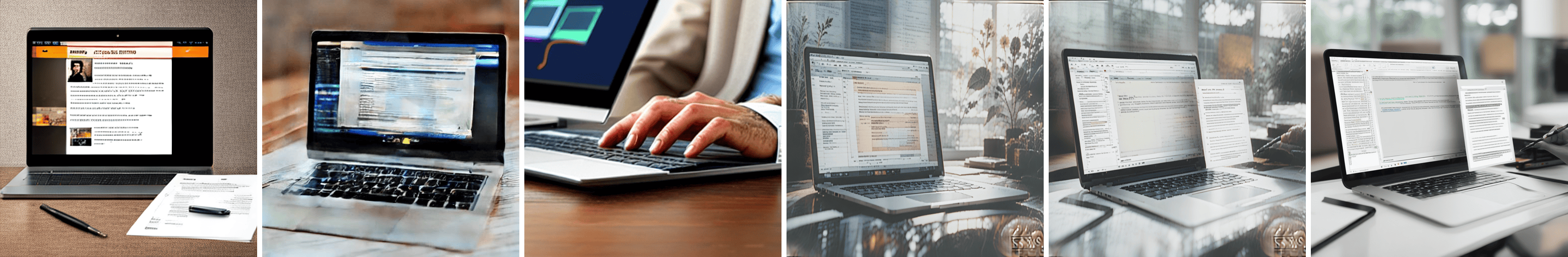}
        \textbf{Prompt}: \input{images/qualitative-allmethods/prompt01616.txt}
    \end{minipage}
    \vspace*{3pt}

    \caption{\textbf{Qualitative comparison of reward finetuning methods on PartiPrompt prompt dataset}. 
    Qualitatively, our model inherits the large-scale details from the base model, inheriting its diversity, but generated images follow stylistic aspects of the {\draft} model.
    }
    \label{fig:qualall-1}
\end{figure}

\begin{figure}[ht!]
    \begin{tabularx}{\linewidth}{YYYYYY}
        ReNO & DOODL & ReFL & {\draft} & Ours & SDXL Base \\
    \end{tabularx}
    \begin{minipage}{\linewidth}
        \centering
        \includegraphics[width=\linewidth]{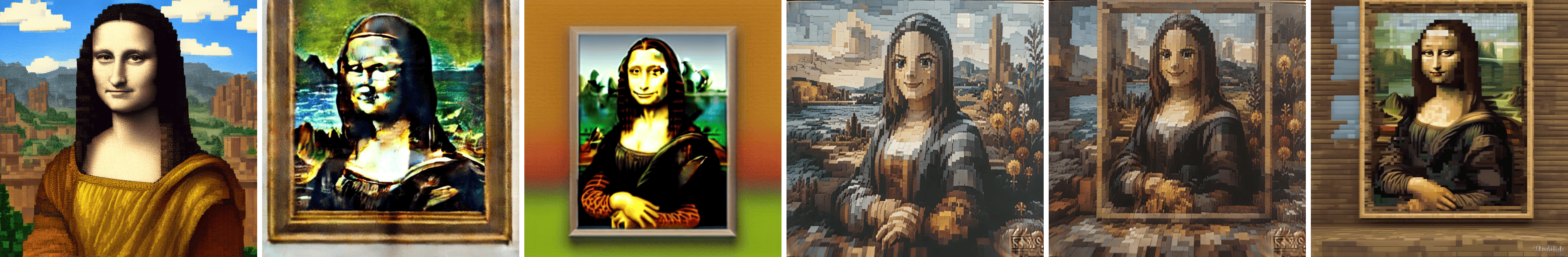}
        \textbf{Prompt}: \input{images/qualitative-allmethods/prompt01537.txt}
    \end{minipage}
    \vspace*{3pt}

    \begin{minipage}{\linewidth}
        \centering
        \includegraphics[width=\linewidth]{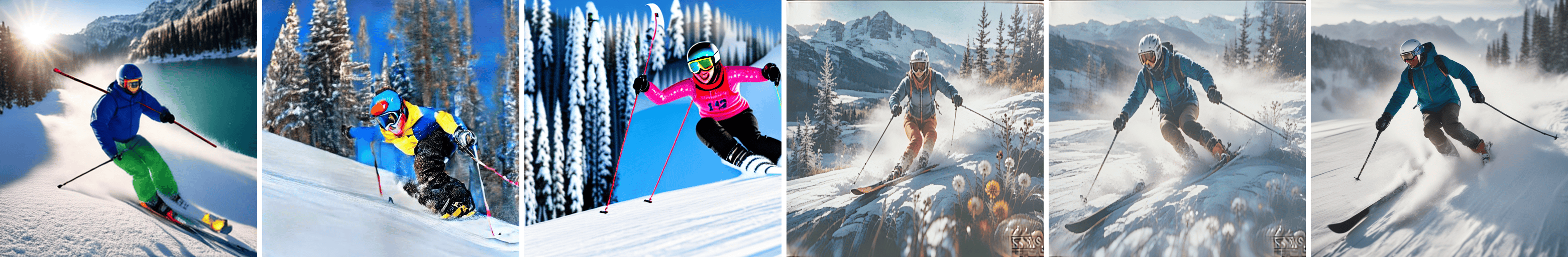}
        \textbf{Prompt}: \input{images/qualitative-allmethods/prompt01538.txt}
    \end{minipage}
    \vspace*{3pt}

    \begin{minipage}{\linewidth}
        \centering
        \includegraphics[width=\linewidth]{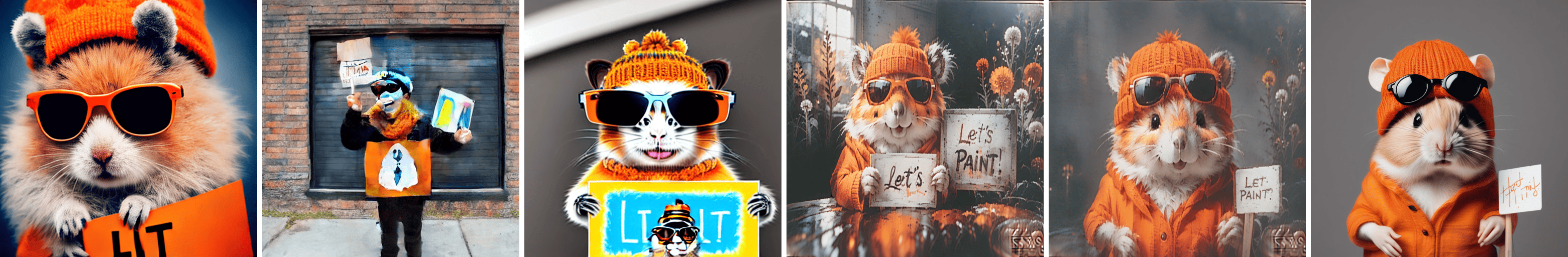}
        \textbf{Prompt}: \input{images/qualitative-allmethods/prompt01574.txt}
    \end{minipage}
    \vspace*{3pt}

    \begin{minipage}{\linewidth}
        \centering
        \includegraphics[width=\linewidth]{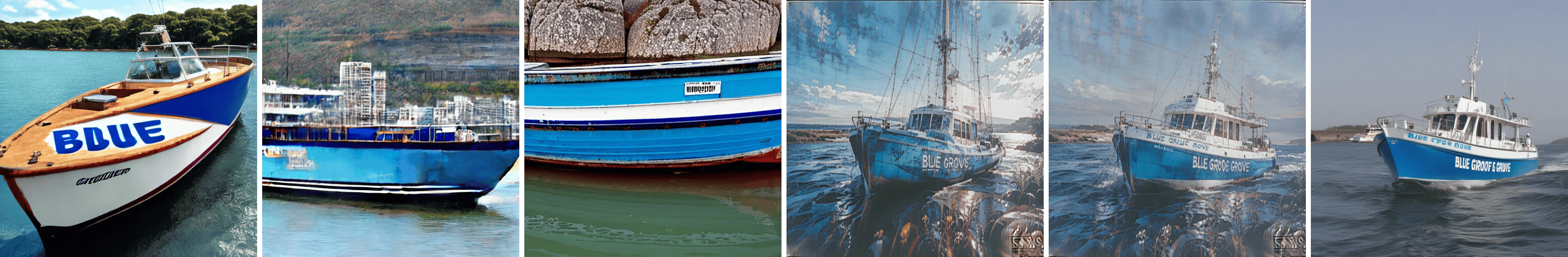}
        \textbf{Prompt}: \input{images/qualitative-allmethods/prompt01580.txt}
    \end{minipage}
    \vspace*{3pt}

    \begin{minipage}{\linewidth}
        \centering
        \includegraphics[width=\linewidth]{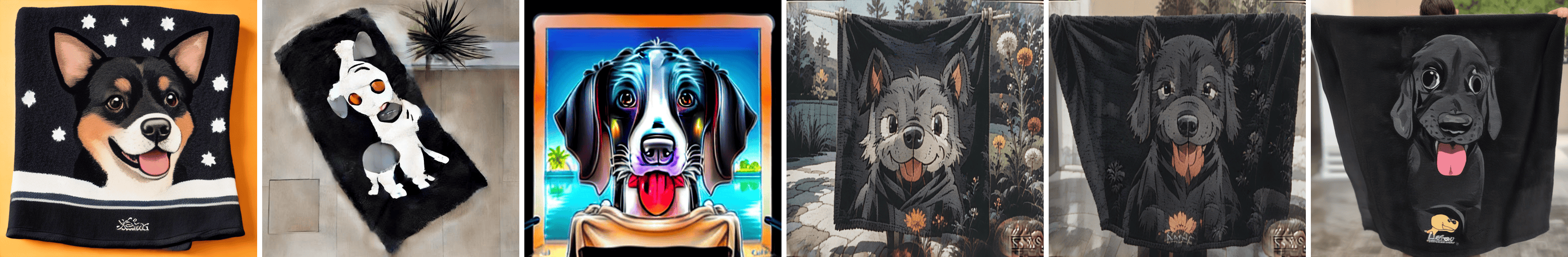}
        \textbf{Prompt}: \input{images/qualitative-allmethods/prompt01592.txt}
    \end{minipage}
    \vspace*{3pt}

    \begin{minipage}{\linewidth}
        \centering
        \includegraphics[width=\linewidth]{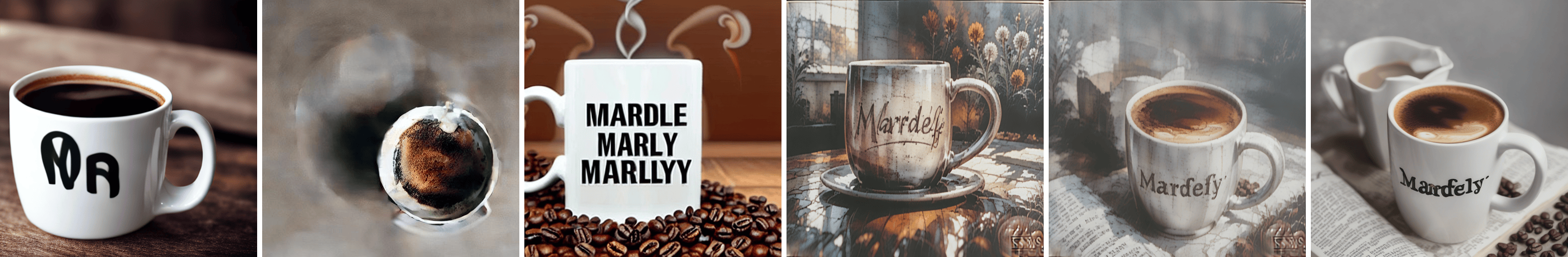}
        \textbf{Prompt}: \input{images/qualitative-allmethods/prompt01595.txt}
    \end{minipage}
    \vspace*{3pt}

    \caption{\textbf{Qualitative comparison of reward finetuning methods on PartiPrompt prompt dataset}.
    Qualitatively, our model inherits the large-scale details from the base model, inheriting its diversity, but generated images follow stylistic aspects of the {\draft} model.
    }
    \label{fig:qualall-2}
\end{figure}

\subsubsection{Reward Models}
The HPSv2~\cite{hps} model is trained on the Human Preference Dataset v2.
HPDv2 is a large-scale dataset with 798k binary preference choices for 434k images. Each pair contains two images generated by different models using the same prompt, and is annotated with a binary choice made by one annotator.
The prompts are collected from DrawBench and DiffusionDB containing user-written prompts, the latter of which is `sanitized' using ChatGPT to remove biases arising due to style words and leads to a reduced NSFW score.
The PickScore~\cite{pickscore} model is trained on the Pick-a-Pic dataset.
The Pick-a-Pic dataset was created using a web application where users can write a prompt and are presented with two generated images, and they are asked to select their preferred option or indicate a tie if they have no strong opinion about either image. 
Although moderation is done to remove users who generate NSFW images or make judgements at a rapid pace (indicating low quality or random preference), the Pick-a-Pic model contains a lot of NSFW prompts.
Consequently, we observe more NSFW generated images when finetuned with the Pickscore model compared to the HPSv2 model, even when prompts are not NSFW.
However, the Pickscore model also generates more aesthetically pleasing images.

\subsubsection{Choice of $\gamma$}
Unless the KL parameter $\lambda$ or LoRA scaling parameter $\alpha'$ that are scalar quantities, {\methodabbv} requires a function $\gamma(t)$.
In this paper, we consider the family of functions $$ \gamma_{p,T}(t) = 1 - \left(\frac{T-t}{T}\right)^p $$
For $p=1$, the weighing is simply linear, i.e. $\gamma_{1,T}(t) = \frac{t}{T}$. For $p>1$, the power term quickly vanishes and the sampling dynamics are governed by the base model for more timesteps.
For $p<1$, the power term remains close to 1, therefore diminishing the effect of the base model in the earlier timesteps.
We consider $p = 1, 1.25, 1.5, 2, 3, 4, 5$ for the ablations in the paper, and $p = 2$ for the qualitative studies.
However, we note that more sophisticated $\gamma$ scheduling is possible, i.e. $\gamma_{p,T}(t) = H\left(\frac{t}{T} - p\right)$, or $\gamma_{\kappa,T}(t) = \sigma\left(\kappa(t - T/2)\right) $ where $H$ is the Heaviside step function, and $\sigma$ is the sigmoid function.
We leave exploration of these sophisticated scheduling functions to future work. 

\begin{figure}[ht!]
    \centering
    \begin{minipage}{0.32\linewidth}
        \subcaption{\draft}
        \centering
        \includegraphics[width=\textwidth]{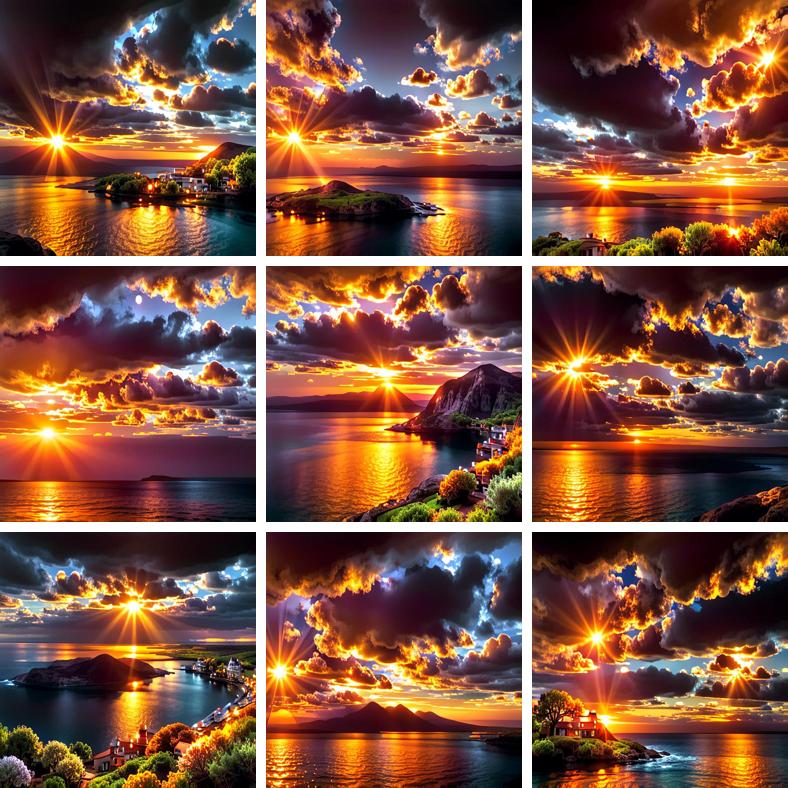} 
    \end{minipage} \hfill
    \begin{minipage}{0.32\linewidth}
        \subcaption{Base model}
        \centering
        \includegraphics[width=\textwidth]{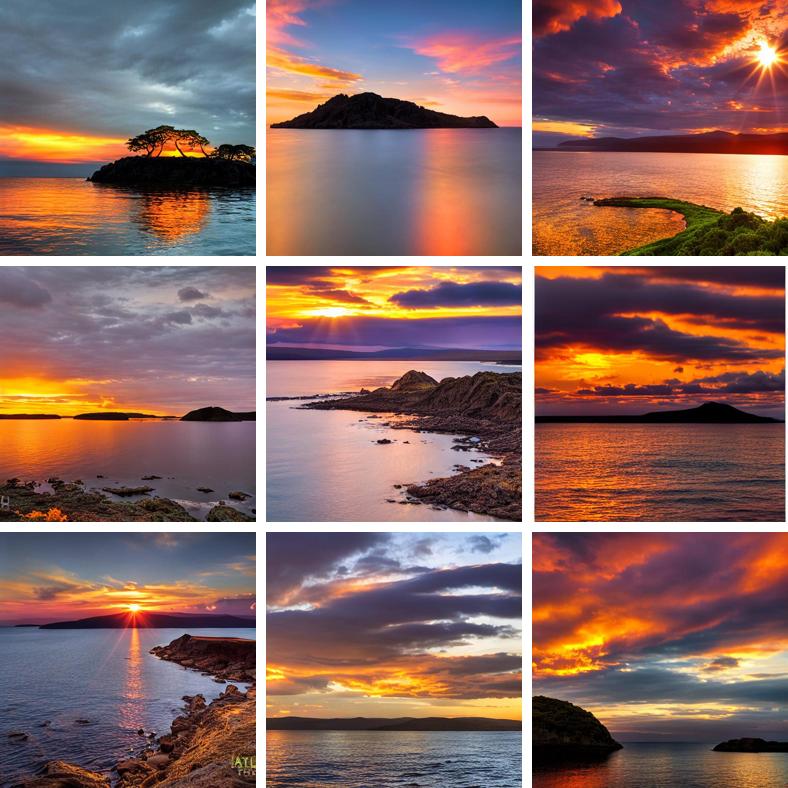} 
    \end{minipage} \hfill
    \begin{minipage}{0.32\linewidth}
        \subcaption{Ours}
        \centering
        \includegraphics[width=\textwidth]{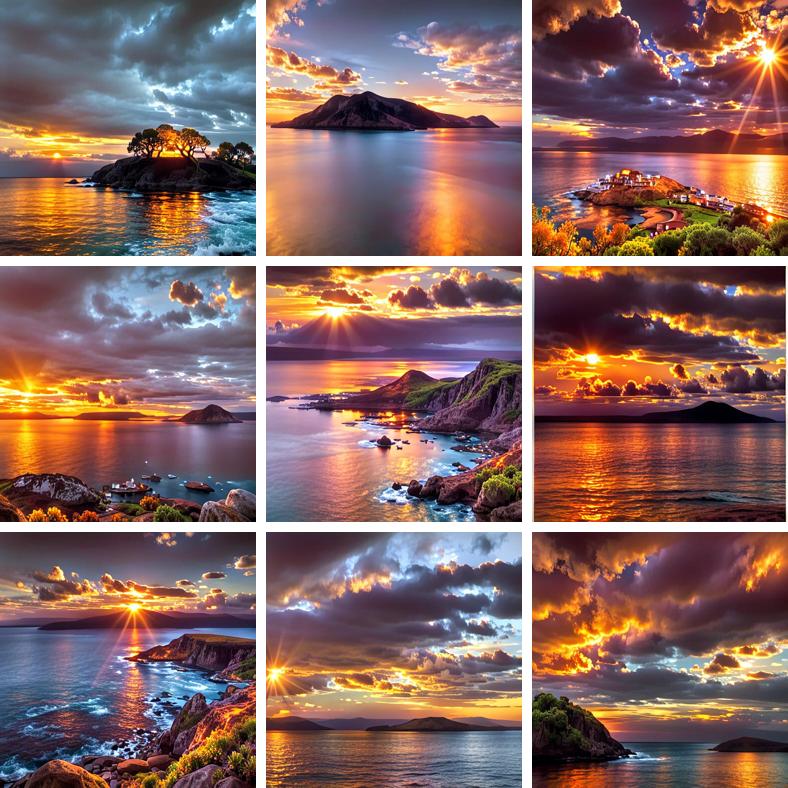}
    \end{minipage}
    \input{images/qualitative/sd_hps/prompt-1.txt} \\
    \textcolor{white}{filler} \\
    \begin{minipage}{0.32\linewidth}
        \centering
        \includegraphics[width=\textwidth]{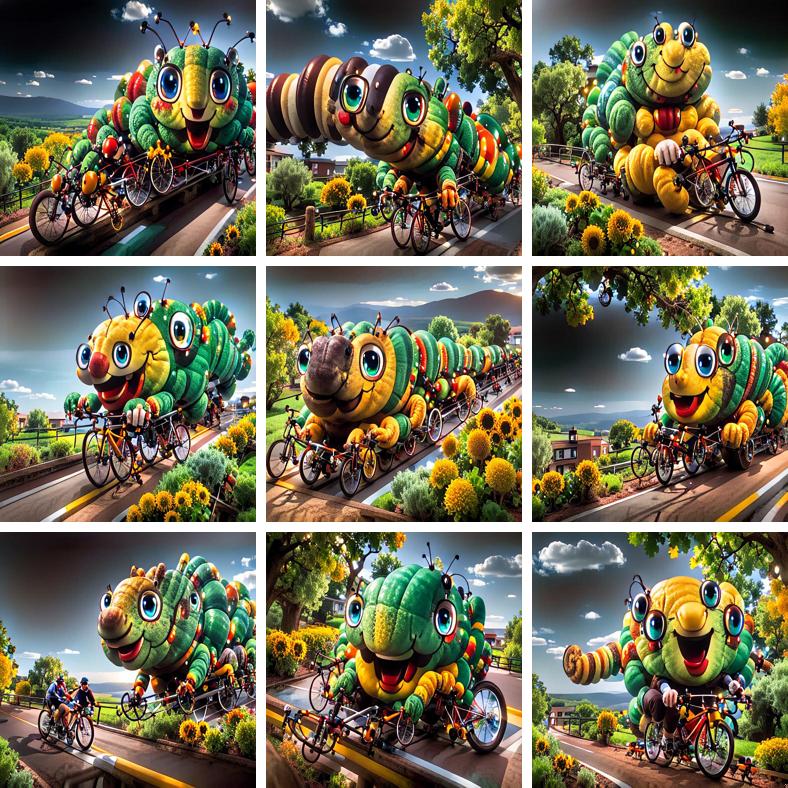} 
    \end{minipage} \hfill
    \begin{minipage}{0.32\linewidth}
        \centering
        \includegraphics[width=\textwidth]{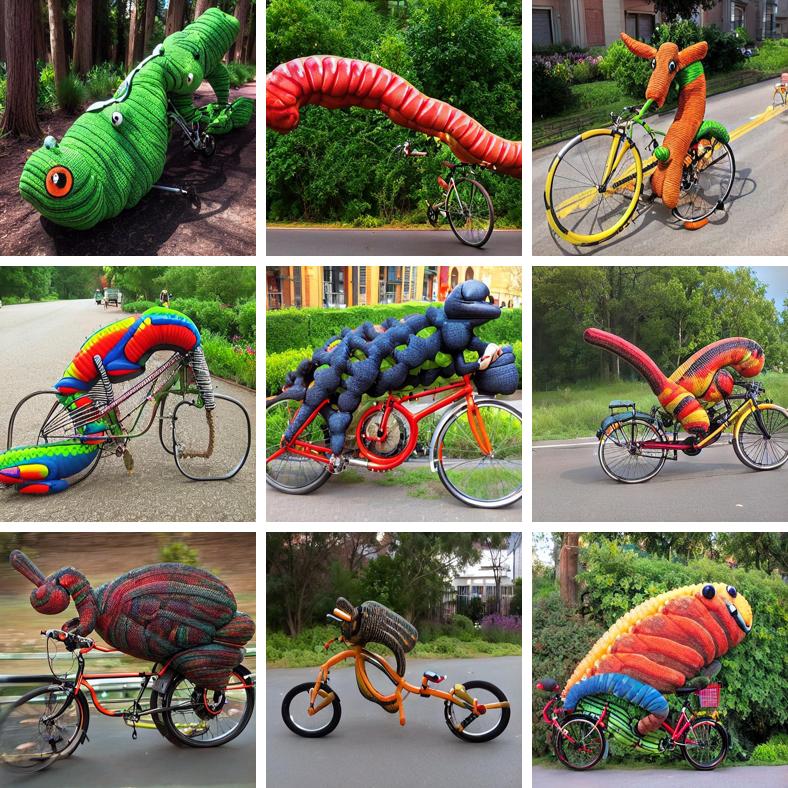} 
    \end{minipage} \hfill
    \begin{minipage}{0.32\linewidth}
        \centering
        \includegraphics[width=\textwidth]{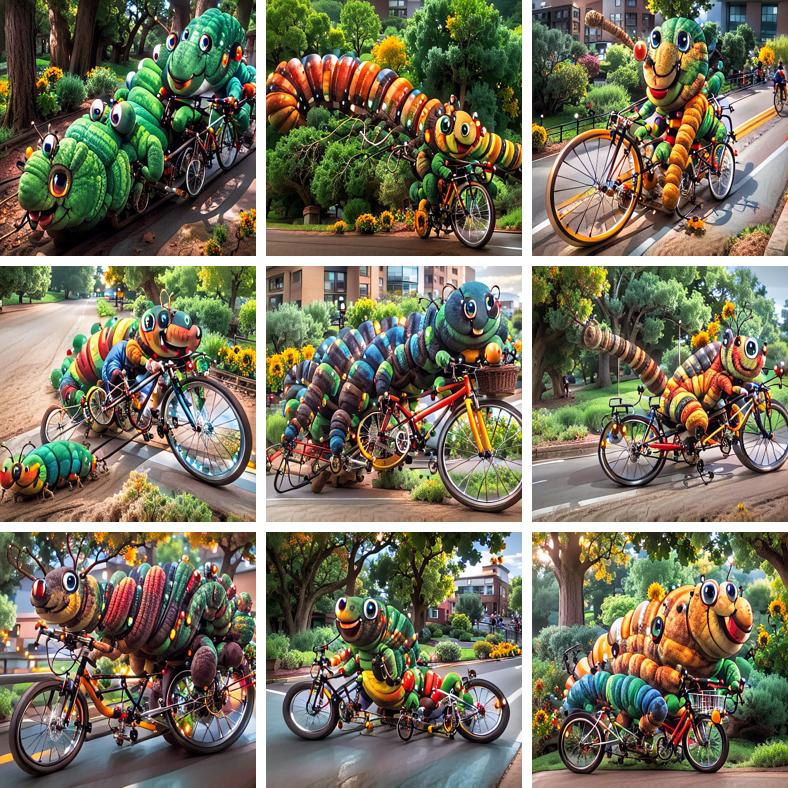}
    \end{minipage}
    \input{images/qualitative/sd_hps/prompt-2.txt} \\
    \textcolor{white}{filler} \\
    \begin{minipage}{0.32\linewidth}
        \centering
        \includegraphics[width=\textwidth]{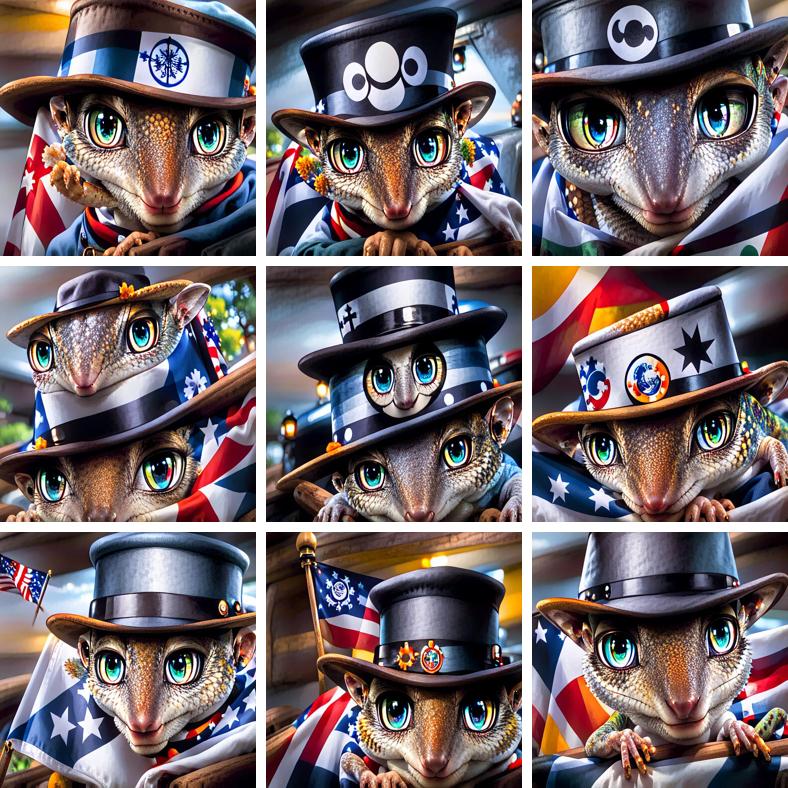} 
    \end{minipage} \hfill
    \begin{minipage}{0.32\linewidth}
        \centering
        \includegraphics[width=\textwidth]{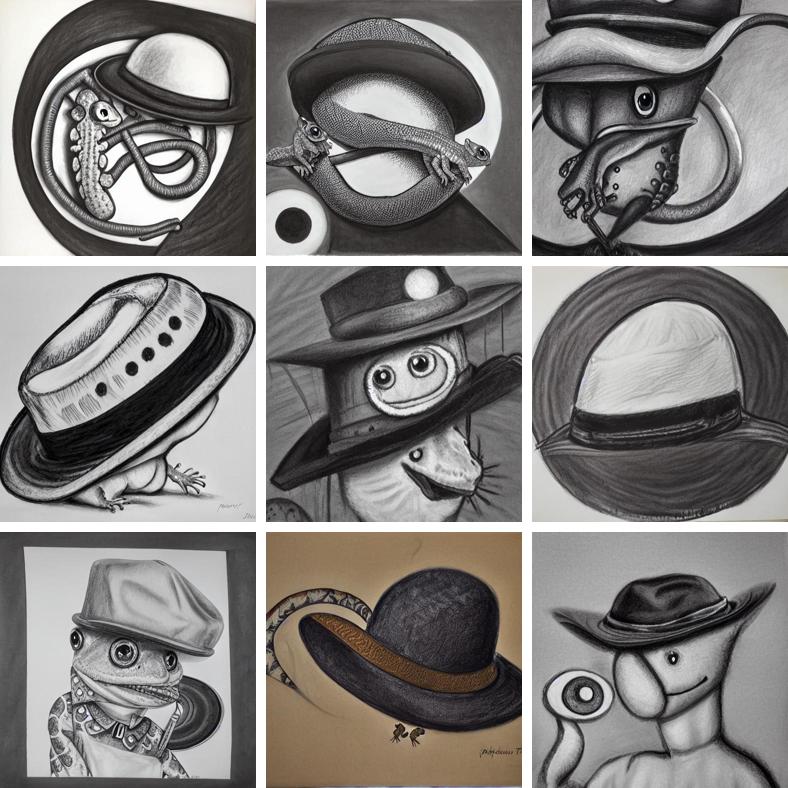} 
    \end{minipage} \hfill
    \begin{minipage}{0.32\linewidth}
        \centering
        \includegraphics[width=\textwidth]{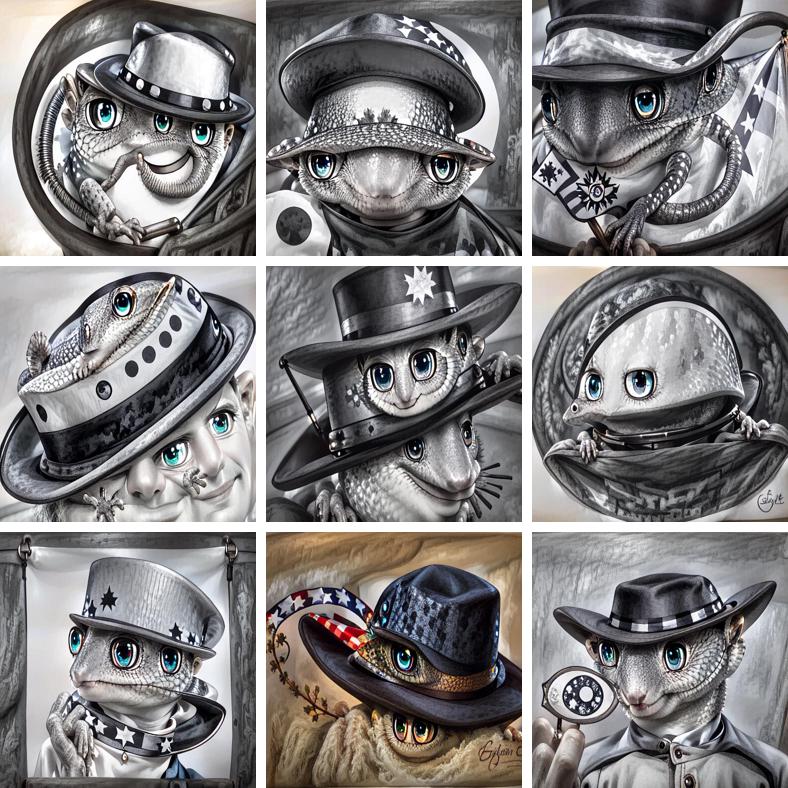}
    \end{minipage}
    \input{images/qualitative/sd_hps/prompt-3.txt} \\
    \textcolor{white}{filler} \\
    \caption{\textbf{Qualitative comparison of {\draft} and {\methodabbv}}. Three columns of rows show set of nine images generated from the same seeds by the (a){\draft}, (b)Base model, and (c)Our model. 
    Our method preserves the diversity of details of different images, while adding aesthetic quality leading to both high rewards and high user preference.
    }
    \label{fig:qualdiv1}
\end{figure}

\begin{figure}[ht!]
    \centering
    \begin{minipage}{0.32\linewidth}
        \huge{\subcaption{\draft}}
        \centering
        \includegraphics[width=\textwidth]{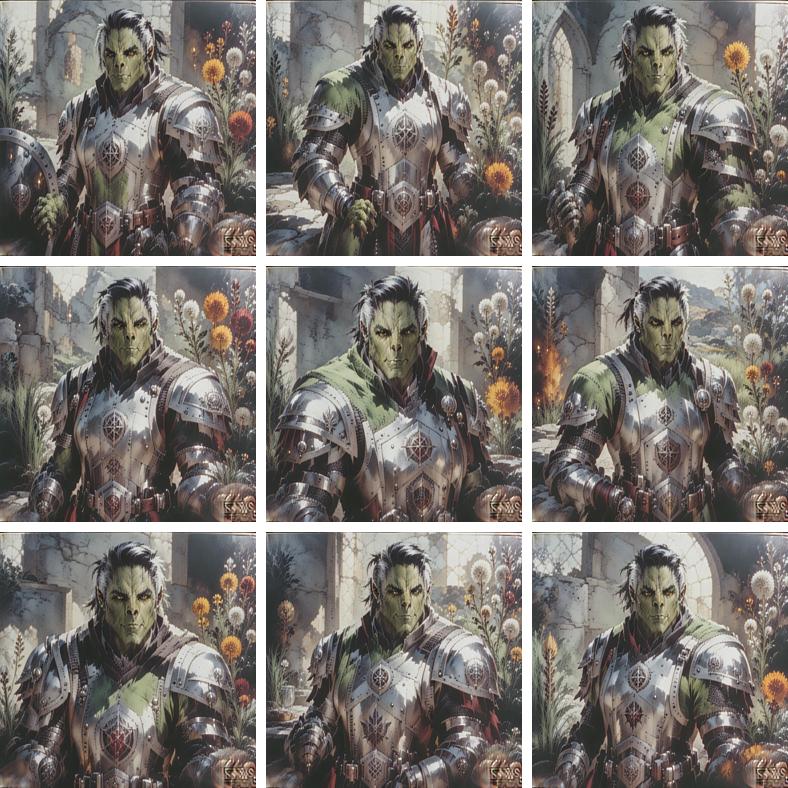} 
    \end{minipage} \hfill
    \begin{minipage}{0.32\linewidth}
        \subcaption{Base model}
        \centering
        \includegraphics[width=\textwidth]{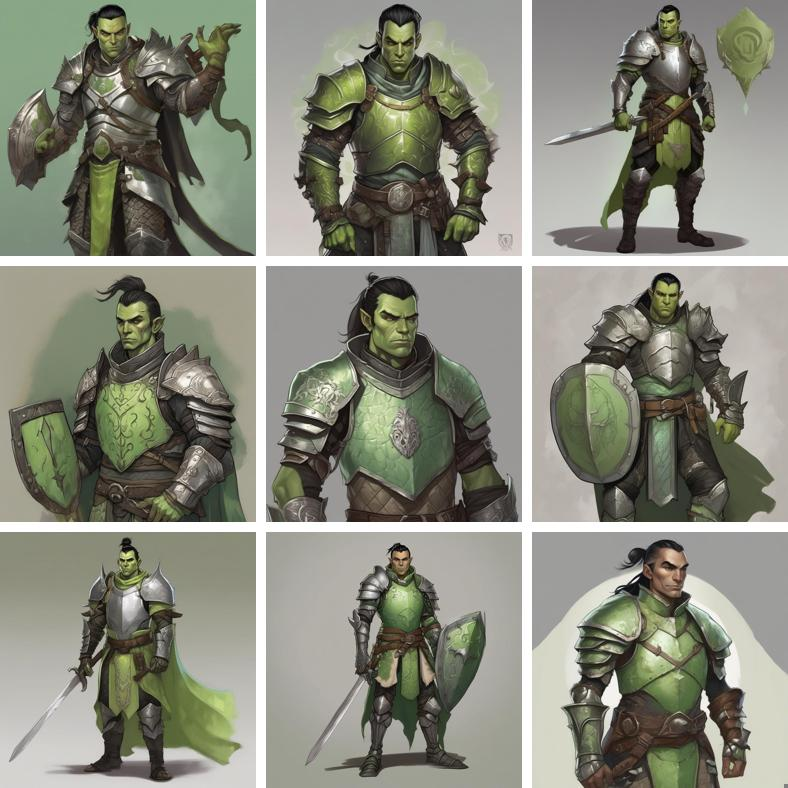} 
    \end{minipage} \hfill
    \begin{minipage}{0.32\linewidth}
        \subcaption{Ours}
        \centering
        \includegraphics[width=\textwidth]{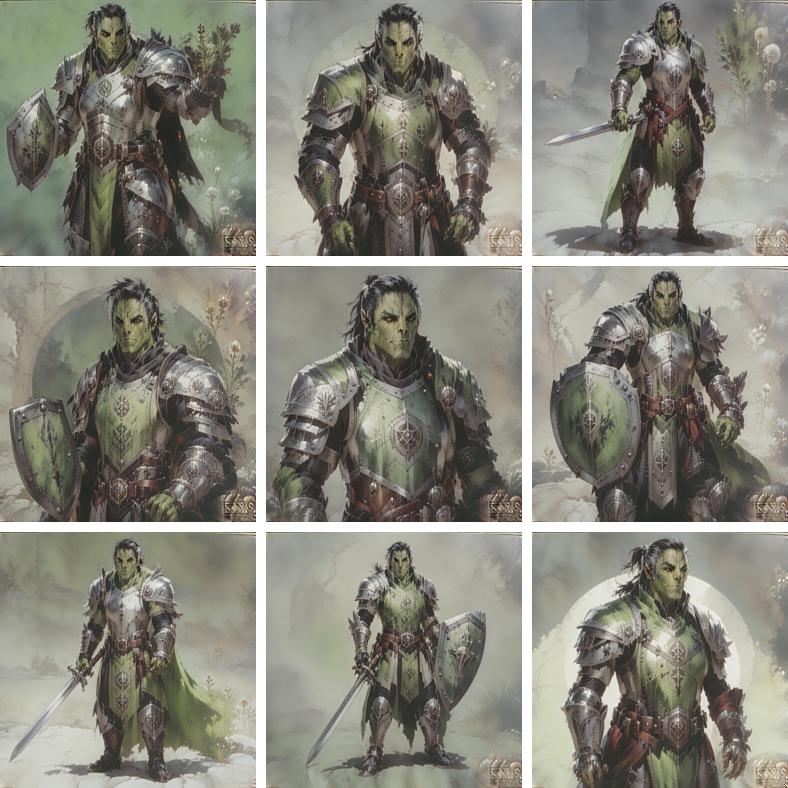}
    \end{minipage}
    \input{images/qualitative/sdxl_pickscore/prompt-1.txt} \\
    \textcolor{white}{filler} \\
    \begin{minipage}{0.32\linewidth}
        \centering
        \includegraphics[width=\textwidth]{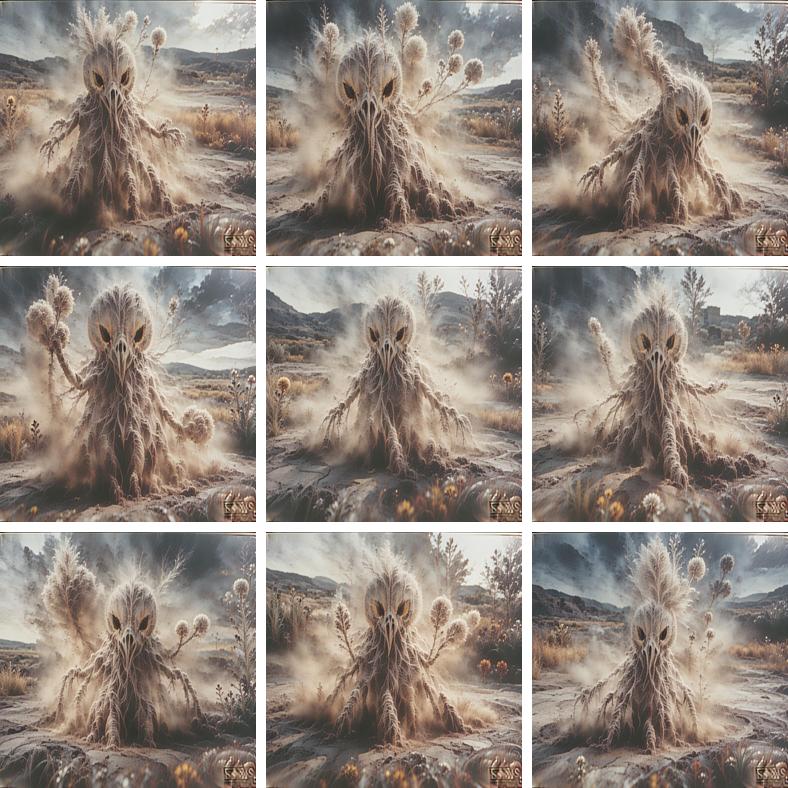} 
    \end{minipage} \hfill
    \begin{minipage}{0.32\linewidth}
        \centering
        \includegraphics[width=\textwidth]{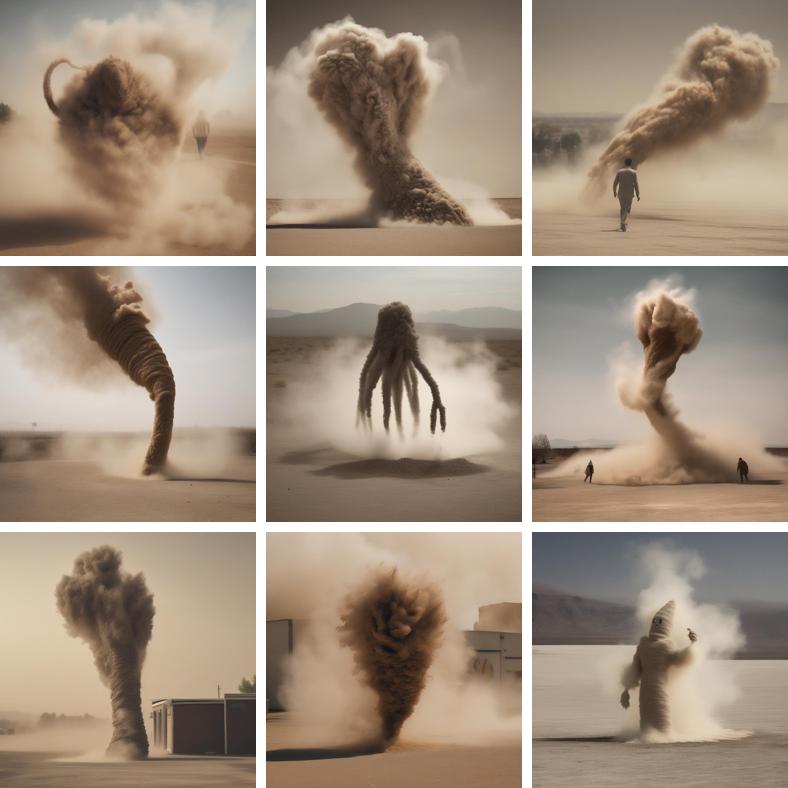} 
    \end{minipage} \hfill
    \begin{minipage}{0.32\linewidth}
        \centering
        \includegraphics[width=\textwidth]{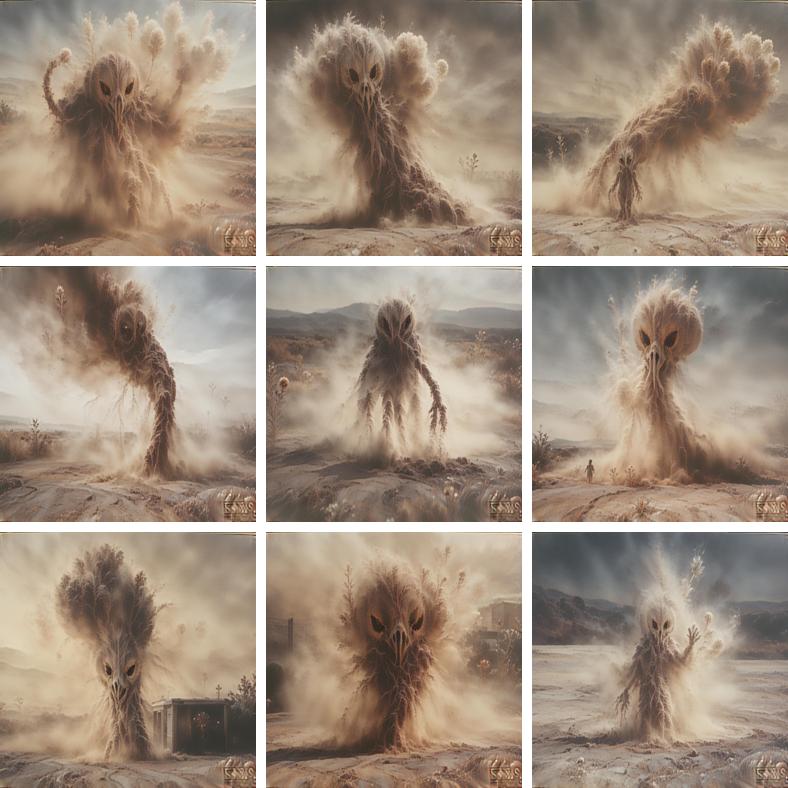}
    \end{minipage}
    \input{images/qualitative/sdxl_pickscore/prompt-2.txt} \\
    \textcolor{white}{filler} \\
    \caption{\textbf{Qualitative comparison of {\draft} and {\methodabbv}}. Three columns of rows show set of nine images generated from the same seeds by the (a){\draft}, (b)Base model, and (c)Our model. 
    Our method preserves the diversity of details of different images, while adding aesthetic quality leading to both high rewards and high user preference.
    }
    \label{fig:qualdiv4}
\end{figure}

\begin{figure}[ht!]
    \centering
    \begin{minipage}{0.32\linewidth}
        \subcaption{\draft}
        \centering
        \includegraphics[width=\textwidth]{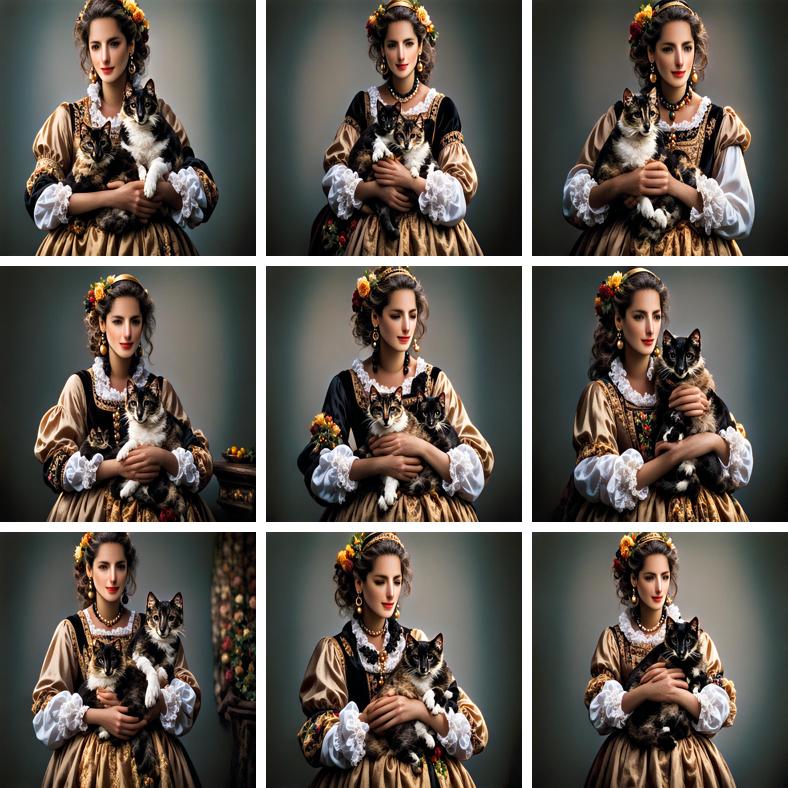} 
    \end{minipage} \hfill
    \begin{minipage}{0.32\linewidth}
        \subcaption{Base model}
        \centering
        \includegraphics[width=\textwidth]{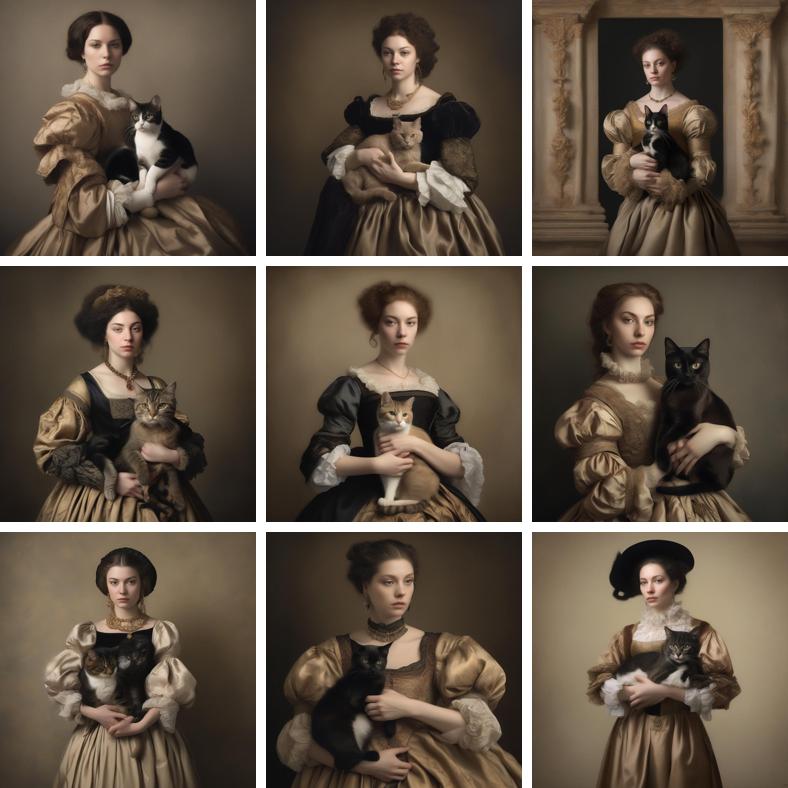} 
    \end{minipage} \hfill
    \begin{minipage}{0.32\linewidth}
        \subcaption{Ours}
        \centering
        \includegraphics[width=\textwidth]{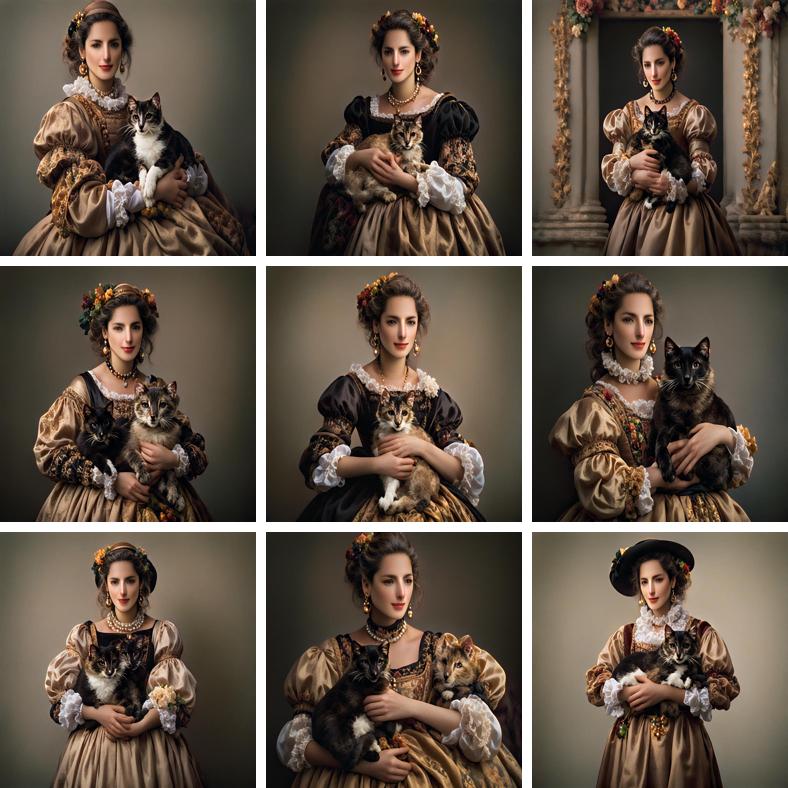}
    \end{minipage} \\
    \input{images/qualitative/sdxl_hps/prompt-1.txt} \\
    \textcolor{white}{filler} \\
    \begin{minipage}{0.32\linewidth}
        \centering
        \includegraphics[width=\textwidth]{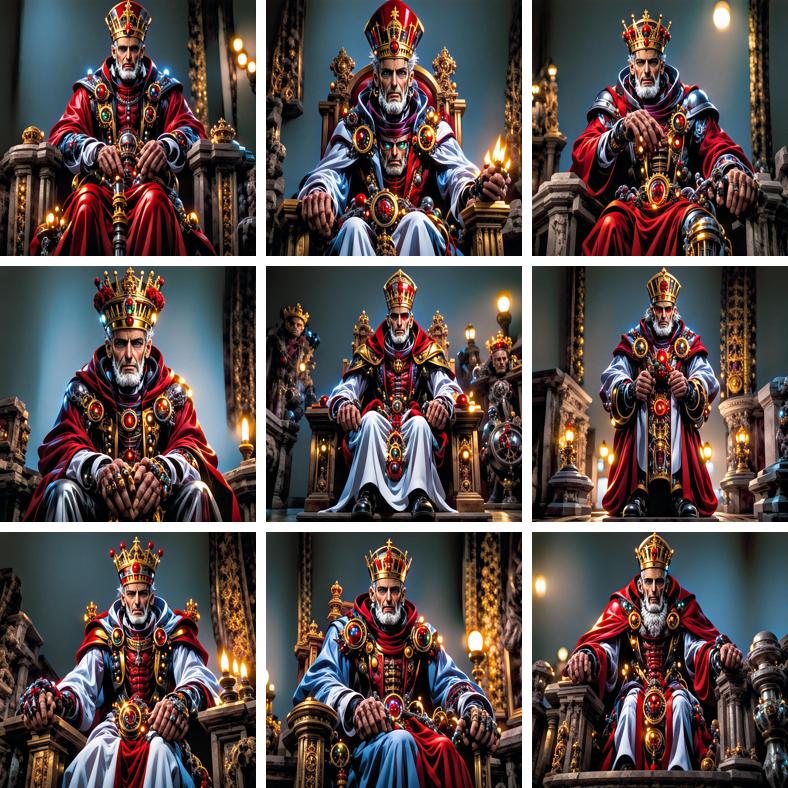} 
    \end{minipage} \hfill
    \begin{minipage}{0.32\linewidth}
        \centering
        \includegraphics[width=\textwidth]{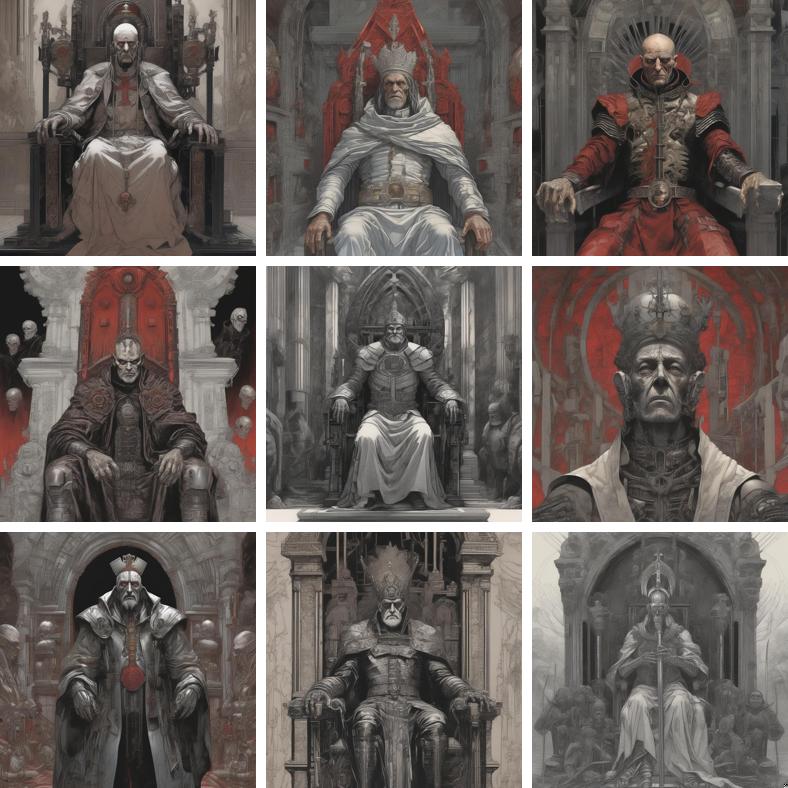} 
    \end{minipage} \hfill
    \begin{minipage}{0.32\linewidth}
        \centering
        \includegraphics[width=\textwidth]{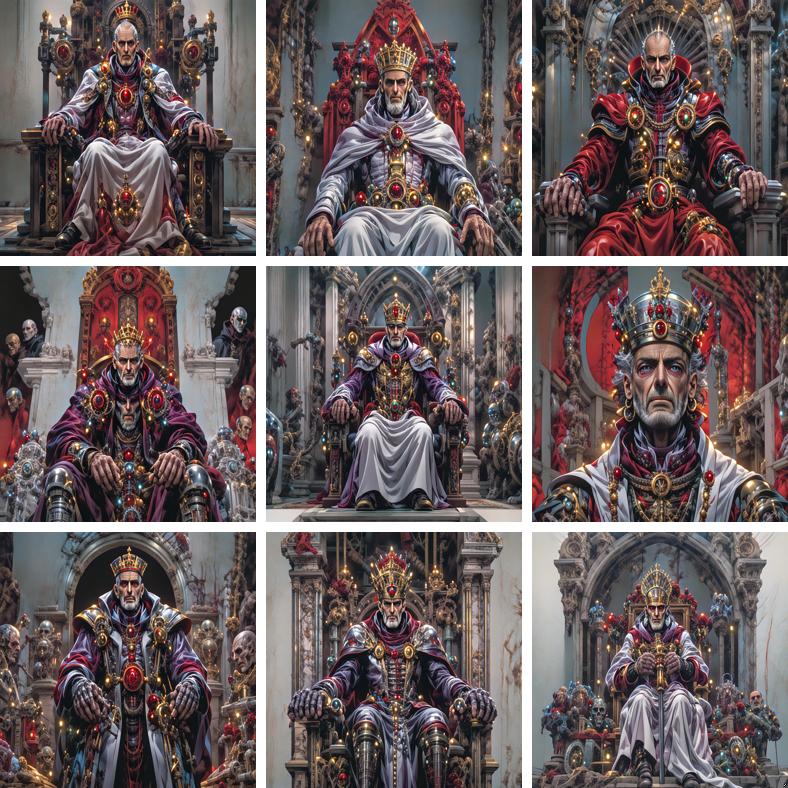}
    \end{minipage} \\
    \input{images/qualitative/sdxl_hps/prompt-2.txt} \\
    \textcolor{white}{filler} \\
    \begin{minipage}{0.32\linewidth}
        \centering
        \includegraphics[width=\textwidth]{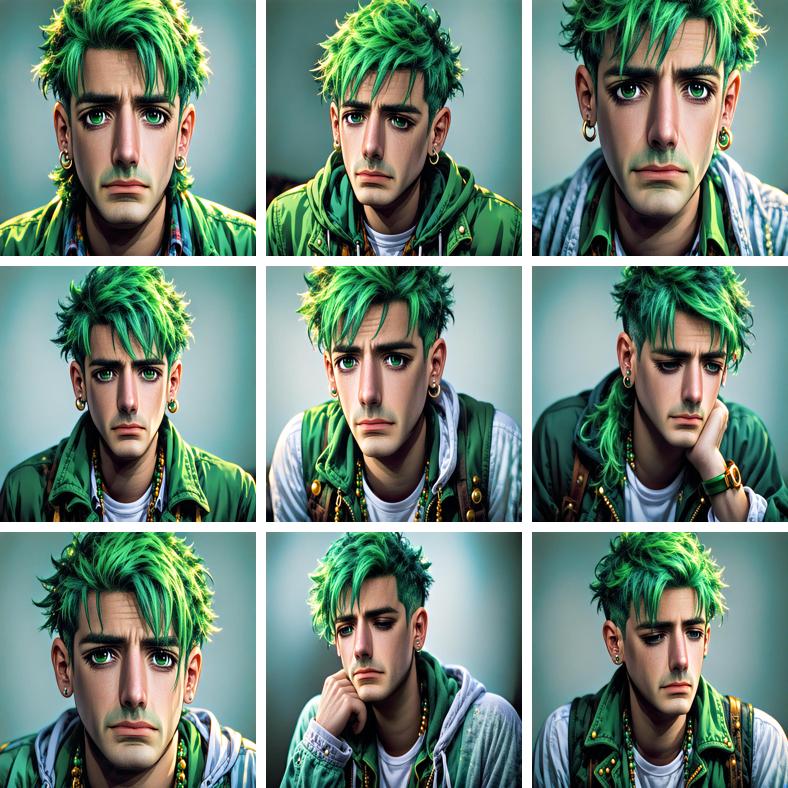} 
    \end{minipage} \hfill
    \begin{minipage}{0.32\linewidth}
        \centering
        \includegraphics[width=\textwidth]{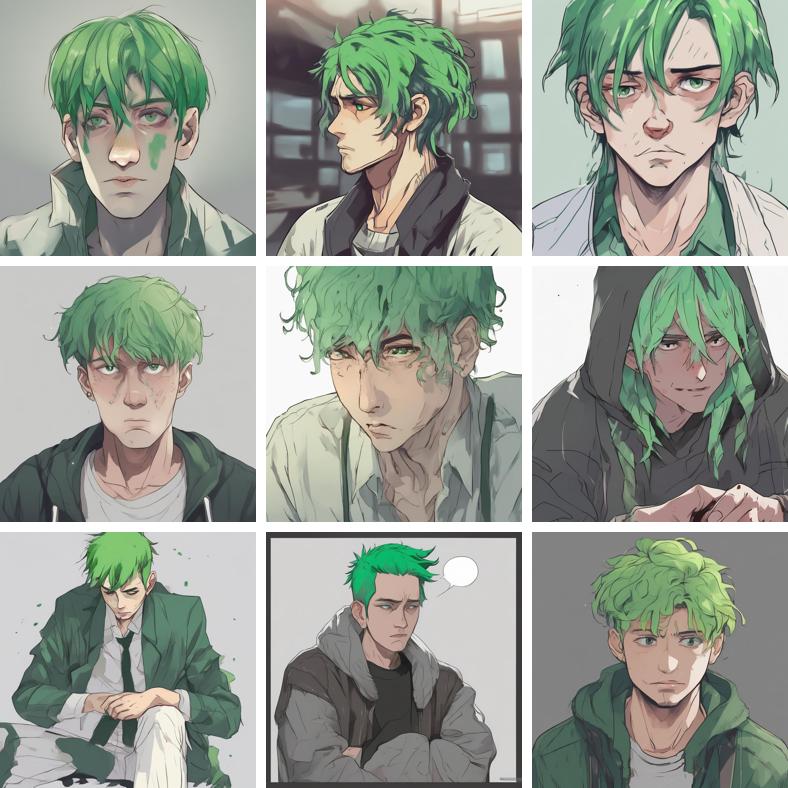} 
    \end{minipage} \hfill
    \begin{minipage}{0.32\linewidth}
        \centering
        \includegraphics[width=\textwidth]{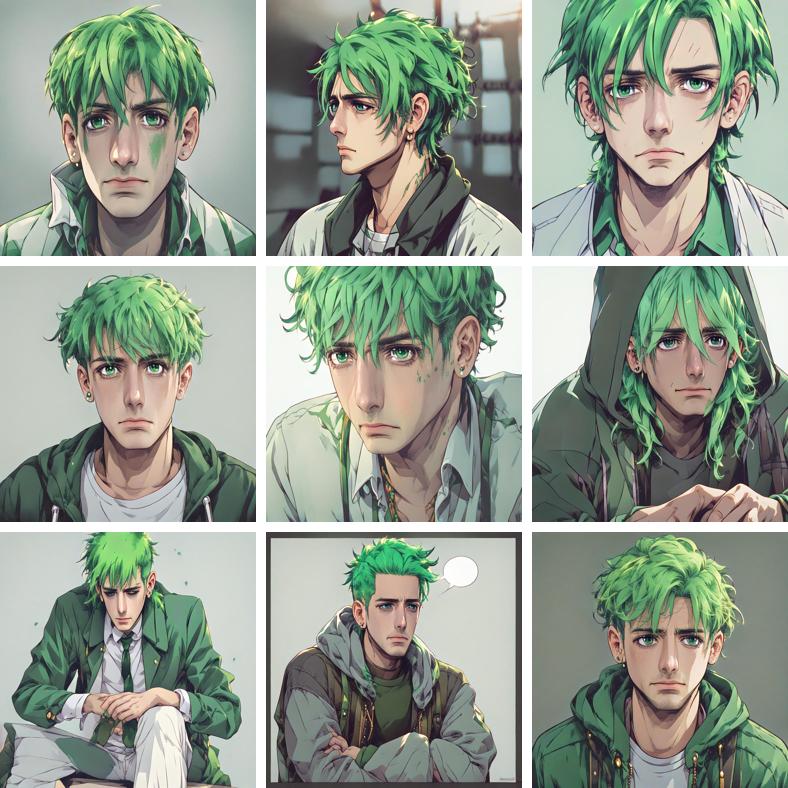}
    \end{minipage} \\
    \input{images/qualitative/sdxl_hps/prompt-3.txt} \\
    \textcolor{white}{filler} \\
    \caption{\textbf{Qualitative comparison of {\draft} and {\methodabbv}}. Three columns of rows show set of nine images generated from the same seeds by the (a){\draft}, (b)Base model, and (c)Our model. 
    Our method preserves the diversity of details of different images, while adding aesthetic quality leading to both high rewards and high user preference.
    }
    \label{fig:qualdiv3}
\end{figure}

\begin{figure}[ht!]
    \centering
    \begin{minipage}{0.32\linewidth}
        \subcaption{\draft}
        \centering
        \includegraphics[width=\textwidth]{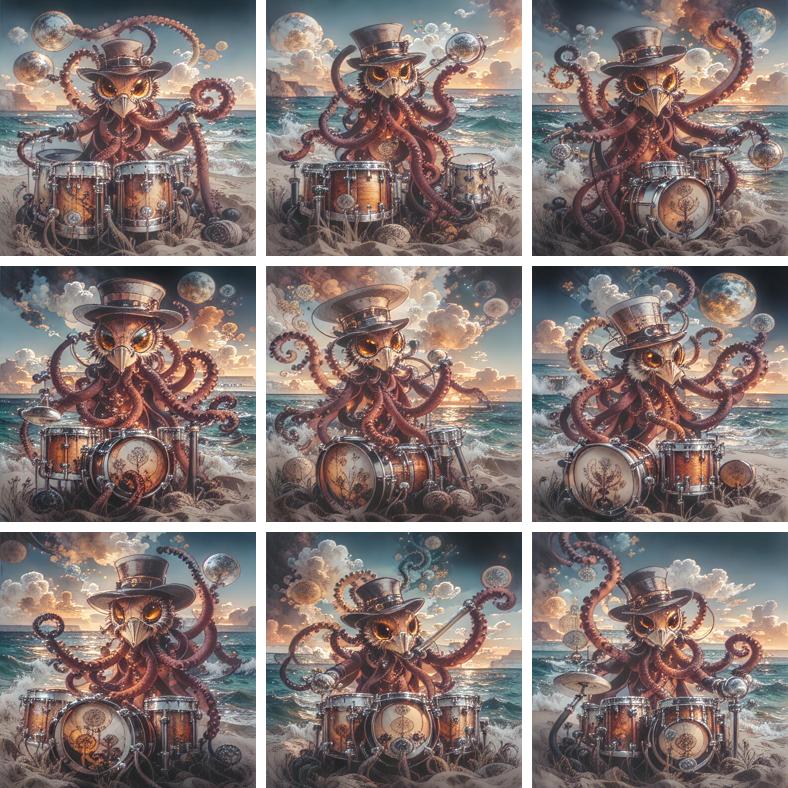} 
    \end{minipage} \hfill
    \begin{minipage}{0.32\linewidth}
        \subcaption{Base model}
        \centering
        \includegraphics[width=\textwidth]{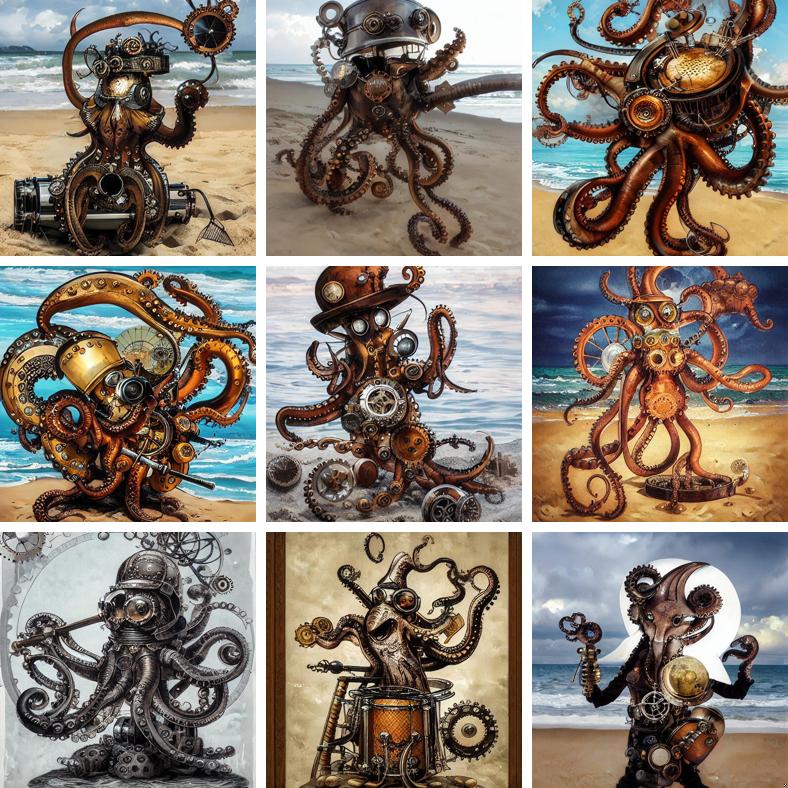} 
    \end{minipage} \hfill
    \begin{minipage}{0.32\linewidth}
        \subcaption{Ours}
        \centering
        \includegraphics[width=\textwidth]{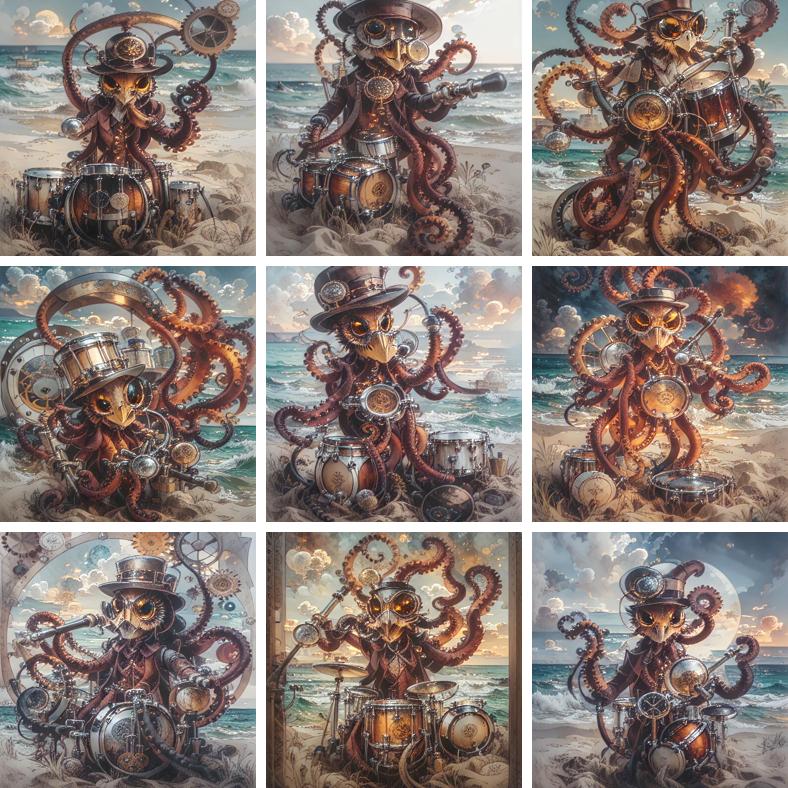}
    \end{minipage}  \\
    \input{images/qualitative/sd_pickscore/prompt-1.txt} \\
    \textcolor{white}{filler} \\
    \begin{minipage}{0.32\linewidth}
        \centering
        \includegraphics[width=\textwidth]{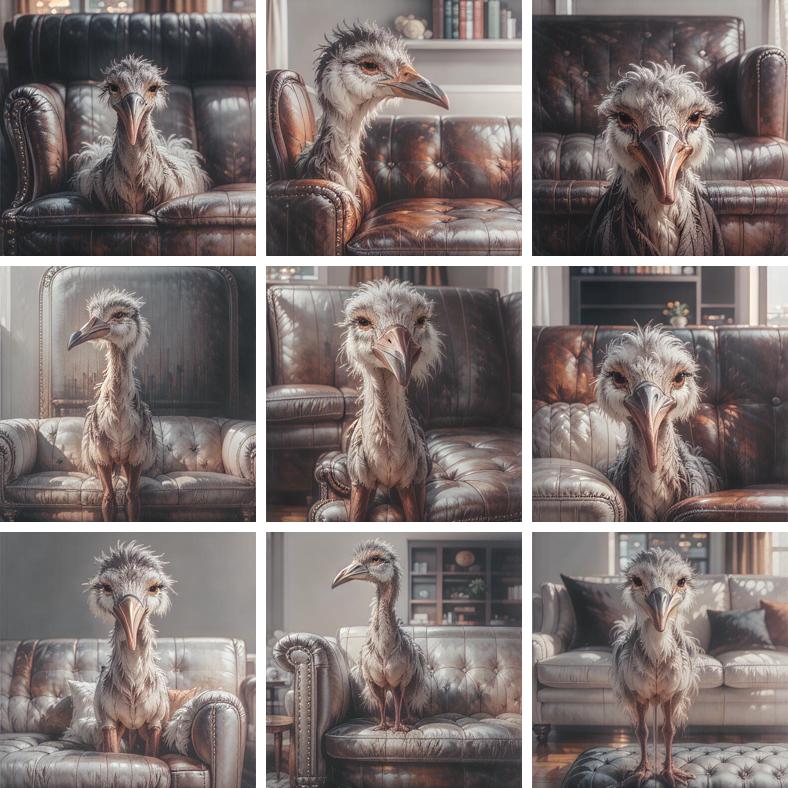} 
    \end{minipage} \hfill
    \begin{minipage}{0.32\linewidth}
        \centering
        \includegraphics[width=\textwidth]{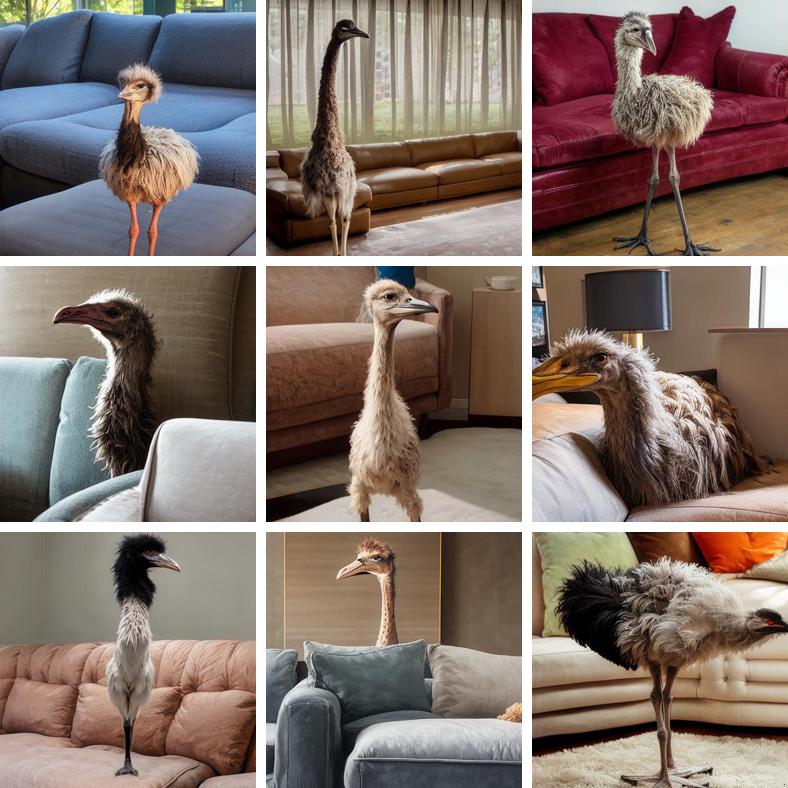} 
    \end{minipage} \hfill
    \begin{minipage}{0.32\linewidth}
        \centering
        \includegraphics[width=\textwidth]{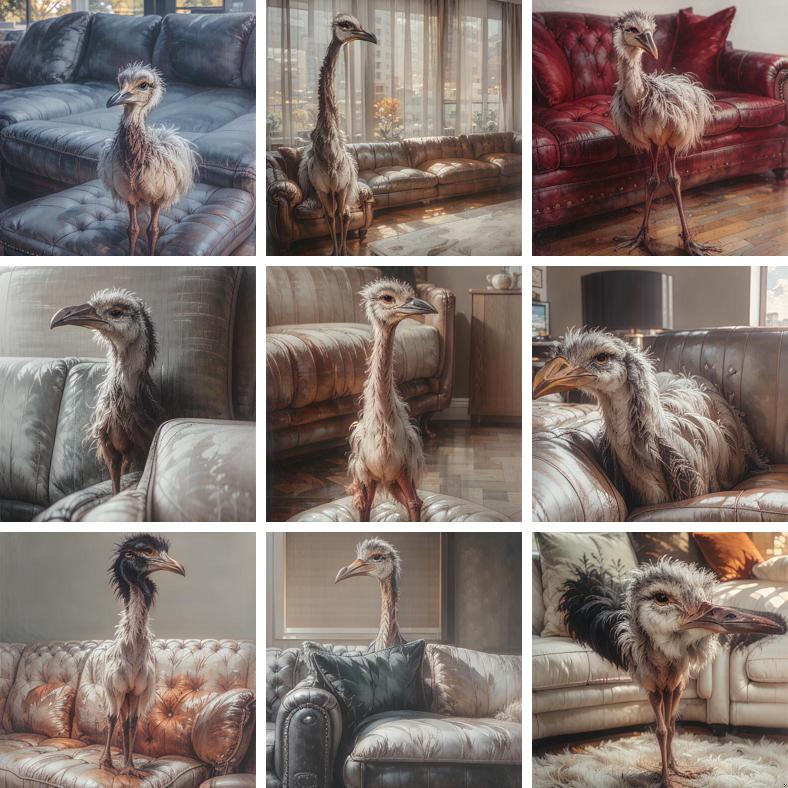}
    \end{minipage} \\
    \input{images/qualitative/sd_pickscore/prompt-2.txt} \\
    \caption{\textbf{Qualitative comparison of {\draft} and {\methodabbv}}. Three columns of rows show set of nine images generated from the same seeds by the (a){\draft}, (b)Base model, and (c)Our model. 
    Our method preserves the diversity of details of different images, while adding aesthetic quality leading to both high rewards and high user preference.
    }
    \label{fig:qualdiv2}
\end{figure}

\begin{figure}[ht!]
    \centering
    \begin{minipage}{0.85\linewidth}
        \includegraphics[width=\linewidth]{images/ablations/sdxl_pickscore_eval_PartiPrompt_pickscore.png}
        \includegraphics[width=\linewidth]{images/ablations/sdxl_pickscore_eval_hpsv2__anime__pickscore.png}
        \includegraphics[width=\linewidth]{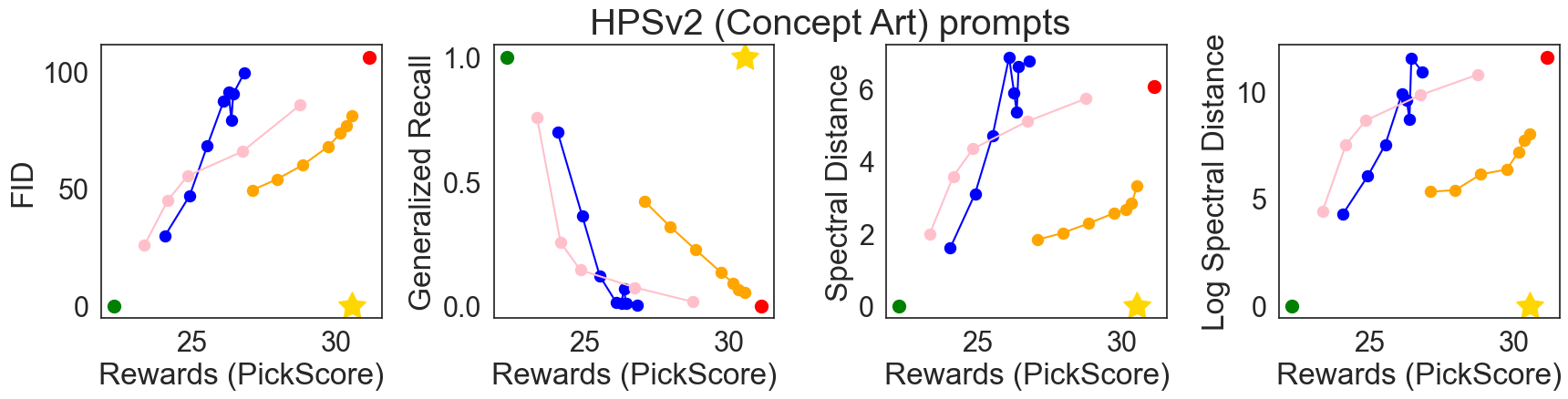}
        \includegraphics[width=\linewidth]{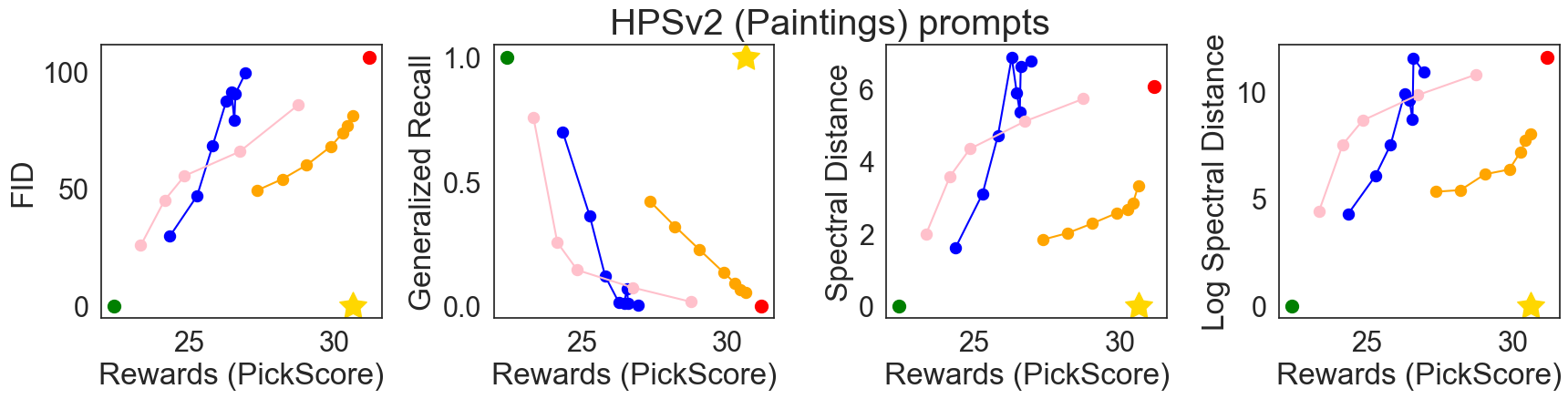}
        \includegraphics[width=\linewidth]{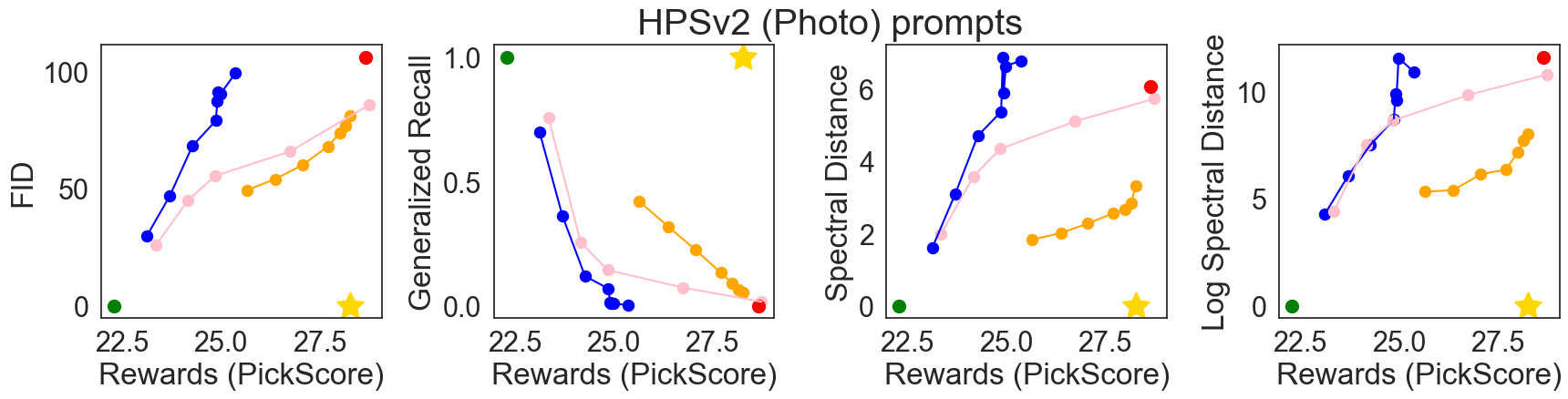}
    \end{minipage}
    \begin{minipage}{0.12\textwidth}
        \includegraphics[width=\linewidth]{images/graph-legend.png}
    \end{minipage}
    \caption{\textbf{Reward-diversity tradeoff for SDXL trained on PickScore}: \textcolor{green}{\textbf{Green}} represents the base model, \textcolor{red}{\textbf{Red}} represents \draft with no regularization, \textcolor{yellow}{\textbf{Gold}} star represents the ideal score. \textcolor{blue}{\textbf{Blue}} represents different models with different KL regularization coefficients $\lambda$, \textcolor{pink}{\textbf{Pink}} represents different amounts of LoRA scaling, and \textcolor{orange}{\textbf{Orange}} represents different $\gamma(t)$ for \methodabbv.
    An ideal baseline would achieve the highest reward (represented by \draft) as well as a complete match with the base distribution.
    For all measures for both PartiPrompt and HPSv2 subset prompts, {\methodabbv} achieves Pareto-optimality.
    }
    \label{fig:sdxl_pickscore_all}
\end{figure}

\begin{figure*}[ht!]
    \centering
    \begin{minipage}{0.85\linewidth}
        \includegraphics[width=\linewidth]{images/ablations/sd_pickscore_eval_PartiPrompt_pickscore.png}
        \includegraphics[width=\linewidth]{images/ablations/sd_pickscore_eval_hpsv2__anime__pickscore.png}
        \includegraphics[width=\linewidth]{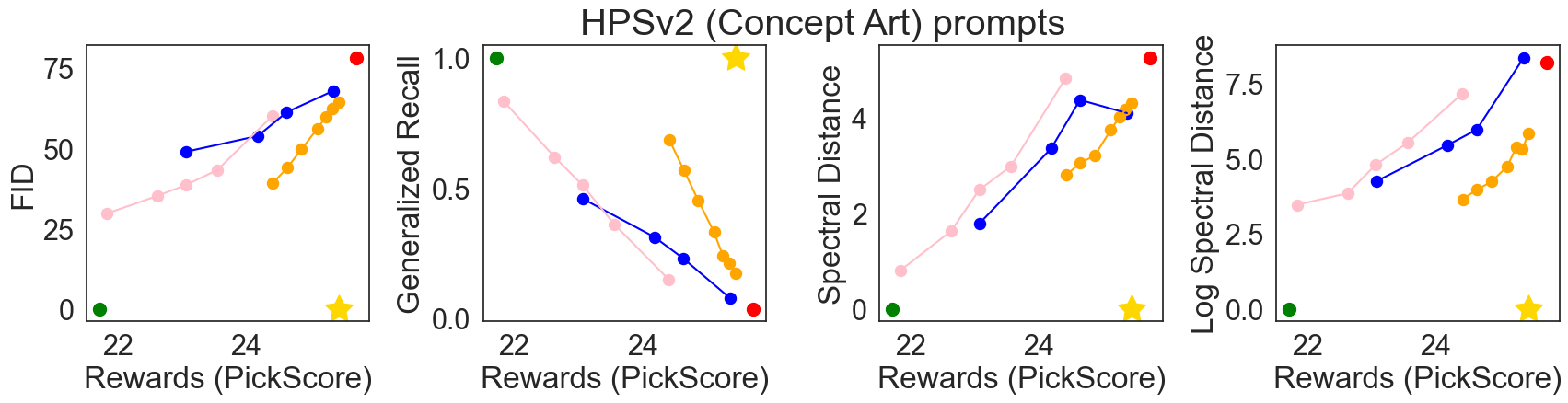}
        \includegraphics[width=\linewidth]{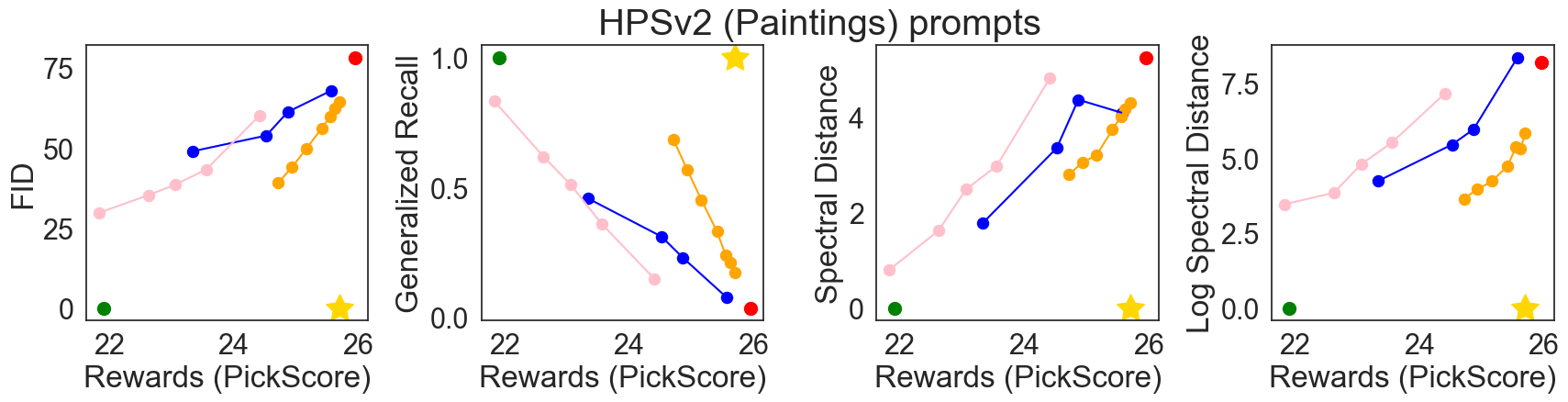}
        \includegraphics[width=\linewidth]{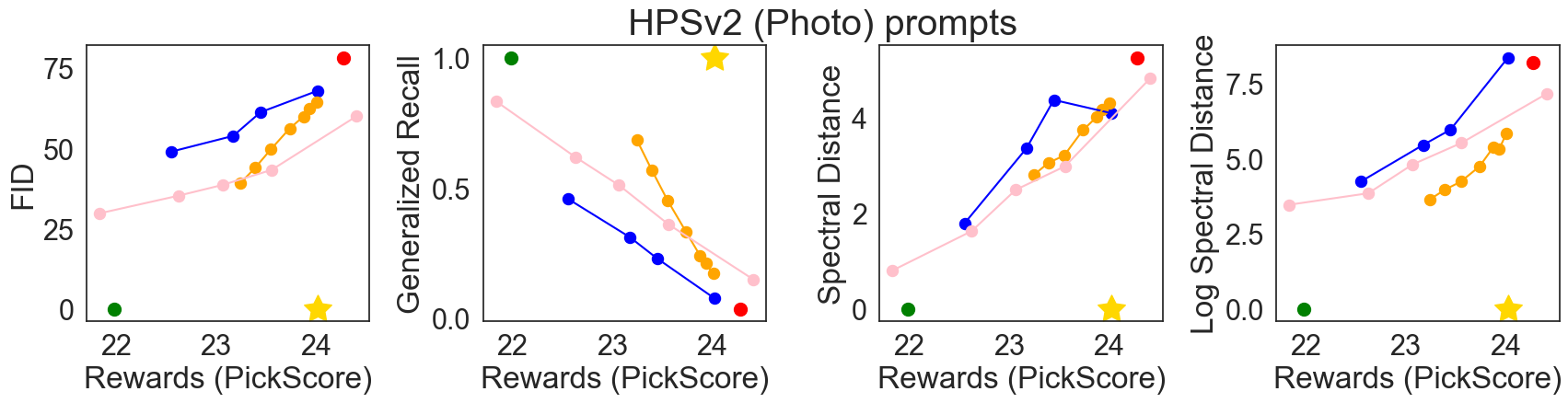}
    \end{minipage}
    \begin{minipage}{0.12\textwidth}
        \includegraphics[width=\linewidth]{images/graph-legend.png}
    \end{minipage}
    \caption{\textbf{Reward-diversity tradeoff for SDv1.4 trained on PickScore}: \textcolor{green}{\textbf{Green}} represents the base model, \textcolor{red}{\textbf{Red}} represents \draft with no regularization, \textcolor{yellow}{\textbf{Gold}} star represents the ideal score. \textcolor{blue}{\textbf{Blue}} represents different models with different KL regularization coefficients $\lambda$, \textcolor{pink}{\textbf{Pink}} represents different amounts of LoRA scaling, and \textcolor{orange}{\textbf{Orange}} represents different $\gamma(t)$ for \methodabbv.
    An ideal baseline would achieve the highest reward (represented by \draft) as well as a complete match with the base distribution.
    For all measures for both PartiPrompt and HPSv2 subset prompts, {\methodabbv} achieves Pareto-optimality.
    }
    \label{fig:sd_pickscore_all}
\end{figure*}

\begin{figure*}[ht!]
    \centering
    \begin{minipage}{0.85\linewidth}
    \includegraphics[width=\linewidth]{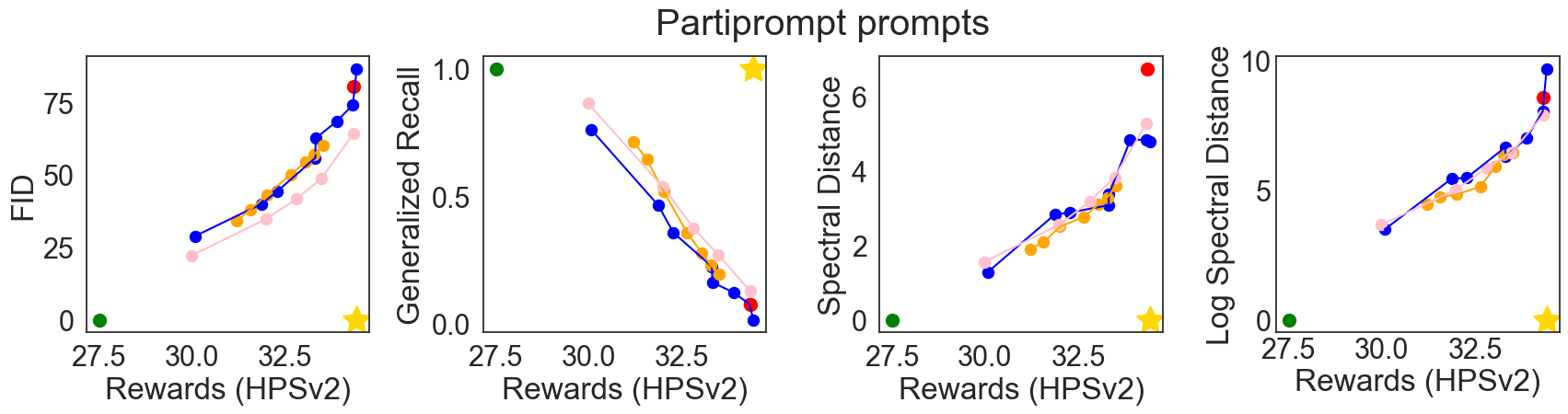}
    \includegraphics[width=\linewidth]{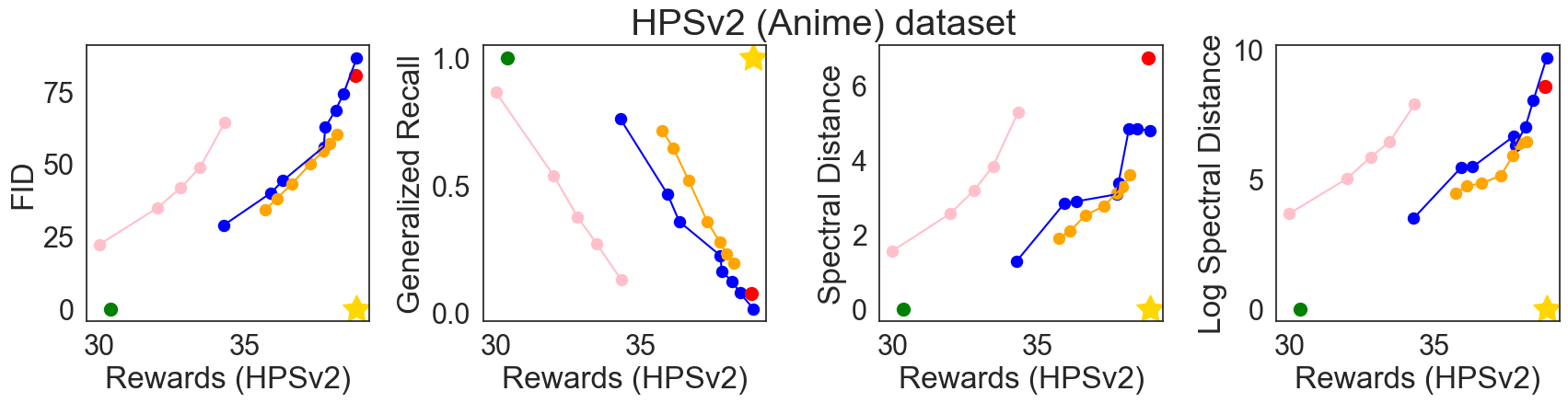}
    \includegraphics[width=\linewidth]{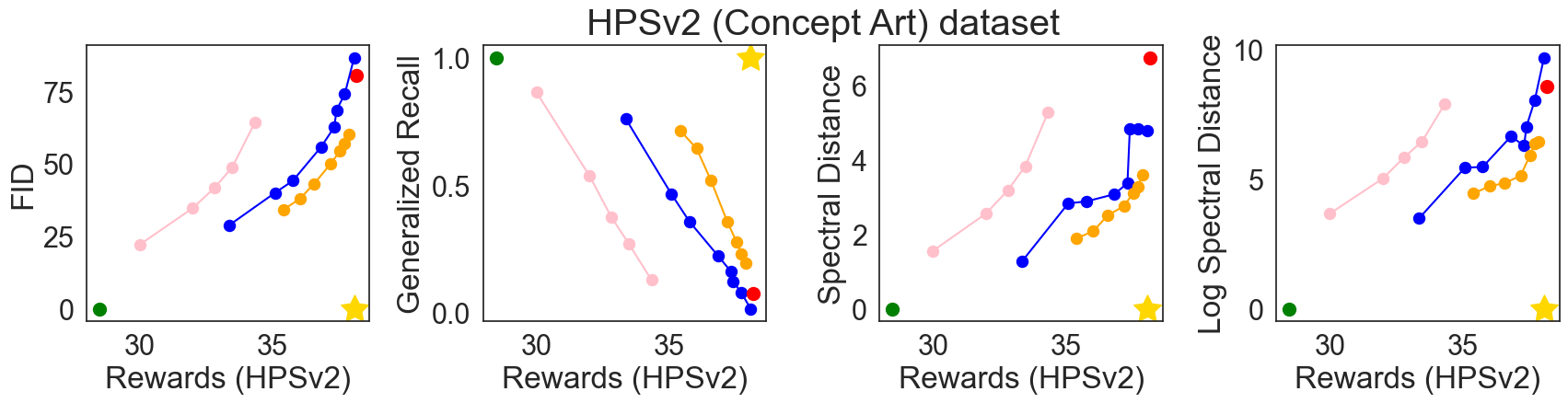}
    \includegraphics[width=\linewidth]{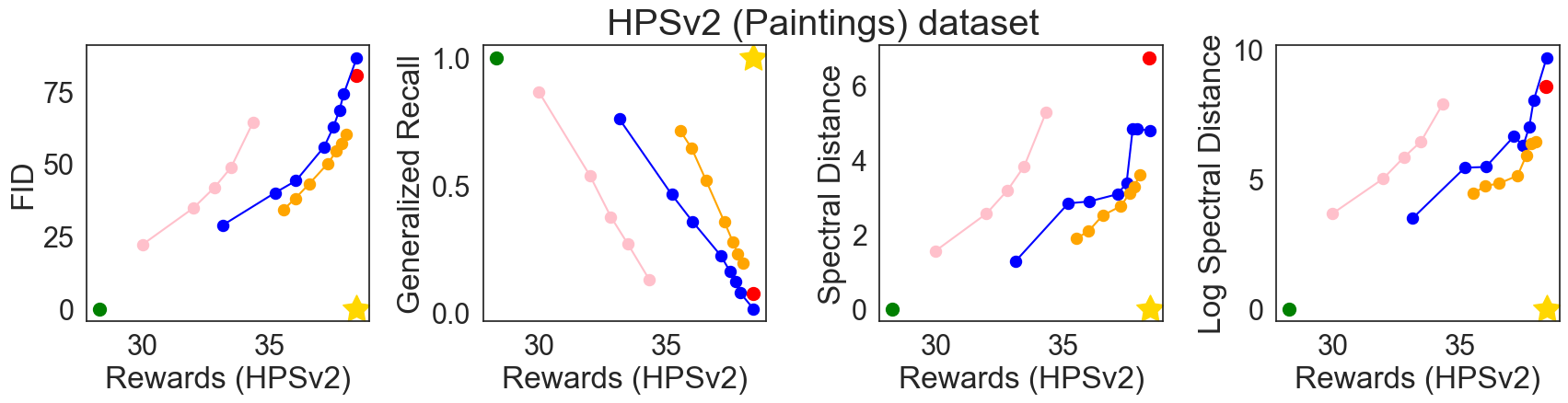}
    \includegraphics[width=\linewidth]{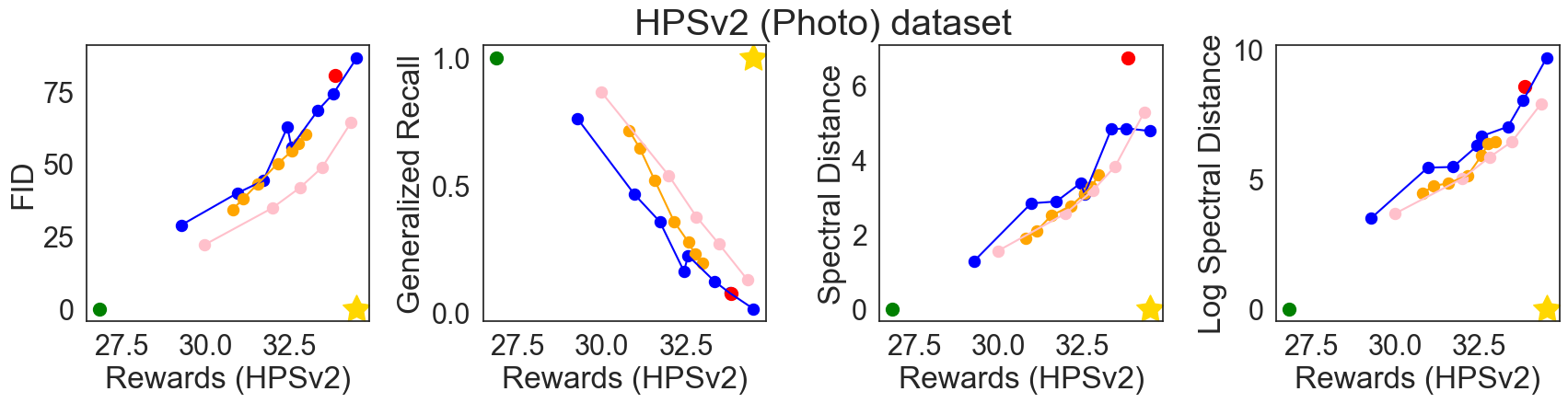}
    \end{minipage}
    \begin{minipage}{0.12\linewidth}
        \includegraphics[width=\linewidth]{images/graph-legend.png}
    \end{minipage}
    \caption{\textbf{Reward-diversity tradeoff for SDXL trained on HPSv2}: \textcolor{green}{\textbf{Green}} represents the base model, \textcolor{red}{\textbf{Red}} represents \draft with no regularization, \textcolor{yellow}{\textbf{Gold}} star represents the ideal score. \textcolor{blue}{\textbf{Blue}} represents different models with different KL regularization coefficients $\lambda$, \textcolor{pink}{\textbf{Pink}} represents different amounts of LoRA scaling, and \textcolor{orange}{\textbf{Orange}} represents different $\gamma(t)$ for \methodabbv.
    An ideal baseline would achieve the highest reward (represented by \draft) as well as a complete match with the base distribution.
    For all measures for both PartiPrompt and HPSv2 subset prompts, {\methodabbv} achieves Pareto-optimality.
    }
    \label{fig:sdxl_hps_all}
\end{figure*}

\begin{figure*}[ht!]
    \centering
    \begin{minipage}{0.85\linewidth}
    \includegraphics[width=\linewidth]{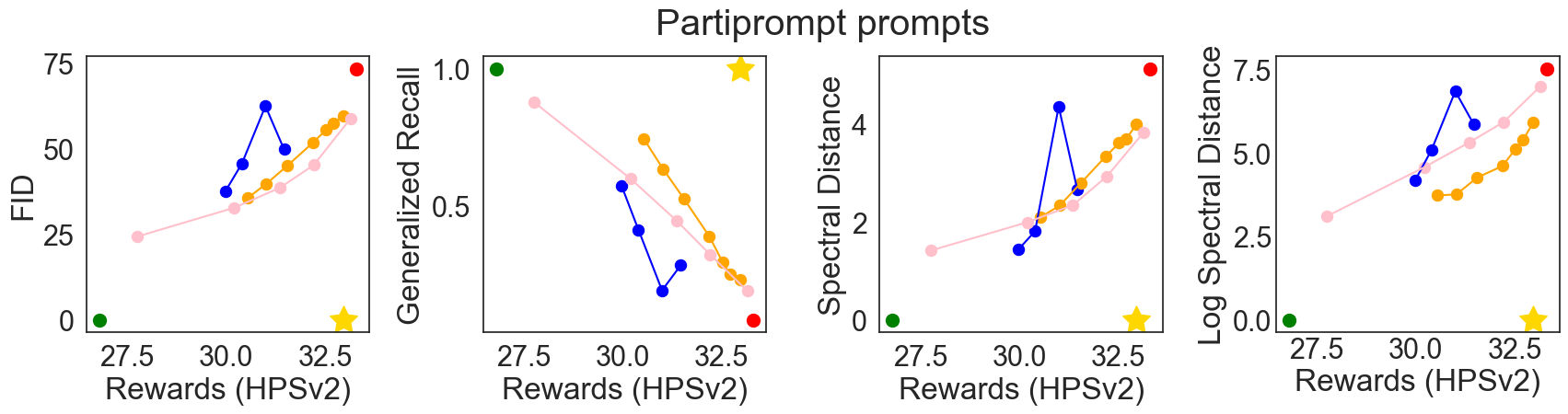}
    \includegraphics[width=\linewidth]{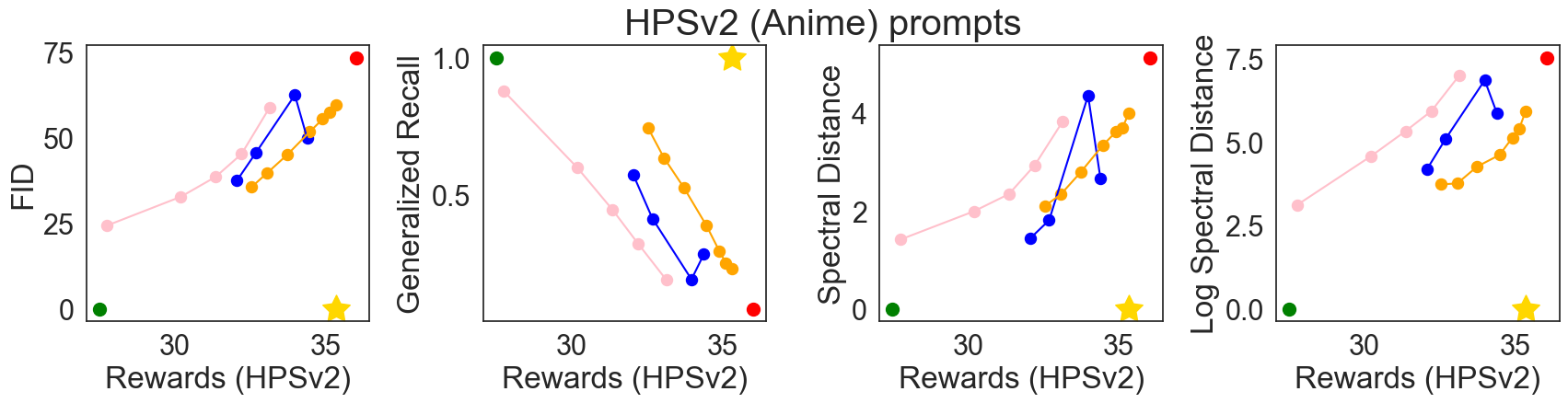}
    \includegraphics[width=\linewidth]{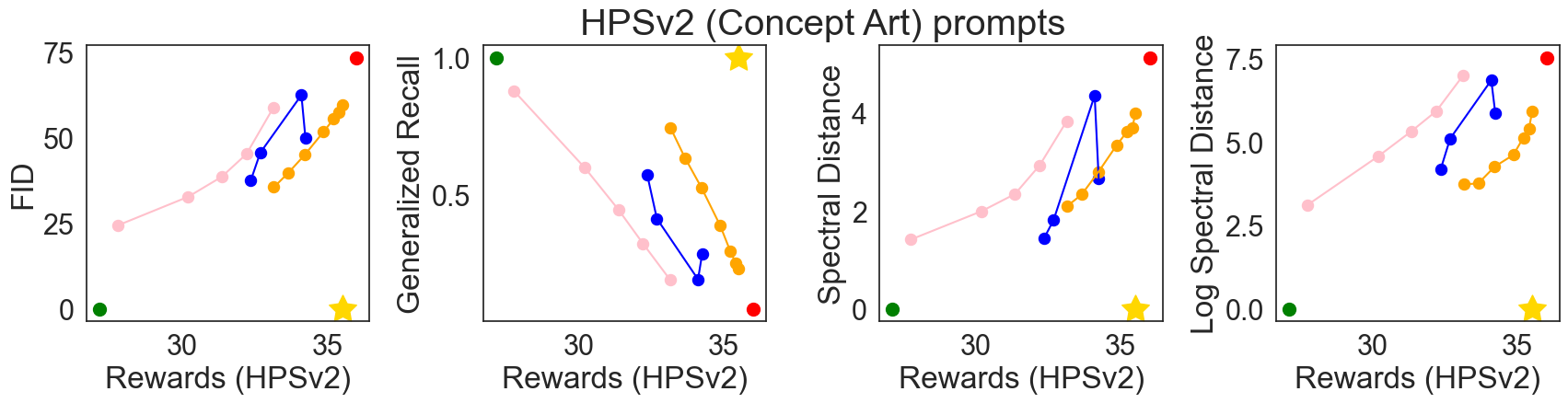}
    \includegraphics[width=\linewidth]{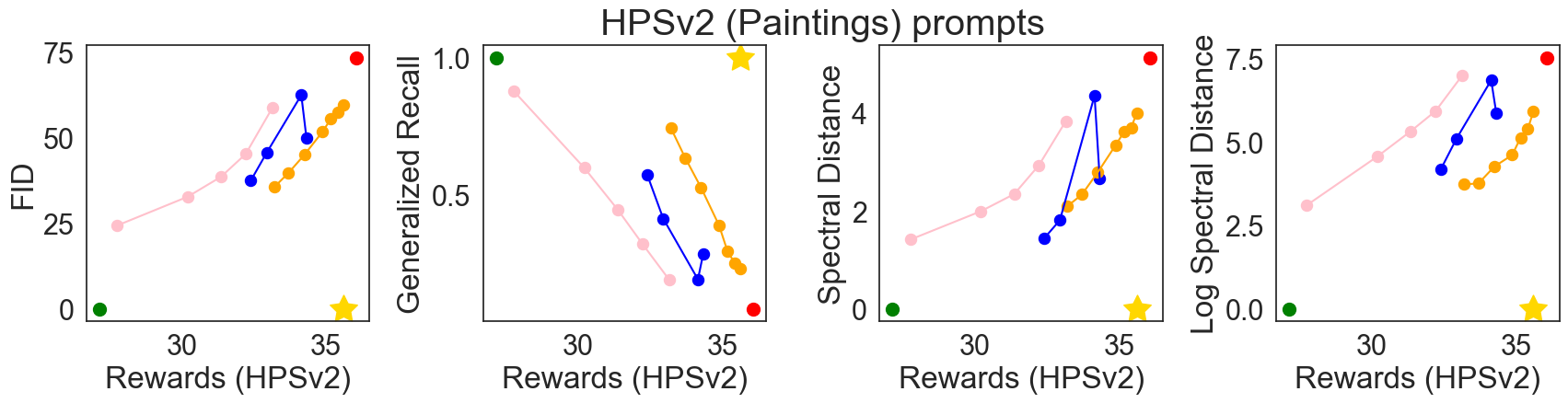}
    \includegraphics[width=\linewidth]{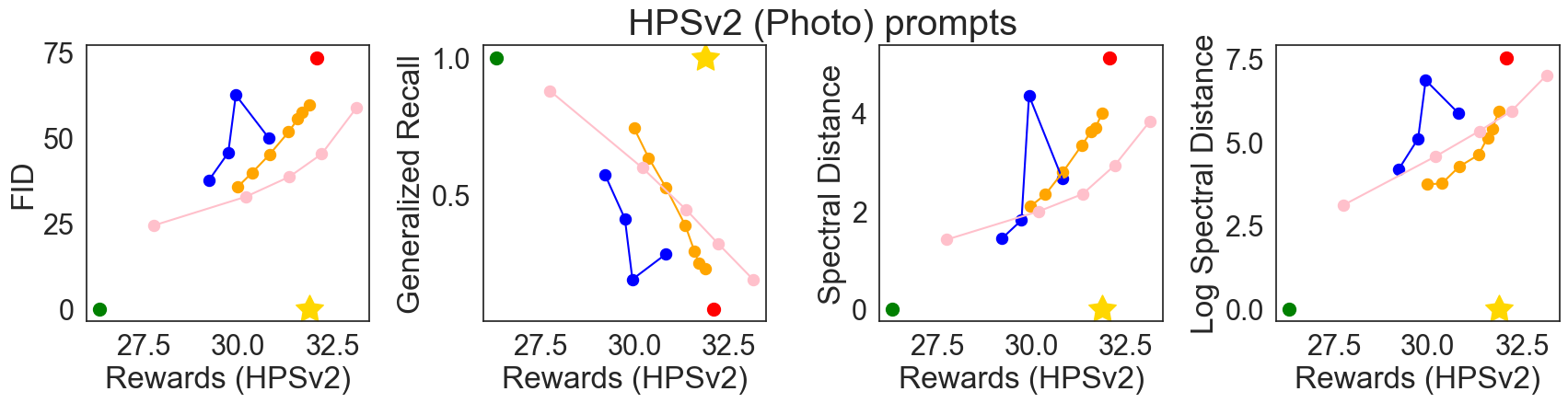}
    \end{minipage}
    \begin{minipage}{0.12\linewidth}
        \includegraphics[width=\linewidth]{images/graph-legend.png}
    \end{minipage}
    \caption{\textbf{Reward-diversity tradeoff for SDv1.4 trained on HPSv2}: \textcolor{green}{\textbf{Green}} represents the base model, \textcolor{red}{\textbf{Red}} represents \draft with no regularization, \textcolor{yellow}{\textbf{Gold}} star represents the ideal score. \textcolor{blue}{\textbf{Blue}} represents different models with different KL regularization coefficients $\lambda$, \textcolor{pink}{\textbf{Pink}} represents different amounts of LoRA scaling, and \textcolor{orange}{\textbf{Orange}} represents different $\gamma(t)$ for \methodabbv.
    An ideal baseline would achieve the highest reward (represented by \draft) as well as a complete match with the base distribution.
    For all measures for both PartiPrompt and HPSv2 subset prompts, {\methodabbv} achieves Pareto-optimality.
    }
    \label{fig:sd_hps_all}
\end{figure*}

\begin{figure}[ht!]
    \centering
    \includegraphics[width=0.9\linewidth]{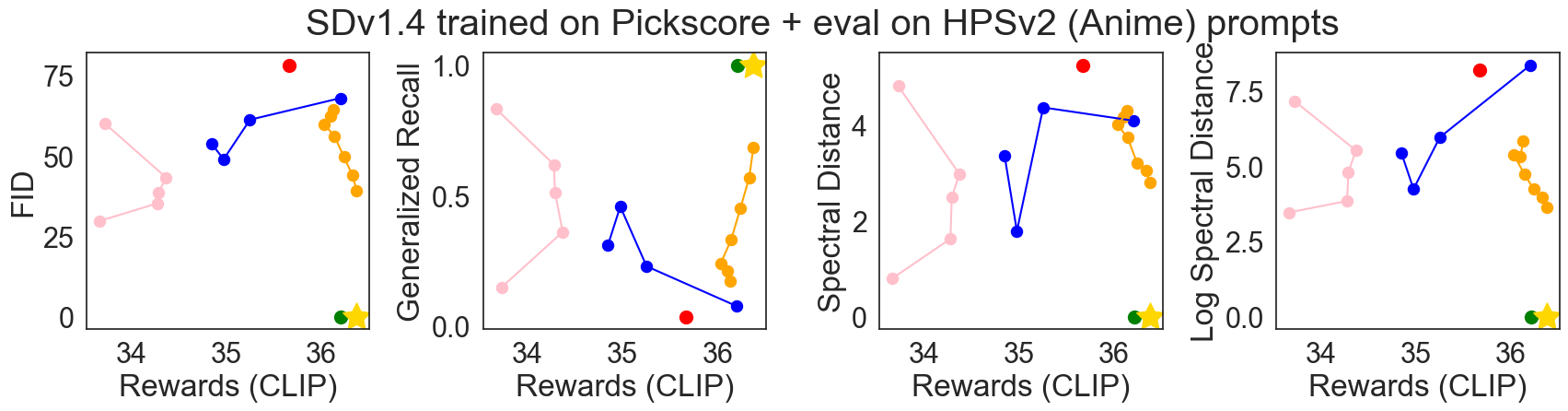}
    \includegraphics[width=0.9\linewidth]{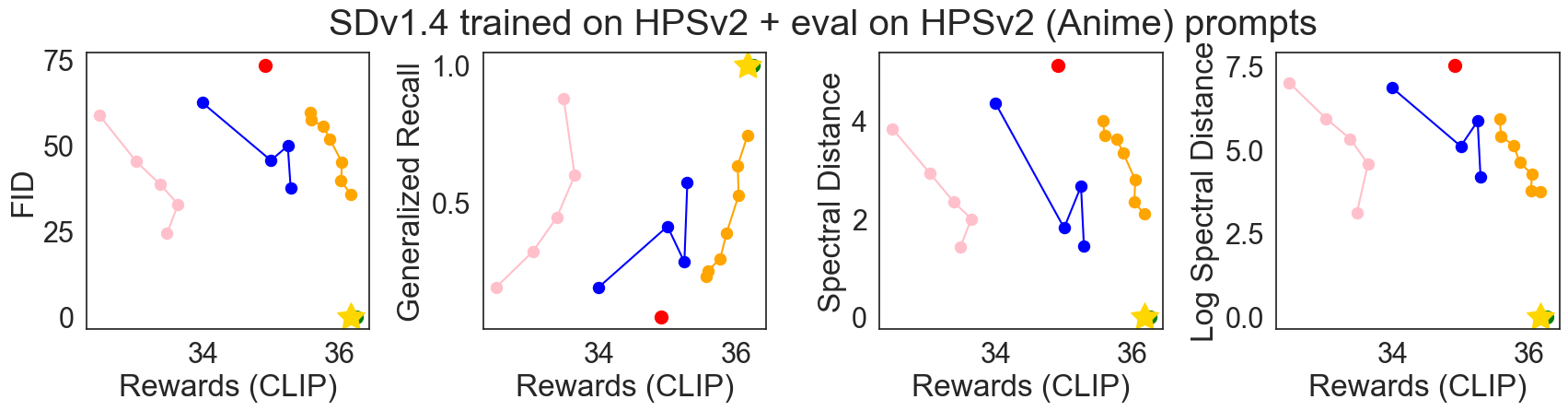}
    \includegraphics[width=0.9\linewidth]{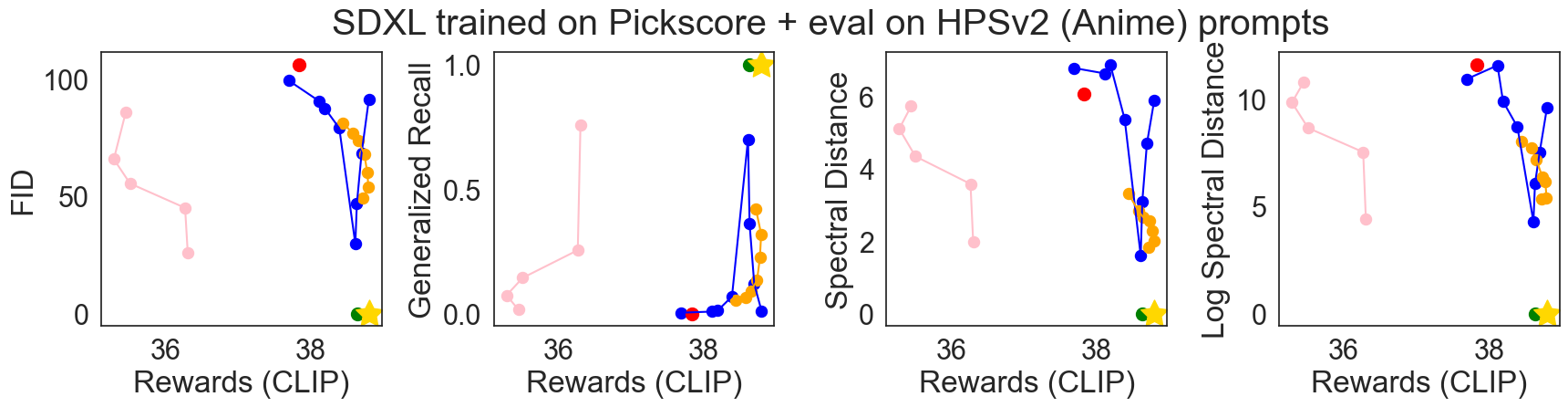}
    \includegraphics[width=0.9\linewidth]{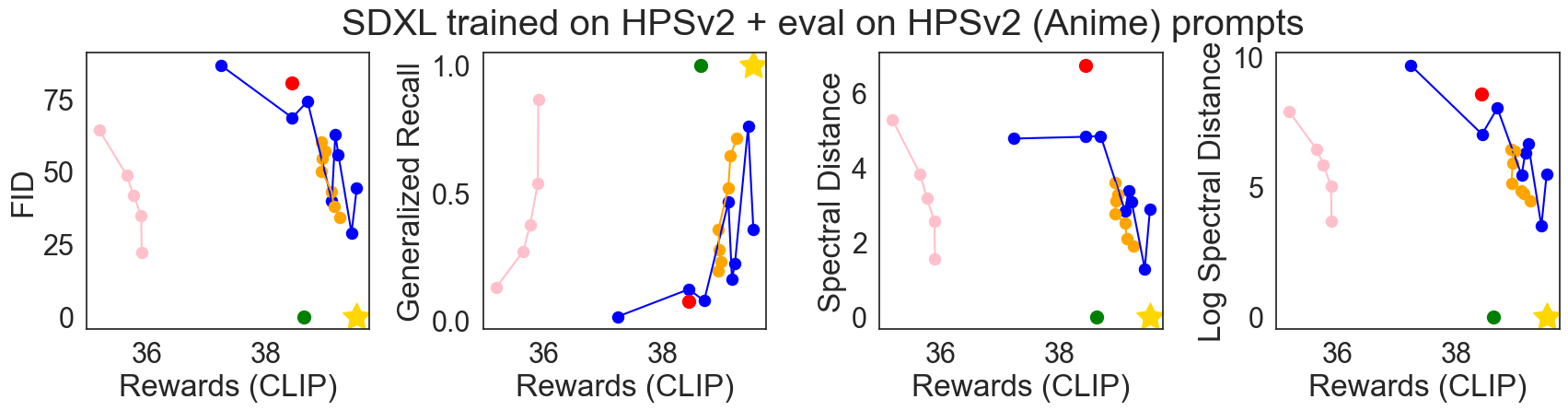}
    \caption{\textbf{CLIP-Diversity tradeoff for configurations on HPSv2 anime prompts}: \textcolor{green}{\textbf{Green}} represents the base model, \textcolor{red}{\textbf{Red}} represents {\draft} with no regularization, \textcolor{yellow}{\textbf{Gold}} star represents the ideal score. \textcolor{blue}{\textbf{Blue}} represents different models with different KL regularization coefficients $\lambda$, \textcolor{pink}{\textbf{Pink}} represents different amounts of LoRA scaling, and \textcolor{orange}{\textbf{Orange}} represents different $\gamma(t)$ for \methodabbv.
    {\methodabbv} consistently outperforms LoRA scaling in CLIP alignment.
    }
    \label{fig:hps_anime_clip}
\end{figure}

\begin{figure}[ht!]
    \centering
    \includegraphics[width=0.9\linewidth]{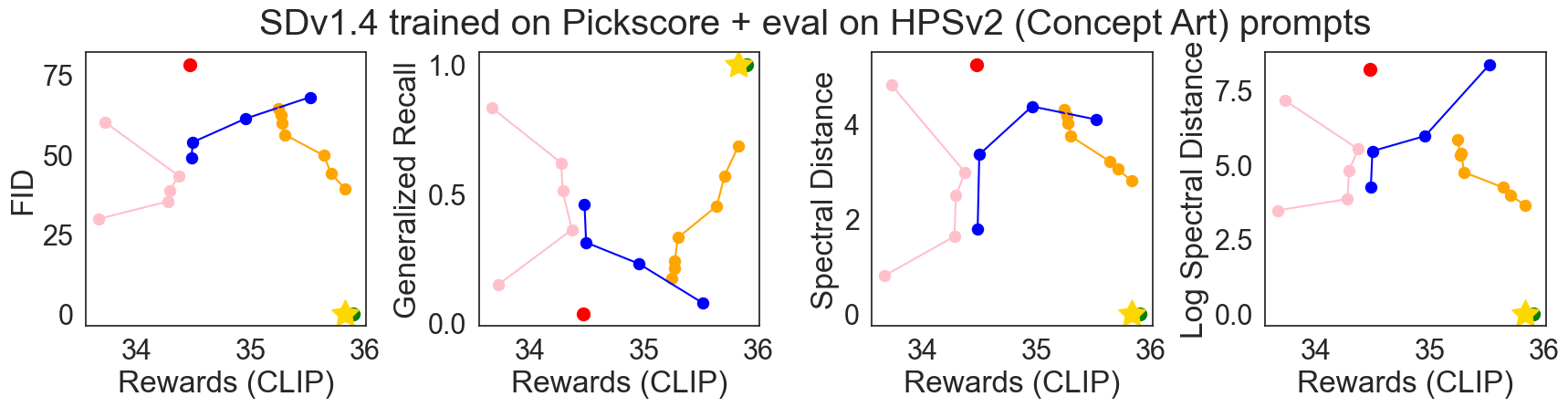}
    \includegraphics[width=0.9\linewidth]{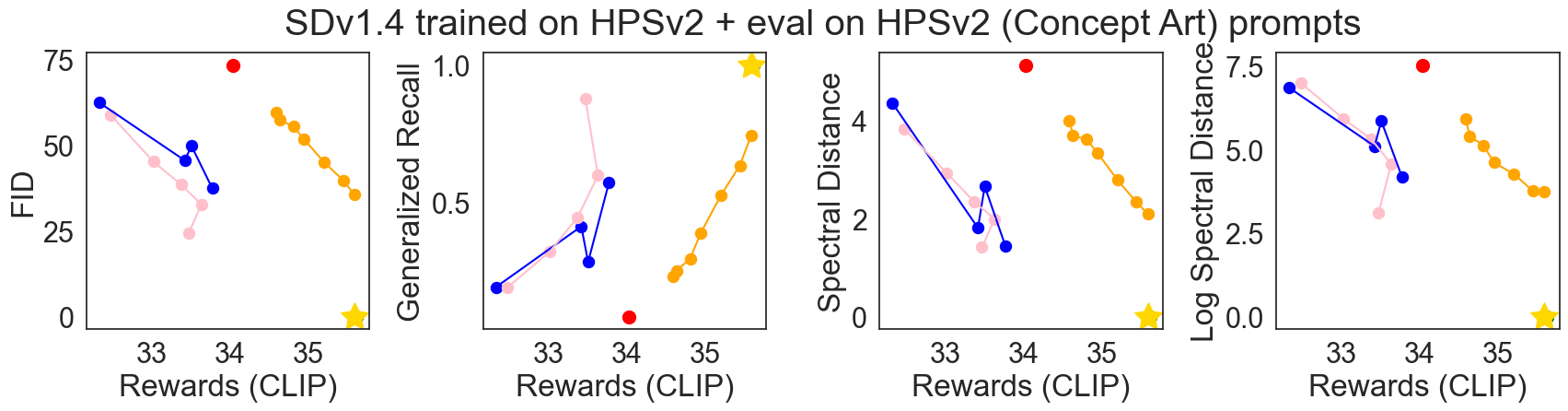}
    \includegraphics[width=0.9\linewidth]{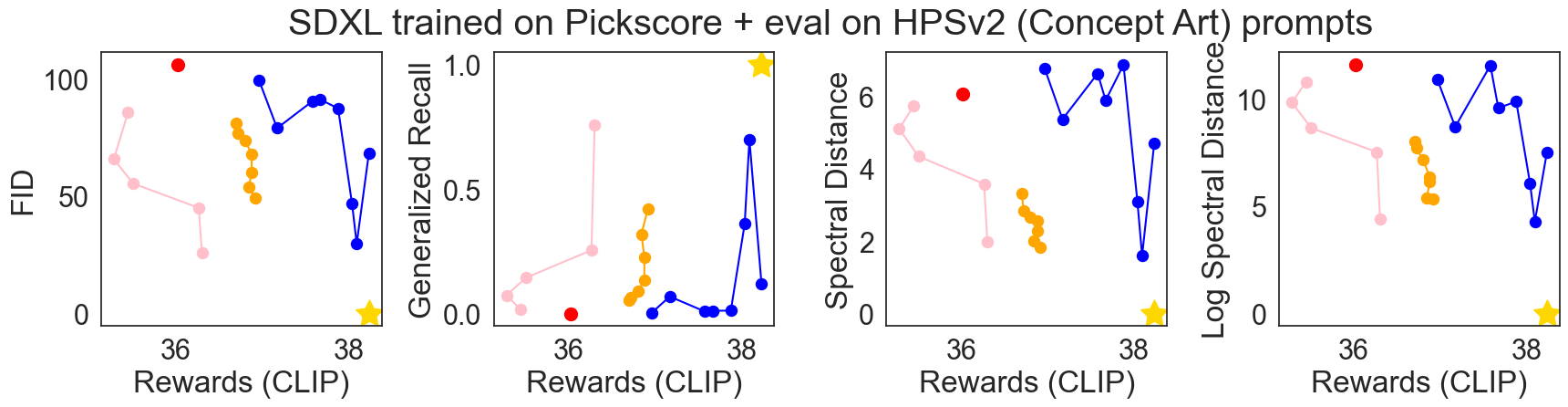}
    \includegraphics[width=0.9\linewidth]{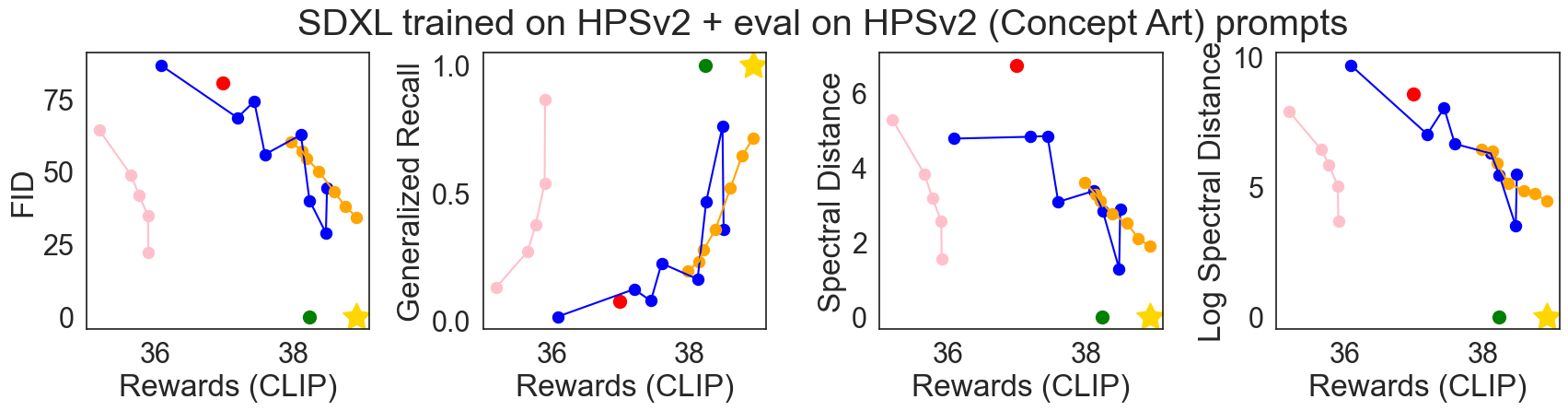}
    \caption{\textbf{CLIP-Diversity tradeoff for configurations on HPSv2 concept art prompts}: \textcolor{green}{\textbf{Green}} represents the base model, \textcolor{red}{\textbf{Red}} represents {\draft} with no regularization, \textcolor{yellow}{\textbf{Gold}} star represents the ideal score. \textcolor{blue}{\textbf{Blue}} represents different models with different KL regularization coefficients $\lambda$, \textcolor{pink}{\textbf{Pink}} represents different amounts of LoRA scaling, and \textcolor{orange}{\textbf{Orange}} represents different $\gamma(t)$ for \methodabbv.
    {\methodabbv} consistently outperforms LoRA scaling in CLIP alignment.
    }
    \label{fig:hps_conceptart_clip}
\end{figure}

\begin{figure}[ht!]
    \centering
    \includegraphics[width=0.9\linewidth]{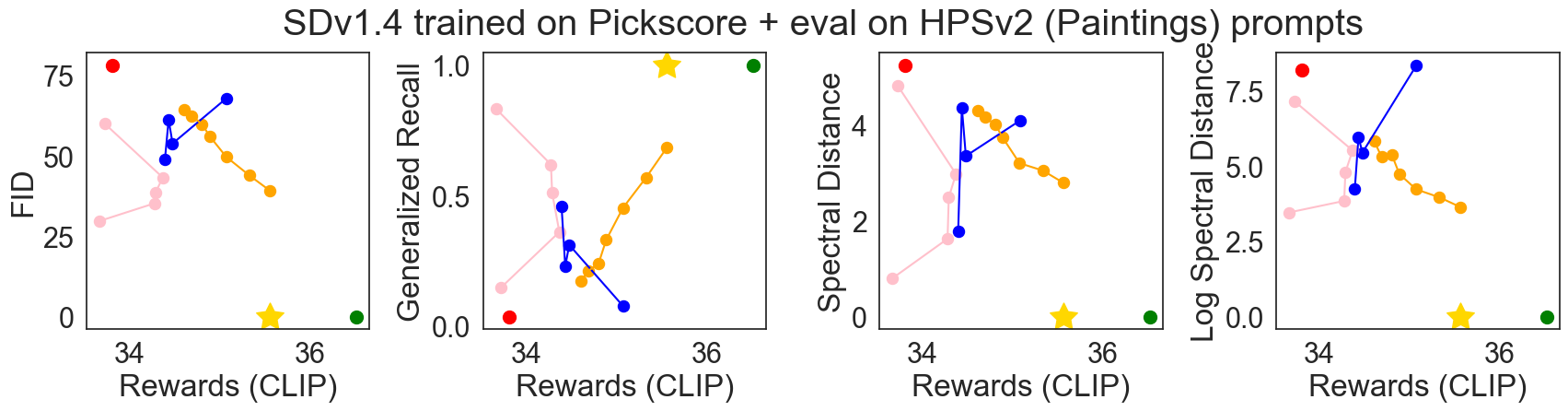}
    \includegraphics[width=0.9\linewidth]{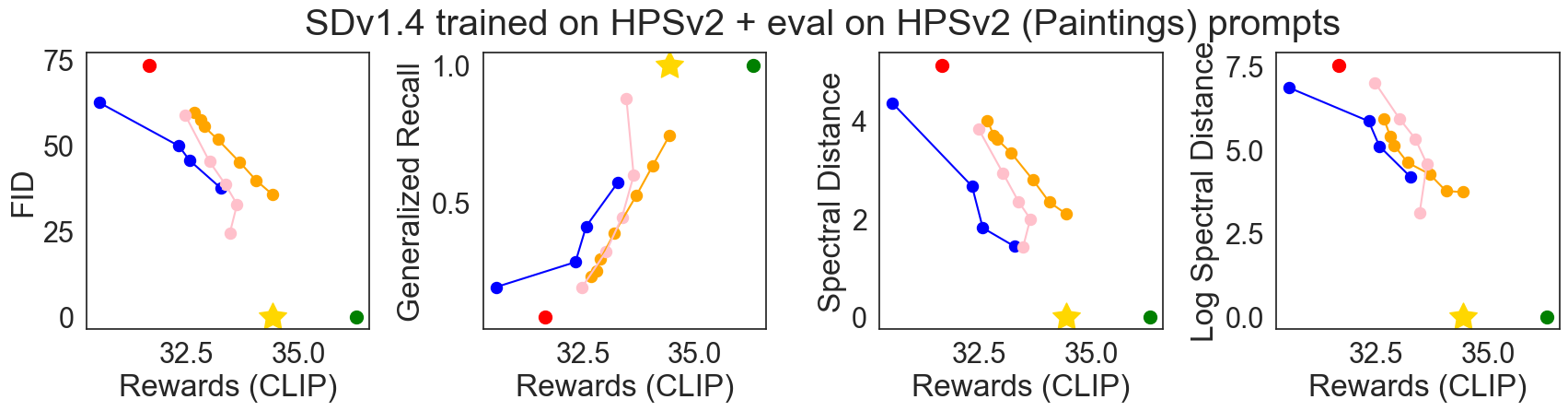}
    \includegraphics[width=0.9\linewidth]{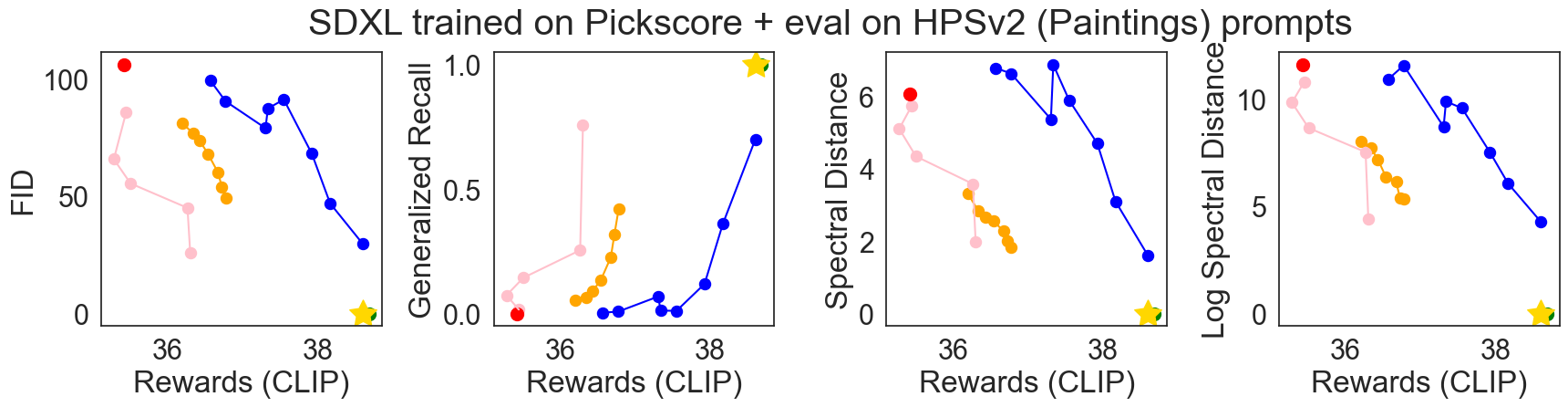}
    \includegraphics[width=0.9\linewidth]{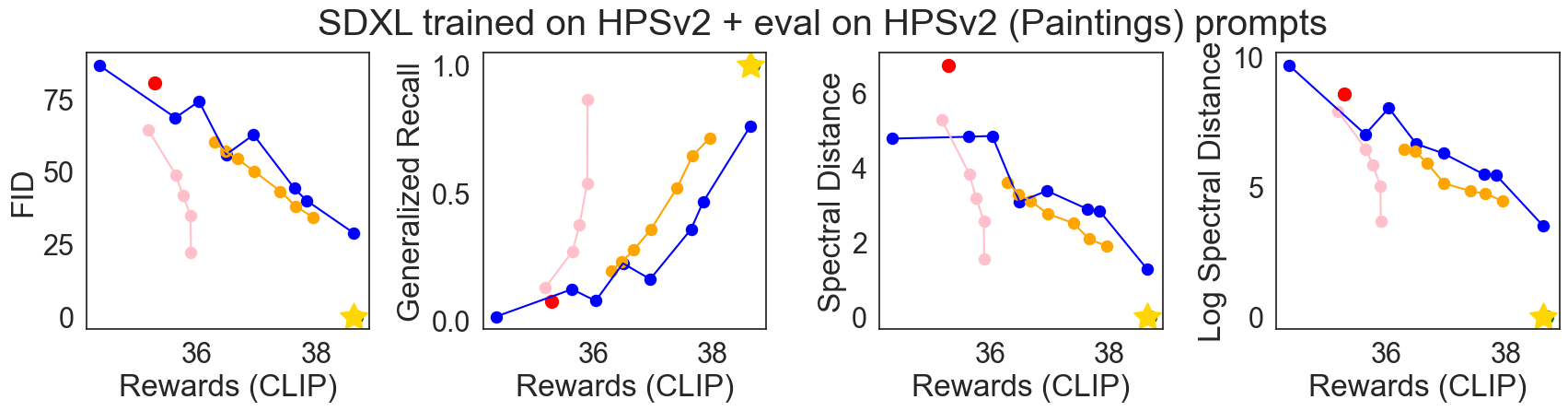}
    \caption{\textbf{CLIP-Diversity tradeoff for configurations on HPSv2 painting prompts}: \textcolor{green}{\textbf{Green}} represents the base model, \textcolor{red}{\textbf{Red}} represents {\draft} with no regularization, \textcolor{yellow}{\textbf{Gold}} star represents the ideal score. \textcolor{blue}{\textbf{Blue}} represents different models with different KL regularization coefficients $\lambda$, \textcolor{pink}{\textbf{Pink}} represents different amounts of LoRA scaling, and \textcolor{orange}{\textbf{Orange}} represents different $\gamma(t)$ for \methodabbv.
    {\methodabbv} consistently outperforms LoRA scaling in CLIP alignment.
    }
    \label{fig:hps_painting_clip}
\end{figure}

\begin{figure}[ht!]
    \centering
    \includegraphics[width=0.9\linewidth]{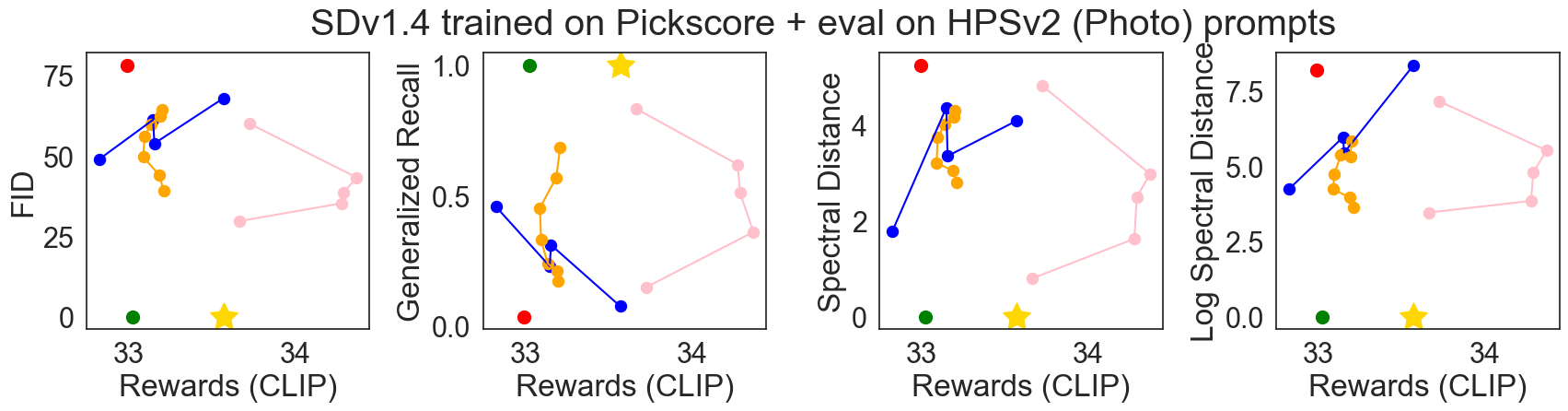}
    \includegraphics[width=0.9\linewidth]{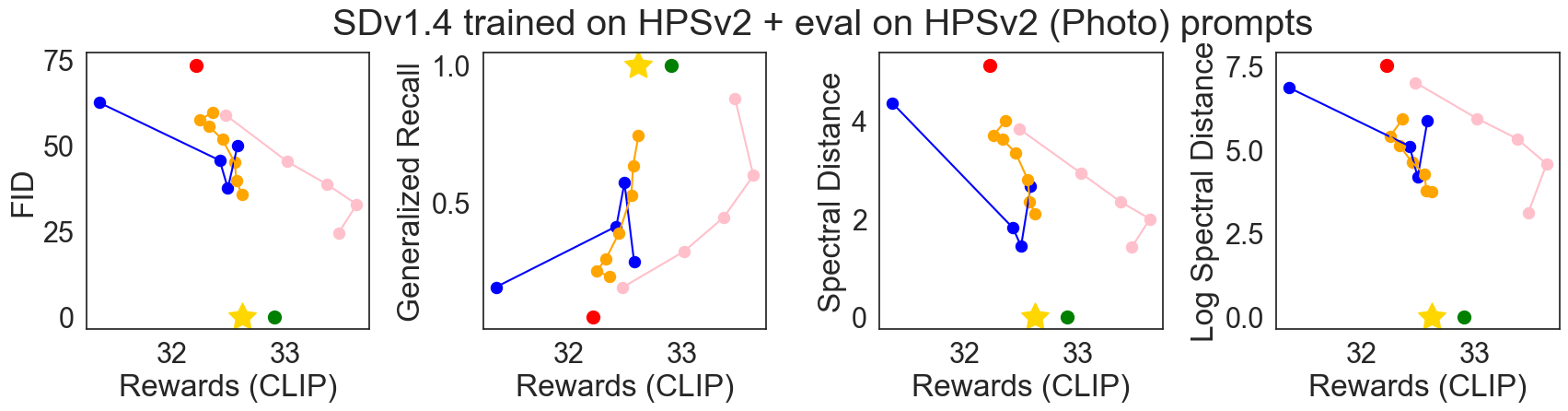}
    \includegraphics[width=0.9\linewidth]{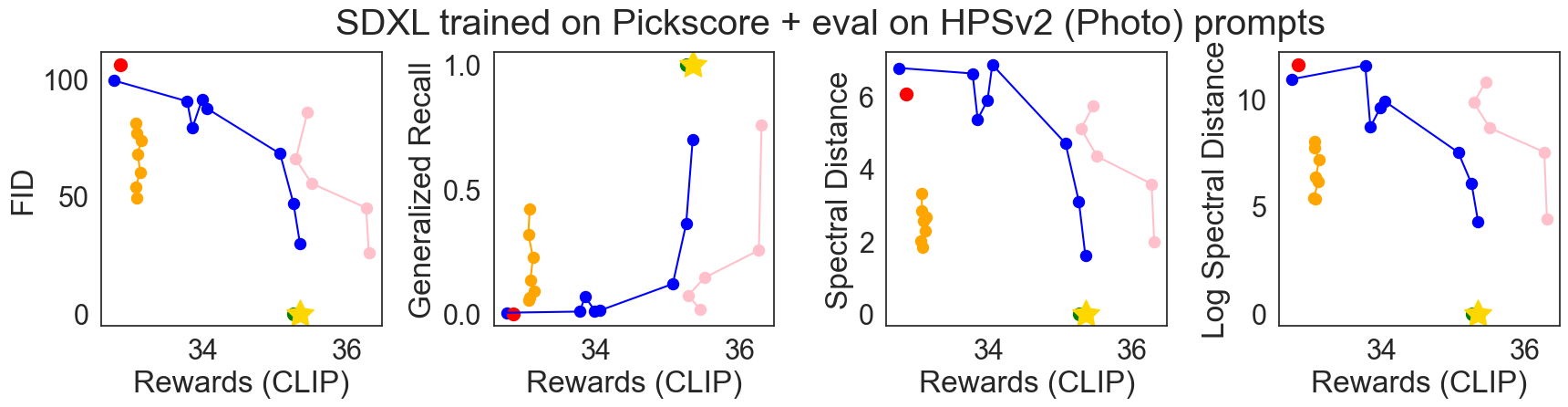}
    \includegraphics[width=0.9\linewidth]{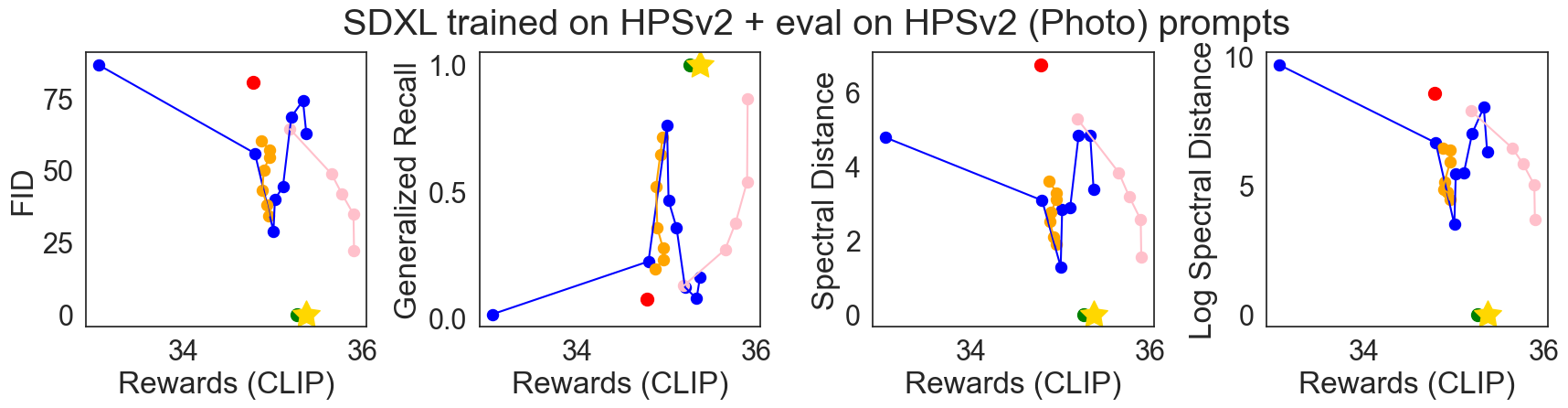}
    \caption{\textbf{CLIP-Diversity tradeoff for configurations on HPSv2 photo prompts}: \textcolor{green}{\textbf{Green}} represents the base model, \textcolor{red}{\textbf{Red}} represents {\draft} with no regularization, \textcolor{yellow}{\textbf{Gold}} star represents the ideal score. \textcolor{blue}{\textbf{Blue}} represents different models with different KL regularization coefficients $\lambda$, \textcolor{pink}{\textbf{Pink}} represents different amounts of LoRA scaling, and \textcolor{orange}{\textbf{Orange}} represents different $\gamma(t)$ for \methodabbv.
    {\methodabbv} underperforms LoRA scaling in CLIP alignment.
    }
    \label{fig:hps_photo_clip}
\end{figure}

\end{document}